\let\footnote=\endnote
 \def\bibsep{\smallskipamount}%
 \def\newblock{\ }%
\begin{document}


\RUNAUTHOR{}

\RUNTITLE{Data-Driven Stochastic Modeling Using Autoregressive Sequence Models}

\TITLE{Data-Driven Stochastic Modeling Using Autoregressive Sequence Models: Translating Event Tables to \\ Queueing Dynamics}

\ARTICLEAUTHORS{%
\AUTHOR{Daksh Mittal}
\AFF{Columbia University, Graduate School of Business, \EMAIL{dm3766@gsb.columbia.edu}}

\AUTHOR{Shunri Zheng}
\AFF{Columbia University, Graduate School of Business,  \EMAIL{sz3091@columbia.edu}}

\AUTHOR{Jing Dong}
\AFF{Columbia University, Graduate School of Business,  \EMAIL{jing.dong@gsb.columbia.edu}}

\AUTHOR{Hongseok Namkoong}
\AFF{Columbia University, Graduate School of Business,  \EMAIL{namkoong@gsb.columbia.edu}}
} 

\ABSTRACT{%
While queueing network models are powerful tools for analyzing service systems, they traditionally require substantial human effort and domain expertise to construct. To make this modeling approach more scalable and accessible, we propose a data-driven framework for queueing network modeling and simulation based on autoregressive sequence models trained on event-stream data. Instead of explicitly specifying arrival processes, service mechanisms, or routing logic, our approach learns the conditional distributions of event types and event times, recasting the modeling task as a problem of sequence distribution learning. We show that Transformer-style architectures can effectively parameterize these distributions, enabling automated construction of high-fidelity simulators. As a proof of concept, we validate our framework on event tables generated from diverse queueing networks, showcasing its utility in simulation, uncertainty quantification, and counterfactual evaluation. Leveraging advances in artificial intelligence and the growing availability of data, our framework takes a step toward more automated, data-driven modeling pipelines to support broader adoption of queueing network models across service domains.



}%




\KEYWORDS{Stochastic modeling, Autoregressive sequence models, Simulation, Uncertainty Quantification} 

\maketitle

\section{Introduction}
\label{sec:introduction}


Queueing network models are powerful mathematical frameworks for understanding and managing congestion in a wide range of service systems, such as call centers~\cite{GansKoMa03, Koole10}, healthcare delivery systems~\cite{GreenSaSe06,liu2014panel}, and ride-sharing platforms~\cite{banerjee2015pricing}. By capturing the stochastic nature of demand and service processes in these resource-constrained environments, queueing models enable us to analyze system performance under uncertainty, evaluate trade-offs between service quality and resource utilization, and test the impact of different operational policies. When closed-form analysis is infeasible, stochastic simulation provides a flexible and scalable tool to estimate and optimize key performance metrics. Such simulation models also serve as the computational backbone of `digital twins', especially in congestion-prone settings like hospitals, manufacturing facilities, or logistics hubs. These virtual replicas of real-world service systems support continuous monitoring, forecasting, and what-if analysis in dynamic and data-rich environments \citep{tao2018digital}.

Despite their capabilities, the broader adoption of queueing network models 
remains limited due to challenges related to accessibility and scalability. In particular, developing an appropriate queueing model for a complex service system is a highly non-trivial task that requires substantial training and expertise in stochastic modeling. For instance, modeling patient flow in hospitals has been the focus of many studies~\citep{armony2015patient, shi2016models, dong2020queueing}, each grappling with the intricacies of capturing system-specific dynamics and operational constraints. In this work, we investigate how increasingly widespread machine learning (ML) engineering infrastructures can be leveraged to lower the technical barriers to building queueing network models across different service systems.

Modern information systems have converted what were once anecdotal and fragmented logs into rich, structured event tables that systematically capture sequences of discrete events (transitions) along with their timestamps ~\cite{GansKoMa03, BertsimasMi07}. For example, with the widespread adoption of electronic health records, detailed event tables can now be readily extracted to represent patient encounters across various stages of care (see, e.g., Figure \ref{fig:introducing_event_table}). 
These high-resolution data streams invite a fundamental rethinking of how we build queueing-network simulators. Instead of hand-crafting a structural model and coding a discrete-event simulator, we can now learn a system’s dynamics directly and entirely from the data.

\begin{figure}[t]
    \centering
    \includegraphics[width=\textwidth]{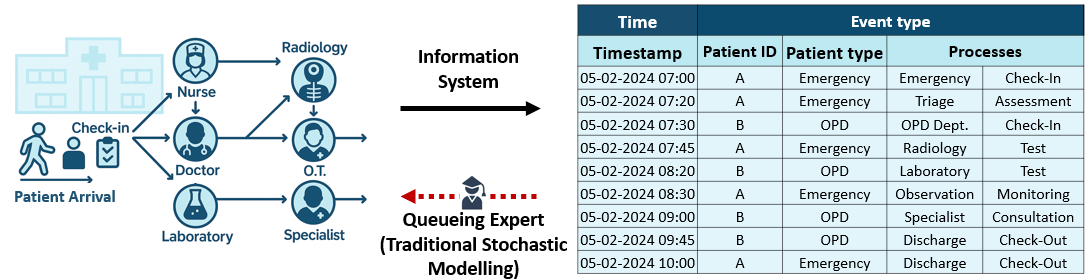}
    \caption{Example of a hospital information system that records operational data as event tables.}
    \label{fig:introducing_event_table}
\end{figure}

We propose a new approach termed {\em data-driven stochastic modeling with autoregressive sequence models}: a generative framework that maps an event history to a distribution over the next event type and its occurrence time.
In particular, inspired by the success of autoregressive sequence models such as Transformers in modeling sequential data like natural language~\cite{VaswaniEtAl17, BrownEtAl20}, we investigate how these architectures can be adapted to the event-stream setting that dominates operations management. Once trained, the model functions as a black-box simulator, generating realistic trajectories without requiring explicit knowledge of queue lengths, service disciplines, or routing rules.

\begin{figure}[t]
\centering
\includegraphics[width=\textwidth]{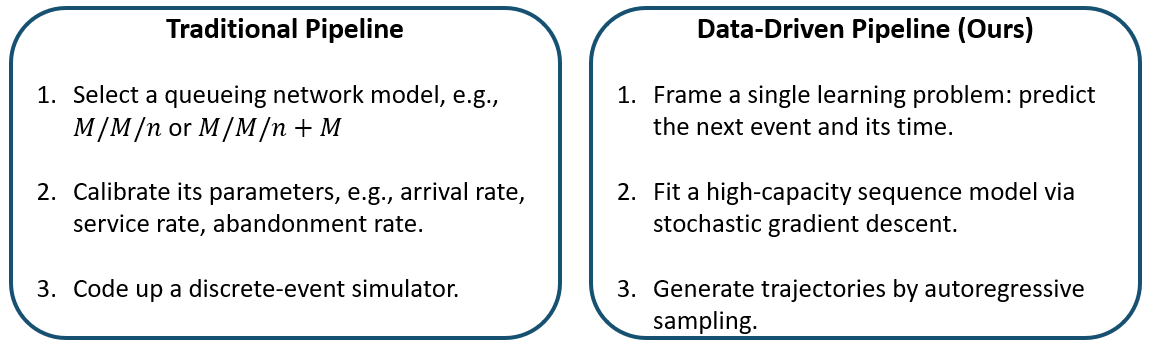}
\caption{Comparison between traditional and data-driven modeling pipelines.}
\label{fig:pipeline_comparison}
\end{figure}

Figure \ref{fig:pipeline_comparison} compares our proposed framework to the classic simulation methods. Note that the classical methods parameterize the system by structure, while we parameterize it by its predictive distribution over the next event. 
This shift yields three practical benefits:
\begin{enumerate}
\item {\bf Expressive modeling.} The learned simulator naturally captures non-Markovian and cross-resource dependencies that are hard to encode analytically.

\item {\bf Built-in uncertainty quantification.} Autoregressive sampling produces full predictive distributions, enabling rigorous assessment of uncertainty.

\item {\bf Versatility.} A single trained model can support what-if analysis, policy evaluation, and policy optimization without retraining.
\end{enumerate}

Prior data-driven work (e.g., \cite{BaronKrSeSh24,SherzerBaKrRe24,OjedaCvGeBaScSa21,KyritsisDe19}) typically learns a direct mapping from design variables to performance metrics. Consider, for example, the task of optimizing staffing levels. A typical model-free approach might attempt to learn a direct functional mapping from staffing levels to performance outcomes. Such surrogates are useful when the objective is fixed, but they discard the important structural information and struggle to incorporate uncertainty. By instead learning a full generative model, effectively a discrete-event simulator, we preserve the full structure of the underlying dynamics and can flexibly support a wide range of downstream tasks.

Naturally, this flexibility comes at a cost: our method trades modeling effort for data requirements. Training a high-capacity sequence model is viable only when rich, high-resolution event streams are available, a condition that modern information systems increasingly satisfy. In settings where real-world data is limited, one promising alternative is to generate synthetic event data using high-fidelity simulators. This opens the door to pretraining foundation models of service systems on large-scale synthetic datasets, which can then be fine-tuned to specific environments with smaller amounts of domain-specific data.

Finally, we note that training sequence models like Transformers, once thought to require deep expertise and substantial computational resources, can now be achieved with modest infrastructure and minimal setup. Modern software libraries like PyTorch, JAX, and Hugging Face provide streamlined APIs, allowing users with basic coding experience to implement training pipelines. Combined with affordable cloud-based hardware, these tools have made state-of-the-art sequence modelling very accessible and economical. In our experiments, for example, we trained a 1.5 million-parameter Transformer on a single GPU in 10–30 hours. Leveraging cloud-based GPU resources, this can be done at a cost of just \$10–\$30. This accessibility enables researchers and practitioners to build and iterate on data-driven autoregressive simulators within the practical constraints of typical academic and industry projects. 




In this work, we investigate how autoregressive sequence models can be adapted to the event-stream data to automate queueing network modeling and simulation. We also demonstrate how the trained model can be applied to various downstream tasks. The contribution of our work can be summarized as follows:

\noindent{\bf Conceptual framework.} We formalize data-driven stochastic modeling with autoregressive sequence models. We recast a queueing network not by its structural primitives, e.g., arrival process, service mechanism, or routing matrix, but by the conditional distributions of the next event type and next event time. This reframing moves model‐building from expert-specified logic to distribution learning. We show how Transformer-style architectures can parameterize the above distributions.

\noindent{\bf Theoretical guarantees.} We provide a theoretical analysis of the performance of a trained sequence model relative to the true data-generating process. Our results show that the model’s utility, whether for performance evaluation, uncertainty quantification, or other downstream tasks, is fundamentally governed by the training loss it aims to minimize. Put differently, the effectiveness of the sequence model ultimately depends on our ability to optimize it during training. Encouragingly, modern machine learning practices are well-suited to minimizing such losses at scale. 

\noindent{\bf Empirical validation.} We conduct a rigorous empirical evaluation of our approach across a range of downstream tasks. We begin by validating it on canonical Markovian parallel server queues, and then demonstrate its scalability to more complex settings involving general service-time distributions, non-stationary arrivals, customer abandonment, and tandem network structures. We further evaluate the method’s ability to quantify uncertainty in performance metrics when the system’s transition dynamics are uncertain. We also showcase its capacity for counterfactual analysis, a key prerequisite for policy optimization in service operations. Finally, we assess the computational and data requirements of our framework and present preliminary experiments that motivate a foundation-model perspective on queueing systems.

The remainder of the paper is organized as follows. Section \ref{sec:related_work} reviews related literature to contextualize and position our contribution within the existing body of work. Section \ref{sec:probelm_formulation} formalizes the problem. In Section \ref{sec:methodology}, we present our predictive framework and illustrate how it works through an M/M/1 queue.  Section~\ref{sec:generalized_formulation} extends the formulation to incorporate uncertainty quantification and control policy considerations, and showcase the sequence model's ability to assess uncertainty. Section~\ref{sec:Approach Validation} provides a comprehensive empirical evaluation across diverse queueing systems, including multi-class Markovian queues, queues with general interarrival and service time distributions, non-stationary settings, and a call center case study featuring tandem structure and customer abandonment. Section \ref{sec:simulating_counterfactuals} demonstrates the sequence model's capability for counterfactual simulation. Section \ref{sec:computational_requirements} discusses the computational and data requirements for implementing our method, and presents preliminary results motivating a foundation-model approach to queueing systems. Finally, Section \ref{sec:conclusion} concludes the paper.

Throughout the paper, we adopt the following notational conventions. Random variables are written in uppercase letters, e.g., $X, S, T, E$, while the corresponding lowercase letters, e.g., $x, s, t, e$, denote their realizations. Calligraphic symbols are reserved for sets; for example, $\mathcal S$ denotes the state space and $\mathcal E$ denotes the event space. The cardinality of a finite set $\mathcal A$ is written as $|\mathcal A|$. Given a finite or countably infinite sequence $(x_i)_{i\ge 1}$ and integers $s\le t$, we define $x_{s:t}:=(x_s, x_{s+1}, \dots, x_{t})$ as the subsequence from index $s$ to $t$, inclusive. In continuous time we write $x_{s:t}:=\{x(u): s \le u \le t\}$. The true data-generating distribution over complete trajectories is denoted by $\mathbb P$, whereas learned or surrogate distributions carry a caret, e.g., $\what{P}_\phi$ for parameters $\phi$. Given an observed history $x_{0:t}$, the conditional distribution of future quantities is written as $\mathbb P(\,\cdot \mid x_{0:t})$.  Expectations, denoted by $\mathbb{E}[\cdot]$, are taken with respect to $\mathbb{P}$ on the measurable space $(\Omega, \mathcal{F})$ unless specified otherwise. The indicator of an event $A$ is written $1(A)$.

\section{Related Work}
\label{sec:related_work}

Our work relates to a rich literature on developing queueing network models tailored to the analysis of diverse service systems (see, e.g., \cite{shi2016models,armony2015patient,brown2005statistical,koole2002queueing,banerjee2015pricing,cont2010stochastic}). These studies emphasize the importance of using data to guide model development in order to capture the specific operational characteristics of systems ranging from call centers and healthcare settings to ride-sharing platforms and financial services. However, they largely adhere to the traditional modeling pipeline, in which a queueing expert must select and calibrate a network model based on domain knowledge. In contrast, our work introduces a new paradigm for constructing high-fidelity models that greatly reduce the reliance on queueing expertise.

With recent advances in machine learning, there has been growing interest in applying these techniques to model queueing systems and predict queue-related performance metrics. For example, \cite{BaronKrSeSh24, SherzerBaKrRe24} use machine learning to estimate the distribution of the number of customers in the system for queues with general arrival processes and service time distributions. \cite{OjedaCvGeBaScSa21} proposes a deep generative model for service times, while \cite{KyritsisDe19, ang2016accurate} develop machine learning algorithms to predict waiting times. \cite{SharafatBa21} employs deep learning to forecast patient flow rates in emergency departments. A common limitation of these methods is their focus on specific performance targets, such as waiting time or service duration. As a result, each new objective often requires training a separate model. In contrast, a simulation model captures the full system dynamics, allowing flexible adaptation to a wide range of downstream objectives without retraining. Several papers learn stochastic arrival processes with neural tools.  \cite{WangJaHo20} estimate latent intensities for Cox (doubly stochastic) Poisson processes using deep latent models. \cite{ZhengZhZh25}  couple a Wasserstein GAN for nonstationary, multidimensional rate processes with a classical Monte Carlo Poisson simulator, providing consistency and convergence‑rate guarantees under a Wasserstein training objective. These works focus on arrivals; in contrast, we learn the full event stream (arrivals, service starts/completions, routing, abandonment) with a single autoregressive model, enabling uncertainty quantification and counterfactuals without hand‑crafting simulator components. Recent work has also applied deep learning to queueing control problems \cite{AtaHa2025, DaiGl2022}. Our work complements these approaches by learning the underlying model upon which control strategies can be based.

Our work is closely related to \cite{senderovich2014queue, SenderovichWeGaMa15, baron2025queueing}, as all aim to develop data-driven, automated approaches to queueing network modeling with minimal reliance on queueing expertise. However, our methodology differs fundamentally from these prior efforts. The works \cite{senderovich2014queue, SenderovichWeGaMa15} build on process mining techniques, which combine data mining and process analysis to extract process models directly from event logs without prior knowledge of the underlying workflow \cite{Van12, van16, martin2016use}. These approaches typically rely on predefined rules to infer network structure and lack built-in capabilities for uncertainty quantification, counterfactual simulation (unless paired with a separate simulator), or meta-learning across different systems. Meanwhile, \cite{baron2025queueing} introduces structural causal models for queues and applies G-computation to estimate causal effects \cite{van2018targeted}. While powerful for causal analysis, this approach does not support meta-learning across diverse environments and can struggle to generalize to evaluate new interventions without substantial additional modeling effort. Our work is also related to \cite{du2016recurrent}, which models event sequences using a recurrent marked temporal point process. In particular, it uses a recurrent neural network to model the intensity function of the point process as a nonlinear function of the history. In contrast, our framework directly models the conditional distributions, enabling straightforward autoregressive sampling to support various downstream tasks.
 Similar in spirit, \cite{ZhuLiZh23} introduce a neural‑assisted sequential simulator trained by matching simulated and empirical joint distributions in Wasserstein distance. In contrast to this “neural‑in‑the‑loop” design, our approach fits a single autoregressive model to event–time sequences, conditioning on the full history; it parameterizes both the event-type and timing distributions and supports downstream performance analysis, uncertainty quantification, and counterfactual simulation.

Autoregressive sequence models have long been employed to predict structured temporal processes. Classical statistical methods such as ARIMA and GARCH \cite{hamilton2020time} impose parametric assumptions on the conditional dynamics of stationary, regularly sampled time series. More recent machine learning approaches, including Random Forests \cite{Breiman2001} and Gradient Boosting Machines \cite{Friedman01}, offer flexible, nonparametric alternatives for time series regression tasks, though they typically rely on fixed-length input windows and do not explicitly model sequential dependencies beyond the chosen horizon. Modern deep learning architectures overcome these limitations by operating directly on variable-length sequences and capturing complex temporal dependencies. Recurrent Neural Networks (RNNs) \cite{RumelhartHiWi86}, Long Short-Term Memory (LSTM) networks \cite{HochreiterSc97}, and Temporal Convolutional Networks (TCNs) \cite{BaiKoMe18} incorporate mechanisms such as recurrence or dilated convolutions that allow them to model nonlinearity and long-range dependencies effectively. These architectures form the foundation for modern autoregressive models that predict the next value in a sequence conditioned on its history. Among these, Transformers \cite{VaswaniEtAl17} represent a particularly expressive class of sequence models based on self-attention mechanisms. Their ability to capture global dependencies without recurrence has led to breakthroughs in diverse domains, including natural language processing \cite{BrownEtAl20}, computer vision \cite{dosovitskiy2020image}, and multimodal learning \cite{AlayracRuBo22}. In time series forecasting tasks, Transformers provide flexible representations that capture both short- and long-term patterns with high computational parallelism \cite{LimAr20, GarzaChCa24}. Beyond standard prediction tasks, they have been adapted to model exchangeable sequences \cite{MullerHoArGrHu22, HanYoArPf24, HegselmannBuLaAgJiSo23, GardnerPeSc24, YanZhXuZhChSuWuCh24}, structured tabular data \cite{zhaoBiCh23, Hollmannetal25}, and sequential decision processes \cite{ChenLuAr21, JannerLiLe21}. Our approach builds on the autoregressive principle common to all these models: predicting the next output conditioned on the observed sequence of past events. In this sense, any sufficiently expressive autoregressive sequence model could serve as the backbone of our framework. However, our aim is different. We seek to learn a generative model over sequences comprising both discrete event types and their associated continuous timestamps. This would enable flexible queueing network modeling and simulation.

A key capability enabled by our approach is uncertainty quantification, which is essential for reliable decision-making in service system modeling \citep{cranmer2020frontier}. In particular, quantifying model and parameter uncertainty, often referred to as input uncertainty, has been extensively studied in the simulation and stochastic optimization literature (see, e.g., \cite{song2015quickly, lam2022subsampling, iyengar2023hedging}).
Traditional approaches often require explicit assumptions about the input distributions and system structure.
In contrast, our method approaches uncertainty quantification from a generative, data-driven perspective. As we will demonstrate later, sequence models are naturally suited to this task: they offer a flexible way to learn and represent uncertainty directly from observed event-stream data.
The use of sequence modeling for uncertainty quantification has deep theoretical roots, tracing back to De Finetti’s seminal work on exchangeable sequences, which shows that such sequences can be represented as mixtures of i.i.d. processes \cite{DeFinetti33, DeFinetti37, DeFinetti17, Aldous85}. This foundational idea gives rise to a probabilistic interpretation of uncertainty as arising from the latent or unobserved structure, and establishes the equivalence between uncertainty stemming from unobserved latent structures and uncertainty arising from unobserved future data~\cite{BertiPrRi04, BertiDrLePrRi22, FongHoWa23}. Extensions to partial exchangeability, which include mixtures of Markov chains or Markov processes, further enable uncertainty modeling for more general stochastic dynamics \citep{ DiaconisFr80, Freedman96, Aldous85, Banxico14}. For comprehensive overviews, see \citep{Aldous85, Lauritzen16, fortinietal24}.

\section{Problem Formulation}
\label{sec:probelm_formulation}

Operational data are typically available in the form of event tables, which record a sequence of events along with their corresponding timestamps. Let the observed sequence of events and inter-event times be denoted by
\[
\{(E_i,T_{i}) , i =1, ... \},
\]
where \(E_i \in \mathcal{E}\) denotes the type of the $i$th event, and \(T_i\) is the time elapsed between event $i-1$ and event $i$. Note that these inter-event times can be readily computed from the event timestamps. 

Although we only observe event table data, we assume the existence of an {\em underlying stochastic process} that generates these observations. Let this process be denoted by $\{X(t):t\in [0,\infty)\}:= X_{0:\infty}$, defined on a probability space $(\Omega, \mathcal{F}, \mathbb{P})$. For each $t \in [0, \infty)$, the random variable $X(t): \Omega \to \mathcal{S}$ represents the state of the system at time $t$, taking values in a measurable space $(\mathcal{S}, \Sigma)$. The evolution of the process is governed by a probability measure $\mathbb{P}$ on the path space $(\mathcal{S}^{[0,\infty)}, \mathcal{G})$, where $\mathcal{G}$ denotes the appropriate $\sigma$-algebra. We denote this as $X_{0:\infty} \sim \mathbb{P}$. For any $t \ge 0$, and any observed sample path $x_{0:t}:=\{x(s): 0\leq s\leq t\}$, we let $\P(\cdot|x_{0:t})$ denote the conditional probability distribution of future trajectories, i.e., $X_{t:\infty} \sim \P(\cdot|x_{0:t})$.

In many service systems, the underlying stochastic process takes the form of a {\em jump process}, where the system transitions between discrete states at random time points. Specifically, the system starts in an initial state $S_1$, remains there for a random duration $T_1$, then jumps/transitions to a new state $S_2$, remains there for a duration $T_2$, and so on. This process can equivalently be represented as a sequence of state-duration pairs $\{(S_i,T_i), i =1,2, \cdots \}$. The evolution of this process can be characterized by the following two conditional probability distributions:
\begin{equation}
    \P(T_{n}=t |S_1, T_1, S_2, \cdots,T_{n-1},S_{n}) ~ \mbox{ and } ~   \P(S_{n+1} =s|S_1, T_1, S_2, \cdots ,S_{n},T_n).
\end{equation}
The first expression gives the distribution of the inter-event time
$T_{n}$ (i.e., the time the system spends in state $S_{n}$) conditioned on the full history $\{S_1, T_1, S_2, \cdots,T_{n-1},S_{n}\}$. The second gives the distribution of the next state 
$S_{n+1}$, again conditioned on the complete observed history including time $T_n$. Importantly, we do not assume any structural properties such as the Markov property. The process is allowed to exhibit full path dependence, making this framework broad and flexible enough to capture a wide range of real-world service system dynamics.

Moreover, there is a direct correspondence between the jump process representation and the event table. Event tables can be viewed as a simplified encoding of the jump process, where states can be deduced from discrete events.  Specifically, the state of the system $S_{n+1}$ after the $n$th jump can be deterministically inferred from the initial state $S_1$ and the sequence of past events $(E_1,...,E_n)$. 
This connection allows the transition dynamics of the underlying process to be re-expressed directly in terms of the observed event stream:
\begin{equation}
    \P(T_{n} | S_1,T_1,E_1, \cdots,T_{n-1},E_{n-1}) ~ \mbox{ and } ~ \P(E_{n} |S_1,T_1,E_1, \cdots,E_{n-1}, T_{n}).
\end{equation}
Thus, the three representations of the stochastic process governed by $\P$, namely, the continuous-time trajectory $X_{0:\infty}$, the jump process $\{S_1,T_1, S_2,T_2, S_3\cdots\}$, and the event table $\{S_1, T_1, E_1, T_2, E_2, \cdots\}$, are mathematically equivalent. Among these, the event table representation offers two key advantages. First, the event space $\mathcal E$ is typically much smaller than the state space $\mathcal S$,  making the representation more compact and computationally tractable. Second, event tables are easier to record and interpret in practice, which makes them especially well-suited for real-world service systems. The event table representation is also known as discrete-event systems \citep{cassandras2008introduction, glynn2002gsmp, glasserman1991gradient}.

Given this equivalence, we adopt the event table as the primary representation throughout the remainder of this paper. The other two representations will be used interchangeably when appropriate, for clarity or analytical convenience.
With a little abuse of notation, we define the autoregressive conditional distributions as:
\[
    \P_n^{time}(T_n) = \P(T_{n} |S_1, T_{1:n-1}, E_{1:n-1})
~\mbox{ and }~
    \P_n^{event}(E_n) = \P(E_{n}|S_1, T_{1:n-1}, E_{1:n-1}, T_n).
\]
That is $\P_n^{time}$ and $\P_n^{event}$ are the conditional probability distributions of the $n$th inter-event time and event type, respectively, given the realized initial states, inter-event times, and event types before step $n$. While the underlying conditional distributions are random objects, we treat these as deterministic functions evaluated at specific input histories.

Then, the joint distribution over the initial state, inter-event times, and event types up to the $n$th event can be written as a product of the autoregressive conditional distributions:
\begin{align}
\P(S_1,  T_{1:n}, E_{1:n}) &=  \P_0(s_1) \prod_{i=1}^{n} \P_{i}^{time}(T_{i})  \P_{i}^{event}(E_{i}).
\end{align}

In Figure \ref{fig:state-event-inter-event-times}, we illustrate a service system whose dynamics follow those of an M/M/1 queue. 
In such a system, given the initial state, the system state at any time $t$, denoted by $X(t)$ (i.e., the number of customers in the system) can be fully determined by the sequence of arrival and departure events, i.e.,
\[
X(t)=S_{N(t)}=S_1 + \sum_{i=1}^{N(t)} 1\{E_i = \text{arrival}\} - 1\{E_i = \text{departure}\},
\]
where $N(t)=\max\{n\geq 1: \sum_{i=1}^{n} T_i\leq t\}$.
Notably, while the state space of the system is $\mathcal{S} = \mathbb{Z}^+$, the event space is significantly smaller: $\mathcal{E} = \{\text{arrival, departure}\} $. 

\begin{figure}[h!]
    \centering
    \includegraphics[width=\textwidth]{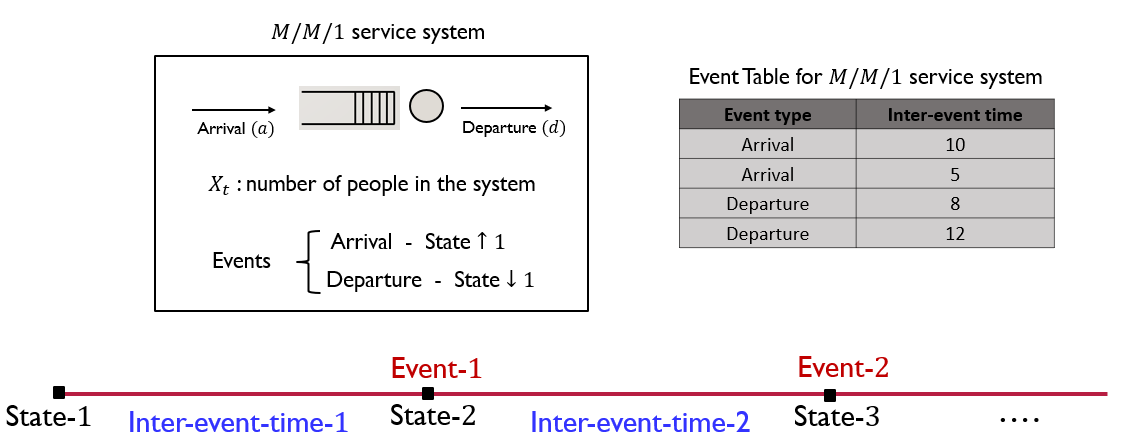}
    \caption{Example of M/M/1 service system with the corresponding event table. The state of the M/M/1 service system can be determined from corresponding events. An M/M/1 service system has only two events: arrival and departure.}
    \label{fig:state-event-inter-event-times}
\end{figure}

In Section \ref{sec:generalized_formulation}, we extend the framework to account for both parameter uncertainty in the system dynamics and the influence of a control policy \(\pi\). Specifically, we assume the parameters \(\theta\), which govern the behavior of the queueing process, are drawn from a prior distribution \(\mu\). In addition, the system evolution is influenced by a policy \(\pi\), which represents the actions taken by the service system operator. The setting discussed thus far can be viewed as a special case of this more general formulation, in which the parameter is fixed at \(\theta = \theta_0\) (i.e., \(\mu = \delta_{\theta_0}\)), and the policy is fixed at \(\pi = \pi_0\).


The primary goal of a stochastic modeler is to simulate plausible trajectories of the service system that are consistent with the underlying probability law. Specifically, we aim to generate samples of the form $\left(S_{1}, E_{1:\infty}, T_{1:\infty}\right) \sim \P(\cdot)$. Once such simulations are possible, potentially under different counterfactual scenarios, a data scientist can use them for prediction or other downstream analyses. These include estimating key performance metrics (e.g., wait times, throughput) or assessing the impact of alternative policies. 


In the conventional modeling approaches, the modeler first specifies a stochastic model $\tilde{\P}$ based on a combination of domain knowledge and available data. Then, a discrete-event simulator is constructed to generate trajectories of the service system under the identified model. While widely used, this approach relies heavily on expert knowledge, particularly in queueing theory, to accurately specify the underlying stochastic model. As noted earlier, this reliance can limit its applicability for practitioners without specialized training in queueing. Our approach addresses this limitation by adopting a predictive framework for stochastic modeling. In particular, instead of manually defining a model based on domain knowledge, we learn to predict future system behavior directly from data by learning the conditional distributions $\P_n^{time}$'s and $\P_n^{event}$'s. 



\section{Methodology}

\label{sec:methodology}


We adopt an autoregressive predictive approach to stochastic modeling. 
We parameterize and learn the conditional predictive distribution of the next event, i.e., $\P_n^{time}(t)$ and $\P_n^{event}(e)$ for all $n\geq 1$. This contrasts with conventional stochastic-network modeling, where the focus is on calibrating structural parameters of the system, such as the number of servers or the distributions of interarrival and service times.

Let the conditional distributions be parameterized by $\phi$. The predictive distributions at step n are given by
\[
    \what{P}_{\phi,n}^{time}(T_n) = \what{P}_\phi({T}_{n} |{S}_1, {T}_{1:n-1}, {E}_{1:n-1})
\]
and
\[
    \what{P}_{\phi, n}^{event}(E_n) = \what{P}_\phi({E}_{n}|{S}_1, {T}_{1:n-1}, {E}_{1:n-1}, {T}_n), \quad n=2,3,\dots.
\]
The resulting joint predictive model over event sequences and their associated inter-event times is given by 
\[\what{P}_\phi ({S}_1 ,  {T}_{1:n}, {E}_{1:n}) =  \what{P}_{\phi,0}(S_1) \prod_{i=1}^{n} \what{P}_{\phi,i}^{time}(T_{i})  \what{P}_{\phi,i}^{event}(E_{i}) .\]
This autoregressive factorization eliminates the need for manual model specification and enables efficient simulation by sampling from the learned conditional distributions. 


The model $\what{P}_\phi$ is trained to approximate the true data-generating distribution $\P$
by minimizing the Kullback–Leibler (KL) divergence between $\what{P}_\phi$ and $\P$, leading to the following optimization problem:
\[\min_\phi \dkl{\P(\cdot)}{\what{P}_\phi(\cdot)}.\] 
%
%
This objective is equivalent, up to an additive constant independent of $\phi$, to minimizing the negative expected log-likelihood of the model under the true distribution:
\begin{align}
     \dkl{\P(\cdot)}{\what{P}_{\phi}(\cdot)} & \propto - \E_{(S_{1}, E_{1:\infty},T_{1:\infty}) \sim \P(\cdot)} \left[ \log \left(\what{P}_\phi \left({S}_1, {E}_{1:\infty}, {T}_{1:\infty}\right)\right)\right] \nonumber\\
     &\propto -\E_{S_1\sim \P_0} \left[\log\what{P}_{\phi,0}({S}_1) \right]  \nonumber \\ & \quad  +\underbrace{\sum_{i=1}^{\infty} -\E_{T_i\sim \P^{time}_{i}}\left[\log\what{P}_{\phi,i}(T_i) \right]}_{\text{Inter-event time losses}} + \underbrace{\sum_{i=1}^{\infty} -\E_{E_i\sim \P_{i}^{event}}\left[\log\what{P}_{\phi,i}(E_i) \right]}_{\text{Event type losses}},
     \label{eq:different_losses}
\end{align}

As noted in Section~\ref{sec:introduction}, recent advances in deep learning have made the implementation and training of autoregressive sequence models widely accessible. With modern hardware and software libraries, these models can now be trained efficiently at scale. We further examine the operational and computational considerations involved in deploying such models in Section~\ref{sec:operationalizing_predictive}.

\subsection{Training and simulation}
As discussed in Section \ref{sec:introduction}, modern service systems increasingly generate rich event-stream data, enabling the direct training of sequence models such as
$\what{P}_\phi$. Once trained, this model can be used to simulate plausible future trajectories of the system (see Figure~\ref{fig:two_stages}). We now describe the training and simulation procedures in more detail.

Suppose we have $K$ event tables generated from the original data-generating process $\P(\cdot)$, where each event table contains $N$ events. We denote the dataset as   \[\left\{\left(S_1^{(j)},T_1^{(j)}, E_1^{(j)}, \cdots, T_{N}^{(j)}, E_{N}^{(j)} \right): 1\leq j\leq K \right\},\] 
where each $j$ corresponds to a distinct event table. For simplicity, we assume the initial state $s_1^{(j)}$ is fixed, although our approach can be readily extended to accommodate varying initial states. 
Define the loss for a single event table as
\begin{align}
&\loss\left(\left(S_1^{(j)},T_1^{(j)}, E_1^{(j)}, \cdots, T_{N}^{(j)}, E_{N}^{(j)} \right),\phi\right)\nonumber\\
&= -\log \what{P}_{\phi,0}(S_1^{j}) - \sum_{i=1}^N \log \left[ \what{P}_{\phi,i}^{time}(T_i^{(j)}) \right] -  \sum_{i=1}^N  \log \left[ \what{P}_{\phi,i}^{event}(E_i^{(j)}) \right]
 \label{eq:loss_quote_everywhere}
\end{align}
The sequence model is trained by minimizing the loss
\begin{equation}\label{eq:mainloss}
\frac{1}{K} \sum_{j=1}^{K} \loss\left(\left(S_1^{(j)},T_1^{(j)}, E_1^{(j)}, \cdots, T_{N}^{(j)}, E_{N}^{(j)} \right),\phi\right).
\end{equation}
%
%
In practice, model parameters $\phi$ are typically optimized using (stochastic) gradient descent.

\begin{figure}
    \centering
    \includegraphics[width=\textwidth]{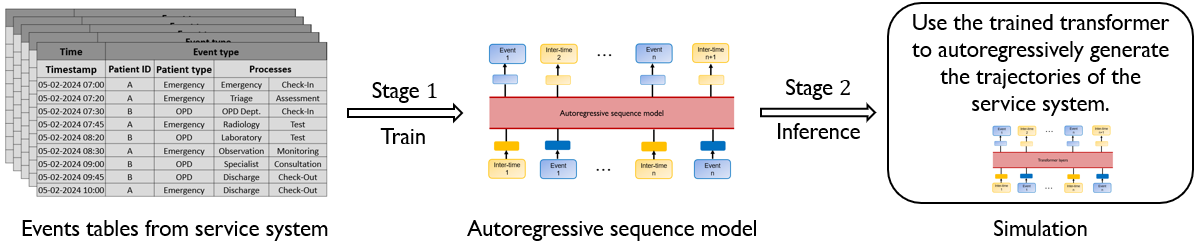}
    \caption{Two stages of implementing our approach. Stage 1: Train the transformer on the events table from the service system. Stage 2: Once trained use it to autoregressively simulate the trajectories of the service system}
    \label{fig:two_stages}
\end{figure}

Once the sequence model is trained, it can be used to generate system trajectories in an autoregressive manner. 
The detailed procedure is outlined in Algorithm \ref{alg:inference-simulation}.
Similarly, if we are provided with a partial trajectory $(S_{1}, T_{1:n}, E_{1:n})$, the model can generate the corresponding future trajectory $({T}_{n+1:N},{E}_{n+1:N})\sim \what{P}_\phi(\cdot|S_1, T_{1:n}, E_{1:n})$.


\begin{algorithm}[ht]
\caption{Simulating service system trajectory using trained sequence model}
\label{alg:inference-simulation}
\begin{flushleft}
\textbf{Input:} Trained sequence model $\what{P}_\phi$, length of the trajectory $N$, initial state $s_1$ 
\begin{enumerate}
    \item \textbf{Initialization:} $\what{\mathcal{H}} = \{s_1\}$
    \item \textbf{for} $i \leftarrow 1$ \textbf{to} $N$ \textbf{do}
    \begin{enumerate}
        \item[3.] Sample ${T}_i \sim \what{P}_{\phi,i}^{time}$
        \item[4.] Update $\what{\mathcal{H}} \leftarrow \what{\mathcal{H}} \cup \{{T}_i\}$
        \item[5.] Sample ${E}_i \sim \what{P}_{\phi,i}^{event}$   
        \item[6.] Update $\what{\mathcal{H}} \leftarrow \what{\mathcal{H}} \cup \{{E}_i\}$
    \end{enumerate}
    \item[7.] \textbf{end for}
    \item[8.] \textbf{Return:} Simulated trajectory  $\what{\mathcal{H}}$
\end{enumerate}
\end{flushleft}
\end{algorithm}




\subsection{Implementation using modern deep learning architectures}
\label{sec:operationalizing_predictive}


Recent advances in deep learning have led to substantial improvements in our ability to model sequential data. Large language models, for example, now achieve state-of-the-art performance on a wide variety of text-based tasks. A variety of autoregressive sequence modeling architectures have been proposed, most notably Transformers~\cite{VaswaniEtAl17}, and more recently, state-space models~\cite{GuD24}. The field continues to evolve rapidly as new designs and training techniques emerge.

Extensive empirical evidence shows that Transformers offer an attractive balance between modeling power and computational efficiency \citep{BrownEtAl20,kaplan2020scalinglawsneurallanguage}.  Their domain-agnostic self-attention mechanism has enabled breakthroughs not only in natural language processing, but also in computer vision \citep{dosovitskiy2020image,dehghani2023} and reinforcement learning \citep{parisotto2019,ChenLuAr21}. Moreover, the availability of mature software libraries, such as PyTorch, JAX, and Hugging Face, has made the implementation and training of Transformers highly accessible, requiring only basic programming proficiency (see Figure \ref{fig:ease-of-implementation}).

The economics of Transformer training and inference have likewise become more favorable. Although both stages typically require GPU acceleration, cloud platforms such as AWS and Vast.ai now offer cost-effective access to GPU instances, eliminating the need for large upfront investments in dedicated hardware. In our experiments, each training run cost approximately \$10-\$30, making the approach financially accessible for a wide range of applications. A detailed discussion of computational requirements is provided in Section~\ref{sec:computational_requirements}.
 
Motivated by its scalability and accessibility, we use the Transformer architecture as an illustrative example of a backbone for our predictive sequence model. While other architectures may also be suitable, we focus on Transformers to keep the exposition and analysis concrete. Crucially, our methodology is model-agnostic and can be adapted to any future deep-learning architecture that offers greater computational efficiency for sequence modeling. 

 \begin{figure}[h!]
  \centering
  \begin{minipage}[b]{0.49\textwidth}   \includegraphics[width=\textwidth, height=6.1cm]{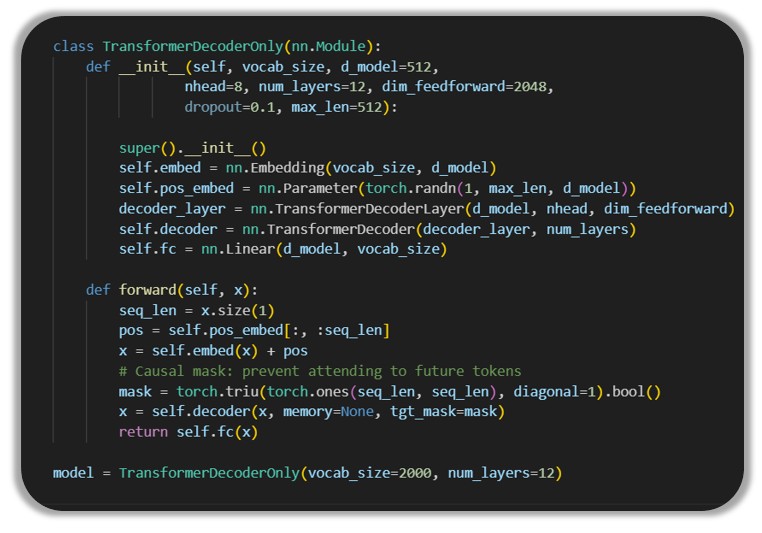}
  \end{minipage}
  \hfill
  \begin{minipage}[b]{0.49\textwidth}   \includegraphics[width=\textwidth, height=6cm]{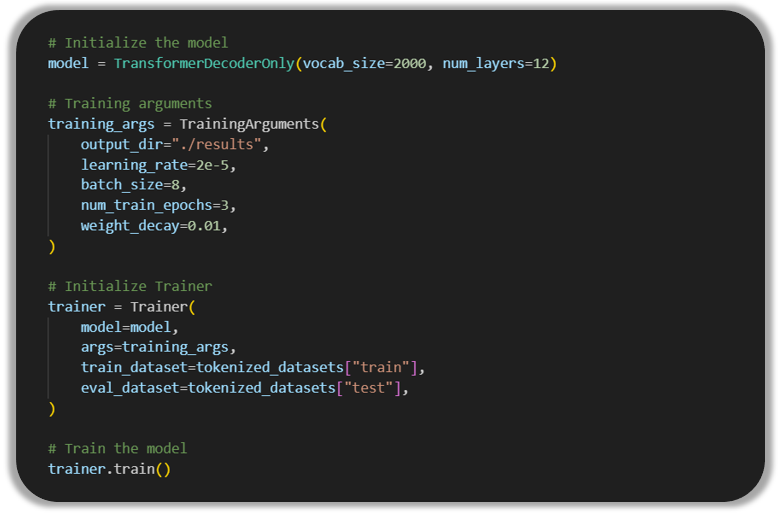}
  \end{minipage}
    \caption{ \textbf{Pseudo-code for Transformer Implementation and Training:}
Implementing and training a transformer takes just a few lines of code, requiring only basic programming proficiency and no specialised knowledge of queueing theory or stochastic modelling.   (left) Pseudo code for implementing a transformer model; (right) Pseudo code for training the transformer model.  }
    \label{fig:ease-of-implementation}
\end{figure}


We next give a brief overview of the Transformer architecture we use, with further implementation details provided in Appendix ~\ref{sec:experimental-details}. We utilize the decoder-only variant of the Transformer architecture \cite{VaswaniEtAl17}, commonly used in large language models. 
Each decoder block consists of masked self-attention and feed-forward layers, enabling the model to generate sequences one token at a time by conditioning on the preceding tokens. The causal masking mechanism ensures that future tokens remain hidden during training, thereby preserving the autoregressive nature of the sequence.
In our setting, the input tokens consist of event types and inter-event times (see Figure~\ref{fig:transformer-architecture}). If the service system involves multiple customer types, this information can be incorporated either by encoding it into the event type token or by introducing a separate token for customer type. To represent the inputs, we use learnable embeddings for event types \citep{mikolov2013distributed}, and Time2Vec embeddings for inter-event times to effectively capture temporal patterns \citep{kazemi2019time2vec}.

\begin{figure}[h]
    \centering
    \includegraphics[width=0.8\textwidth]{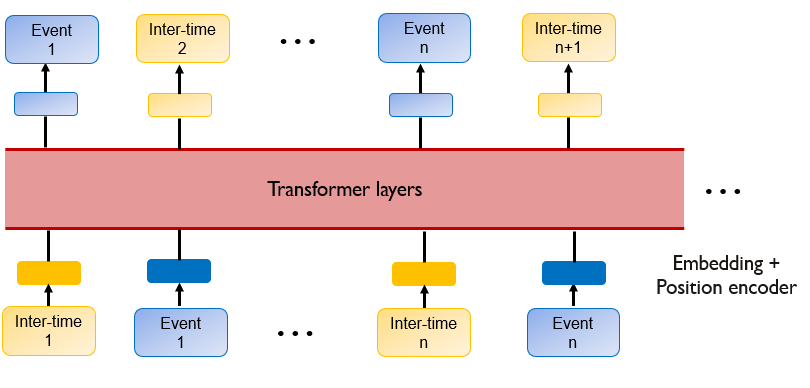}
    \caption{Transformer architecture for stochastic modeling}
    \label{fig:transformer-architecture}
\end{figure}
It is important to note that since the number of event types is finite, their distributions are discrete. This allows us to model them as a classification task, a domain where neural networks excel. However, inter-event times follow continuous distributions, which are inherently more challenging to model using neural networks. To address this challenge and ensure strong performance, we adopt two different implementations based on the information available about the inter-event time distributions:
\begin{enumerate}
    \item \textbf{Known parametric class of inter event-time distributions:} When the inter-event time distribution is known to belong to a specific parametric family, we can train the model to directly predict the parameters of that distribution. For example, if the system is known to be Markovian, inter-event times follow an exponential distribution, and the model can be trained to estimate the rate parameter as a function of the observed history of events and times. 
    This approach can simplify the modeling task by leveraging known structural assumptions. However, it also requires domain knowledge about the system, which may not always be available or reliable in practice. 
    \item \textbf{Unknown parametric class of inter event-time distributions:} When the inter-event time distributions are unknown or not easily parameterizable, we adopt a distributional form that is amenable to neural network training. 
    Inspired by the success of discretization techniques in \citep{bellemare2017,MullerHoArGrHu22}, we model inter-event times using a discretized continuous distribution, often referred to as a Riemann distribution. This approach approximates a continuous probability density over time by partitioning the support into a finite set of bins, allowing the problem to be treated as a classification task over discretized intervals.
\end{enumerate}

For example, in Markovian systems such as the M/M/1 queue, where inter-event times are exponentially distributed, the model is trained to recover the corresponding rate parameter (see Section~\ref{sec:part_1_approach_validation}). In contrast, for more general settings such as G/G/1 queues, we model inter-event times using discretized Riemann distributions to flexibly approximate arbitrary time distributions (see Section~\ref{sec:Approach Validation}).


\subsection{Example}
\label{sec:part_1_approach_validation}



In this section, we illustrate the proposed method using an M/M/1 queue as a canonical example. A more comprehensive empirical validation across a broader range of settings is provided in Section~\ref{sec:Approach Validation}.


For each problem instance (M/M/1 queue with fixed arrival rate and service rate), we train the transformer model using the loss function in \eqref{eq:mainloss} on a training dataset consisting of $K_t$ sample paths, each containing $N$ events:  
\[\mathcal{D}^{train}\equiv \left\{\left(S_1^{(j)},T_1^{(j)},E_1^{(j)} \cdots, T_{N}^{(j)}, E_{N}^{(j)} \right): 1\leq j\leq K_t \right\},\] 
which is generated using the corresponding M/M/1 discrete event simulator. Detailed information regarding data generation, transformer architecture, and training procedures is provided in Appendix \ref{sec:experimental-details}.
We also generate the test dataset
\[
\mathcal{D}^{test}\equiv \left\{\left(S_1^{(j)}, T_1^{(j)}, E_{1}^{(j)} \cdots, T_{N}^{(j)}, E_{N}^{(j)} \right): 1\leq j\leq  K_e \right\},
\]
using the corresponding discrete event simulator.

Table~\ref{tab:validation_losses} compares the losses achieved by the trained Transformer model on the test dataset with the corresponding `optimal' losses. Specifically, for the trained transformer model, we define:
\begin{itemize}
    \item \textit{Averaged event type loss:} 
    $-  \frac{1}{K_eN}\sum_{j=1}^{K_e}\sum_{i=1}^N  \log \what{P}_{\phi,i}^{event}(E_i^{(j)}).$
        \item \textit{Averaged inter-event time loss:} 
        $ \frac{1}{K_eN}\sum_{j=1}^{K_e}
  \sum_{i=1}^N \left( \frac{1}{\what{\lambda}_{\phi,i}^{(j)}} - T_i^{(j)} \right)^2,$
  where $\what{\lambda}_{\phi,i}^{(j)}$ is the rate parameter of the exponential distribution predicted by the transformer for $i$th inter-event time in the $j$th sequence, i.e., $T_{i}^{(j)}$, given the history \((S_1^{(j)}, T_1^{(j)}, \ldots, E_{i-1}^{(j)})\).
\end{itemize}
The `optimal' losses refer to those achieved by an oracle model with complete knowledge of the underlying stochastic process. In particular, they are defined as:
\[
\lim_{N\rightarrow\infty} -  \frac{1}{N}\sum_{i=1}^N  \log {\P}_{i}^{event}(E_i^{(j)}) 
~\mbox{ and }~
\lim_{N\rightarrow\infty} \frac{1}{N} \sum_{i=1}^N \left( \frac{1}{{\lambda}_{i}^{(j)}} - T_i^{(j)} \right)^2.
\]
Closed-form expressions for these optimal losses of the M/M/1 queue are provided in Lemma~\ref{lm:m_m_1_lower_bound} in Appendix~\ref{sec:optimal_losses}.

As shown in Table~\ref{tab:validation_losses}, the validation losses of the trained transformer closely match the optimal loss, suggesting that the transformer effectively learns the true system dynamics. Figure~\ref{fig:M-M-1-distributions} further compares the empirical distributions of inter-arrival times, service times, and waiting times generated by the transformer to their corresponding true distributions. We observe that the distributions of the transformer-simulated data closely match the true underlying distributions.

\begin{table}[h!]
\centering
\begin{tabular}{|c|c|c|c|c|c|c|}
\hline
$\lambda$ & $\nu$ & $\rho$ & \makecell{\textbf{Transformer} \\ \textbf{Event Loss}} & \makecell{\textbf{Optimal} \\ \textbf{Event Loss}} & \makecell{\textbf{Transformer} \\ \textbf{Time Loss}} & \makecell{\textbf{Optimal} \\ \textbf{Time Loss}} \\ \hline
0.2       & 1     & 0.2    & 0.2698                                             & 0.2693                                                 & 10.44                                                & 10.42                                                   \\ \hline
0.5       & 1     & 0.5    & 0.4802                                             & 0.4775                                                 & 1.335                                                & 1.333                                                   \\ \hline
0.8       & 1     & 0.8    & 0.6225                                             & 0.6178                                                 & 0.4378                                               & 0.434                                                  \\ \hline
1         & 1     & 1      & 0.6857                                             & 0.6823                                                 & 0.2632                                               & 0.2617                                                  \\ \hline
2         & 1     & 2      & 0.6363                                             & 0.6364                                                 & 0.1113                                               & 0.1111                                                  \\ \hline
\end{tabular}
\caption{Comparison of test-set losses from the transformer and optimal losses from oracle for different arrival rates ($\lambda$), service rates ($\nu$), and traffic intensities ($\rho$) in an M/M/1 queue.}
\label{tab:validation_losses}
\end{table}

 \begin{figure}[h!]
  \centering
  \begin{minipage}[b]{0.325\textwidth}   \includegraphics[width=\textwidth, height=3.5cm]{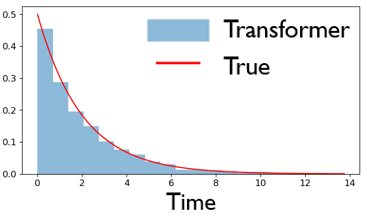}
  \centering{Inter-arrival time}
  \end{minipage}
  \hfill
  \begin{minipage}[b]{0.325\textwidth}   \includegraphics[width=\textwidth, height=3.5cm]{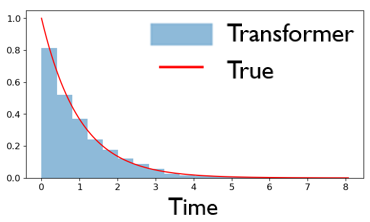}
  \centering{Service time}
  \end{minipage}
  \hfill
  \begin{minipage}[b]{0.325\textwidth}   \includegraphics[width=\textwidth, height=3.5cm]{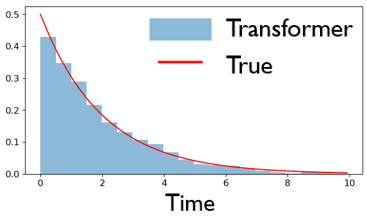}   
  \centering{Positive waiting time}
    \end{minipage}
    \caption{ \textbf{Markovian service system:} Comparison of performance measure distributions predicted by the sequence model (transformer) with the true distributions derived from the underlying M/M/1 queuing network  ($\lambda=0.5$, $\nu=1$). The performance measures include inter-arrival time, service time, and positive waiting time.}
    \label{fig:M-M-1-distributions}
\end{figure}


\section{Generalizing the formulation to include parameter uncertainty and control policy}
\label{sec:generalized_formulation}
We now present a generalized formulation that extends the setup introduced in Section~\ref{sec:probelm_formulation}. With a slight abuse of notation, let $\P$ denote the probability law governing the underlying stochastic process, such that 
\[
\P(X_{0:\infty}|\pi) = \int P_{\pi,\theta}(X_{0:\infty}) d\mu(\theta),
\]
where $\theta \sim \mu$ is an unobserved parameter representing latent characteristics of the environment (e.g., arrival rate and service rate in an $M/M/1$ queue), and $\pi$ is a policy controlled by the system operator. Given a policy $\pi$ and a latent parameter $\theta$, the transition dynamics are governed by $P_{\pi,\theta}$. This extended formulation induces well-defined conditional distributions over inter-event times and event types. In particular, given historical events up to time $n-1$, we define
\begin{align*}
    \P_n^{time}(T_n|\pi) & = \int P_{\pi,\theta}(T_{n} |S_1, T_{1:n-1},  E_{1:n-1}) ~~ d\mu(\theta|S_1, T_{1:n-1}, E_{1:n-1},\pi), \\
    \P_n^{event}(E_n|\pi) & = \int P_{\pi,\theta}(E_{n}|S_1, T_{1:n}, E_{1:n-1}) ~~ d\mu(\theta|S_1, T_{1:n}, E_{1:n-1}, \pi).
 \end{align*}
The full joint distribution of event sequences under a given policy $\pi$ is then
\begin{align*}
    \P(S_1 ,  T_{1:N}, E_{1:N}|\pi) &=  \P_0(S_1|\pi) \prod_{n=1}^{N} \P_{n}^{time}(T_{n}|\pi)  \P_{n}^{event}(E_{n}|\pi).
\end{align*}

This formulation provides a unified framework for addressing several downstream tasks that a decision maker may be interested in. Examples include predicting future system trajectories under a policy $\pi$, estimating the distribution of key performance metrics under a policy $\pi$ accounting for uncertainty in model parameters, i.e., uncertainty quantification, and counterfactual policy evaluation and optimization.
In particular, given a policy $\pi$ and the relevant history $x_{0:t}$ (under the policy $\pi$), future trajectories $X_{t+1:N}$ can be generated iteratively using the conditional distributions $\P_n^{time}(T_n|\pi)$ and $\P_n^{event}(E_n|\pi)$ for $n=t+1, \ldots N$.

Note that in the conventional stochastic modeling framework, the modeler must explicitly specify the underlying model parameter $\theta$, a prior distribution $\mu$ over $\theta$, and the transition probability $P_{\pi,\theta}$. In addition, given observed data $x_{0:t}$, posterior inference requires computing or sampling from
\[
\mu(\theta|x_{0:t}, \pi) = \frac{ P_{ \pi, \theta}(x_{0:t}) \mu(\theta)}{\int P_{\pi, \theta}(x_{0:t}) \mu(\theta) d\theta}.
\]
Identifying the relevant latent parameter $\theta$ typically requires strong modeling assumptions, e.g., arrival rate and service rate assuming a Poisson arrival process and an exponential service time distribution. Constructing the transition kernel
$P_{\pi,\theta}$ often involves substantial domain expertise in queueing theory. Defining an appropriate prior distribution $\mu$ can be highly non-trivial and subjective. Furthermore, posterior inference, e.g., sampling from $\mu(\theta|x_{0:t}, \pi)$, can be computationally intensive, especially in high-dimensional or non-conjugate settings.
In contrast, our approach is fully data-driven. Instead of explicitly modeling $\theta$, $\mu$, and $P_{\pi,\theta}$, we directly learn the conditional distributions $\P_n^{time}(t|\pi)$ and $\P_n^{event}(e|\pi)$ as functions of the observed event history up to step $n-1$. This enables flexible and automated learning of system dynamics without requiring domain-specific modeling assumptions. Algorithms \ref{alg:bayesian} and \ref{algo:autoreg} provide a side-by-side comparison of the conventional Bayesian approach (via Bayesian bootstrap) and our proposed method for uncertainty quantification. An equivalent formulation of the conventional Bayesian approach involves first sampling $\Theta \sim \mu(\cdot|x_{0:t}, \pi)$ and then, conditioned on this sample, generate the remainder of the trajectory $\{T_{n:N}, E_{n:N}\} \sim P_{\pi,\Theta}(\cdot|\mathcal{H}_n)$.  This alternative formulation is presented in Algorithm \ref{alg:alt_bayesian} in Appendix \ref{sec:alt_bayesian}, and is equivalent to Algorithm \ref{alg:bayesian}.

\begin{minipage}{0.525\textwidth}
\begin{algorithm}[H]
\caption{Bayesian Bootstrap} 
\label{alg:bayesian}
\begin{flushleft}
\textbf{Input:} Prior $\mu$, stochastic model $P_{\pi,\theta}$, history $\mathcal{H}_n$, number of trajectories $J$, length of trajectory $N$, performance measure $f(\cdot)$
\begin{enumerate}
    \item \textbf{for} $j \leftarrow 1$ \textbf{to} $J$ \textbf{do}
    \begin{enumerate}
        \item[2.] \textbf{Initialization:} $\mathcal{H}^j =\mathcal{H}_n$
        \item[3.] \textbf{for} $i \leftarrow n$ \textbf{to} $N$ \textbf{do}
        \begin{enumerate}
            \item[4.] Update $\mu(\theta|\mathcal{H}^j, \pi) = \frac{ P_{\theta, \pi}(\mathcal{H}^j) \mu(\theta)}{\int P_{\theta,\pi}(\mathcal{H}^j) \mu(\theta) d\theta}$
            \item[5.] Sample $\Theta\sim \mu(\cdot|\mathcal{H}^j, \pi)$
            \item[6.] Sample $(T_i^{(j)},E_i^{(j)}) \sim P_{\pi,\Theta}(\cdot|\mathcal{H}^j)$   
            \item[7.] Update $\mathcal{H}^j \leftarrow \mathcal{H}^j \cup \{T_i^{(j)}, E_i^{(j)}\}$
        \end{enumerate}
        \item[8.] \textbf{end for}
        \item[9.] Estimate $f(\mathcal{H}^j)$ 
    \end{enumerate}
    \item[10.] \textbf{end for}
    \item[11.] \textbf{Return:} $\{f({\mathcal{H}}^{1}),\dots, f({\mathcal{H}}^{J})\}$
\end{enumerate}
\end{flushleft}
\end{algorithm}
\end{minipage}\hfill
\begin{minipage}{0.44\textwidth}
\begin{algorithm}[H]
\caption{Autoregressive Generation} 
\label{algo:autoreg}
\begin{flushleft}
\textbf{Input:} Trained sequence model $\what{P}_\phi$, history $\mathcal{H}_n$, number of trajectories $J$, length of trajectory $N$, performance measure $f(\cdot)$
\begin{enumerate}
    \item \textbf{for} $j \leftarrow 1$ \textbf{to} $J$ \textbf{do}
    \begin{enumerate}
        \item[2.] \textbf{Initialization:} $\what{\mathcal{H}}^j =\mathcal{H}_n$
        \item[3.] \textbf{for} $i \leftarrow n$ \textbf{to} $N$ \textbf{do}
        \begin{enumerate}
            \item[4.] Sample ${T}_i^{(j)} \sim \what{P}_{\phi,i}^{time}$ 
            \item[5.] Update $\what{\mathcal{H}}^j \leftarrow \what{\mathcal{H}}^j \cup \{{T}_i^{(j)}\}$
            \item[6.] Sample ${E}_i^{(j)} \sim \what{P}_{\phi,i}^{event}$   
            \item[7.] Update $\what{\mathcal{H}}^j \leftarrow \what{\mathcal{H}}^j \cup \{{E}_i^{(j)}\}$
        \end{enumerate}
        \item[8.] \textbf{end for}
        \item[9.] Estimate $f(\what{\mathcal{H}}^j)$ 
    \end{enumerate}
    \item[10.] \textbf{end for}
    \item[11.] \textbf{Return:} $\{f(\what{\mathcal{H}}^{1}),\dots, f(\what{\mathcal{H}}^{J})\}$
\end{enumerate}
\end{flushleft}
\end{algorithm}\end{minipage}

\vspace{3mm}

\subsection{Uncertainty Quantification}
\label{sec:uncertainty_q_experiments}

As previously noted, a core advantage of our approach is the ability to quantify uncertainty, thereby supporting robust prediction and decision-making. In this section, we illustrate this capability using an M/M/1 queue with uncertain parameters $\theta$, representing the arrival and service rates.

In particular, we assume the model parameters $\theta$ are not fixed but instead drawn from a prior distribution $\mu$. This induces a marginal distribution over system trajectories of the form 
\[\P(X_{0:\infty}) = \int P_{\theta}(X_{0:\infty}) d\mu(\theta).\]
This formulation contrasts with the setup considered in Section~\ref{sec:methodology}, where $\theta$ was assumed to be fixed. Under this probabilistic framework, our goal is to assess whether the learned sequence model can accurately capture the resulting uncertainty in performance metrics induced by the prior uncertainty over $\theta$.

Crucially, under this setup, each trajectory of the service system corresponds to a different realization of the parameters, i.e., $\theta \sim \mu$. As a result, each trajectory exhibits distinct average inter-arrival times, service times, and waiting times, reflecting the variability induced by parameter uncertainty. Our analysis therefore focuses on the distribution of these trajectory-level averages across sampled trajectories.


We train the transformer model on a training dataset 
\[
\mathcal{D}^{train}\equiv \left\{\left(S_1^{(j)},T_1^{(j)}, \cdots, E_{N}^{(j)} \right): 1\leq j\leq K_t \right\},
\]
where the $j$th event table is generated from $P_{\theta^{(j)}}$ with $\theta^{(j)} \sim \mu$. For the priors, we assume the arrival rate $\lambda \sim U[1.5,2.5]$ and the service rate $\nu\sim U[3,6]$. Additional experiments with alternative prior distributions are presented in Appendix \ref{sec:experimental-details}.

To evaluate the trained Transformer model, we use Algorithm~\ref{algo:autoreg} to simulate future trajectories conditioned on a given history $\mathcal{H}_n$, and compute the corresponding trajectory-wise averages. The resulting empirical distributions of the averages are then compared to those produced by a conventional Bayesian stochastic modeling approach (as described in Algorithm~\ref{alg:bayesian}), which serves as our oracle baseline.
Figure~\ref{fig:M-M-1-uq} presents the distributions for average inter-arrival times, average service times, and average waiting times under both approaches. 
In each case, predictions are generated using an initial history of 
 \(n = 200\) observed events, followed by simulation over the next $200$ events, i.e., \(N - n = 200\).
 
As shown in Figure \ref{fig:M-M-1-uq}, 
the trained transformer closely approximates the oracle benchmark, demonstrating its capability to capture posterior distributions and uncertainty in queueing systems. Notably, the Transformer achieves this performance without explicit knowledge of the prior distribution or the underlying queueing dynamics, instead learning these relationships implicitly from data.

 \begin{figure}[h!]
  \centering
  \begin{minipage}[b]{0.325\textwidth}   \includegraphics[width=\textwidth, height=3.5cm]{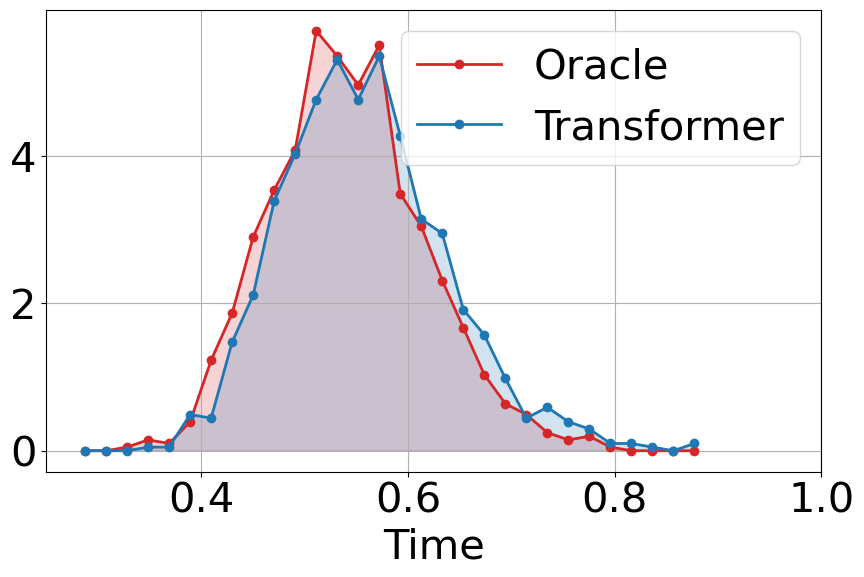}
  \centering{Average inter-arrival time}
  \end{minipage}
  \hfill
  \begin{minipage}[b]{0.325\textwidth}   \includegraphics[width=\textwidth, height=3.5cm]{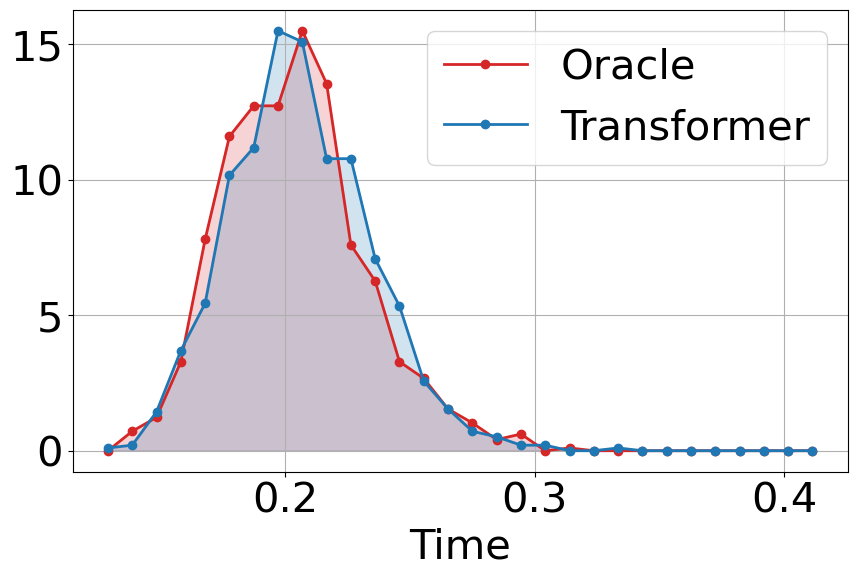}
  \centering{Average service time}
  \end{minipage}
  \hfill
  \begin{minipage}[b]{0.325\textwidth}   \includegraphics[width=\textwidth, height=3.5cm]{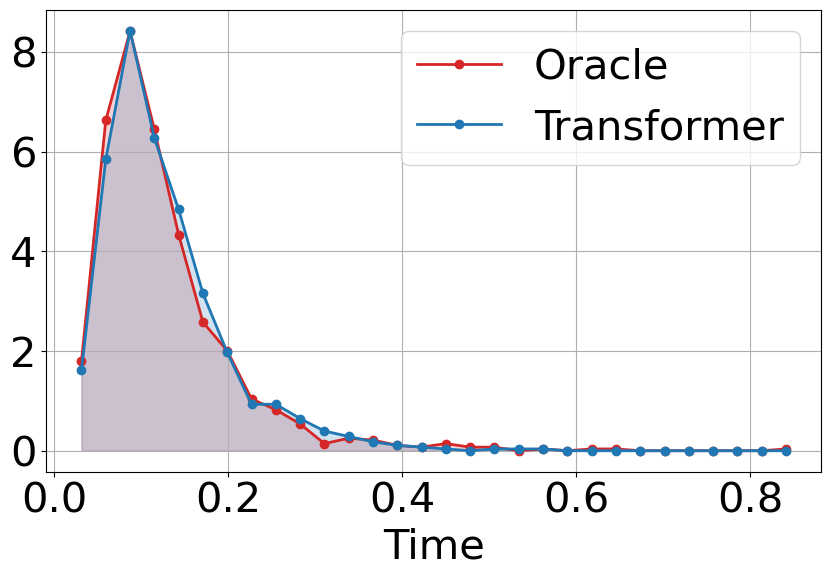}   
  \centering{Average positive waiting time}
    \end{minipage}
     \caption{\textbf{Uncertainty Quantification:} Comparing distribution of performance measures obtained from a trained transformer to a conventional Bayesian approach (oracle) in an M/M/1 queue with underlying parameters coming from a prior. Performance measures: average inter-arrival time, average service time, and average waiting time. History/Initial context ($n$) = 200 events, and Prediction length ($N-n$) = 200 events}
    \label{fig:M-M-1-uq}
\end{figure}

We also compute the KL divergence between the performance metric distributions produced by the Transformer (with a finite prediction horizon $N$) and the oracle benchmark extended to an infinite horizon ($N=\infty$). For comparison, we also calculate the KL divergence between the oracle benchmark at finite horizon $N$ and its infinite-horizon counterpart.  In all cases, predictions are conditioned on a fixed observed history $\mathcal{H}_n$ of length $n=200$.
Figure \ref{fig:M-M-1-kl-div} plots the KL divergence as a function of prediction length  \(N-n \in [25,200]\). We observe that KL divergence decreases monotonically with increasing prediction length. Importantly, the Transformer remains closely aligned with the oracle benchmark across all prediction lengths, highlighting its ability to capture posterior uncertainty, even though it is trained solely for next-token prediction.



 \begin{figure}[h!]
  \centering
  \begin{minipage}[b]{0.325\textwidth}   \includegraphics[width=\textwidth, height=3.5cm]{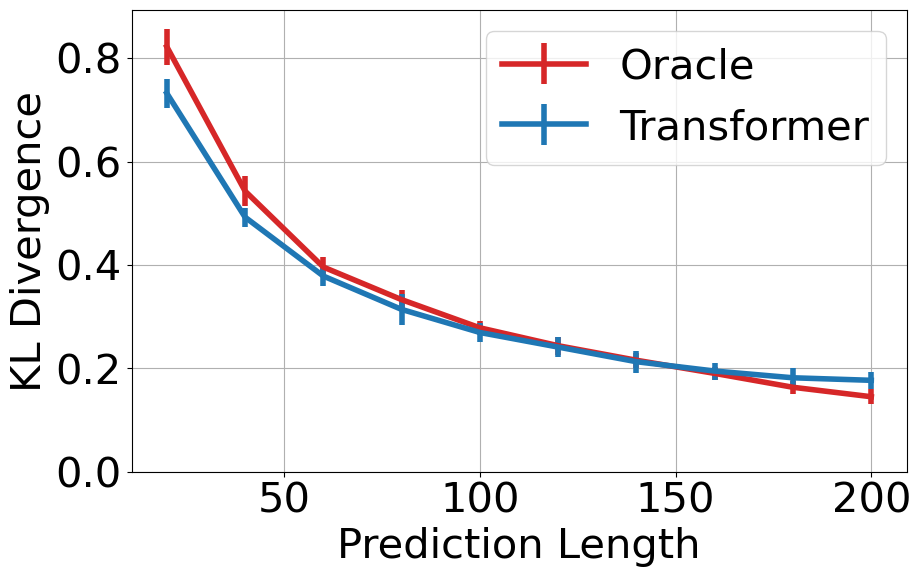}
  \centering{Average inter-arrival time}
  \end{minipage}
  \hfill
  \begin{minipage}[b]{0.325\textwidth}   \includegraphics[width=\textwidth, height=3.5cm]{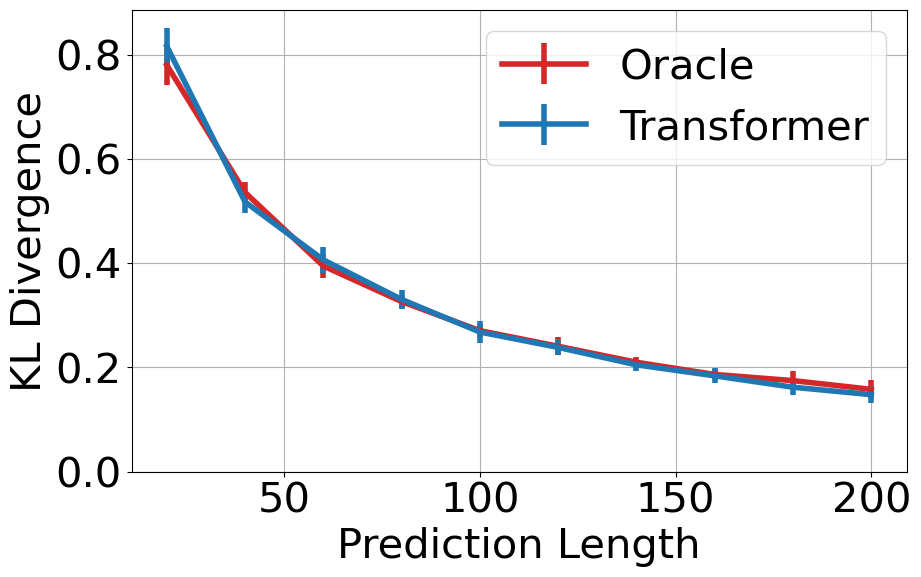}
  \centering{Average service time}
  \end{minipage}
  \hfill
  \begin{minipage}[b]{0.325\textwidth}   \includegraphics[width=\textwidth, height=3.5cm]{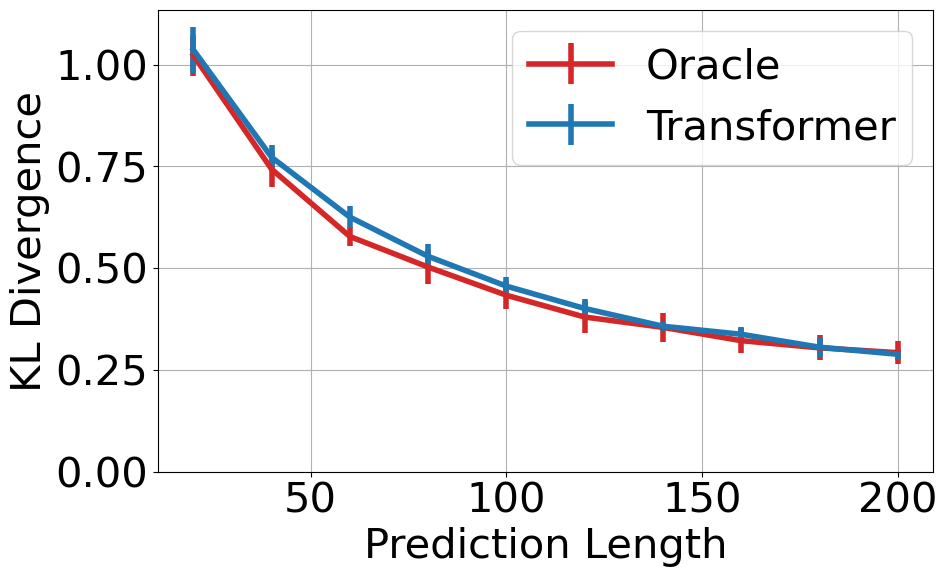}   
  \centering{Average waiting time}
    \end{minipage}
      \caption{\textbf{Uncertainty Quantification:}  Comparison of (1) KL divergence between the Transformer (finite horizon $N$) and oracle benchmark (infinite horizon $N=\infty$) and (2) KL divergence between the oracle benchmark at finite horizon ($N$) and  infinite horizon. (History/Initial context ($n$) = 200 events, and Prediction length ($N-n$) = 200 events).}
    \label{fig:M-M-1-kl-div}
\end{figure}

\vspace{3mm}
\noindent{\textbf{Connections to De Finetti's Partial Exchangeability}} The ability of sequence models to quantify uncertainty has deep theoretical foundations rooted in De Finetti’s seminal work on exchangeable sequences. De Finetti’s theorem establishes that any infinite exchangeable sequence of random variables can be represented as a mixture of i.i.d.\ sequences \citep{DeFinetti33, DeFinetti37, Aldous85}, thereby framing predictive uncertainty 
as arising from latent random parameters. This representation links uncertainty in future observations to uncertainty over an unobserved generative mechanism. Subsequent extensions of this theory have generalized the concept of exchangeability to partial exchangeability, which allows for structured dependencies such as those arising in Markov chains. In particular, partially exchangeable sequences can be represented as mixtures of Markov chains or more general Markov processes \citep{ DiaconisFr80, Freedman96, Aldous85, Banxico14}.

A formal review of De Finetti's notion of partial exchangeability (Definition \ref{def:definneti_assumption}) along with the corresponding result (Theorem \ref{thm:definneti_thm}) is provided in Appendix~\ref{sec:Definneti_partial_exchangeability}. Informally, the theorem asserts that a stochastic process $\P$ satisfies partial exchangeability if and only if it admits a representation as a mixture of recurrent Markov processes:
\[
\P=\int P_\theta d\mu(\theta).
\]
where $P_\theta$ parameterizes the transition dynamics and $\mu$ is a prior distribution over $\theta$.

This result offers a powerful interpretation: uncertainty over the latent parameter $\theta$, which governs the dynamics of the process, is mathematically equivalent to predictive uncertainty over unobserved future trajectories. That is, the uncertainty in future observations reflects incomplete knowledge of the underlying generative mechanism. Sequence models leverage this principle without explicitly modeling $\theta$ or specifying a prior $\mu$. Instead, they learn the conditional distribution of future events given the observed history and use it to generate plausible continuations of the process.

\section{Theoretical Insights}
\label{sec:theory}

In this section, we present a theoretical analysis of how a trained sequence model $\what{P}_\phi$ performs compared to the true data-generating process $\P$. By performance, we refer to the model's ability to accurately estimate specific system performance measures, e.g., average waiting times, when computed under $\widehat{P}_\phi$ as opposed to the ground-truth distribution $\mathbb{P}$. For clarity, we conduct our analysis using the sequence representation $X_{0:t}$ rather than the event table format. In what follows, trajectories drawn from $\P$ are denoted by $X_{0:t}$, while those drawn from $\what{P}_\phi$ are written as $\what{X}_{0:t}$.  

In the ideal case where the sequence model is perfectly trained, i.e., $\what{P}_\phi = \P$, it follows that for any real-valued measurable function $f(\cdot)$, the distributions of $f(\widehat{X}_{0:t})$ and $f(X_{0:t})$ coincide:
$f(\what{X}_{0:t}) \overset{D}{=} f(X_{0:t})$. 
However, in practice, the learned distribution $\widehat{P}_\phi$ will generally differ from $\mathbb{P}$. Consequently, it is essential to understand how accurately we can estimate the functional $f$ using samples $\what{X}_{0:t} \sim \what{P}_\phi(\cdot)$. 
Let $\widehat{P}_{0:t}$ and $\mathbb{P}_{0:t}$ denote the marginal distributions of $\widehat{X}_{0:t}$ and $X_{0:t}$, respectively. The following lemma, based on the Kantorovich–Rubinstein duality and Pinsker's inequality, provides a bound on the discrepancy between the evaluations of $f$ under the two distributions:
\begin{lemma}
    Assume $f$ is bounded, that is $\|f\|_\infty<\infty$, then
 \[W\left({f(\what{X}_{0:t}), f(X_{0:t}})\right) \leq {\sqrt{2} \,\|f\|_\infty} \,\sqrt{\dkl {{{\P}}_{0:t}}{\what{{P}}_{0:t}}},\]
where $W$ denotes the Wasserstein-1 distance.
 \label{lemma:preliminary_lemma}
\end{lemma}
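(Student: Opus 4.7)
The plan is to chain three well-known inequalities: a Wasserstein–total-variation comparison (which exploits boundedness of $f$), the data-processing inequality for total variation (to move from laws of $f(X_{0:t})$ back to laws of $X_{0:t}$), and Pinsker's inequality (to pass from total variation to KL divergence).

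First, I would let $\alpha$ and $\beta$ denote the pushforward laws of $f(X_{0:t})$ and $f(\what{X}_{0:t})$ respectively. Since $\|f\|_\infty < \infty$, both $\alpha$ and $\beta$ are supported in the interval $[-\|f\|_\infty, \|f\|_\infty]$, which has diameter $2\|f\|_\infty$. I would then invoke a maximal coupling of $\alpha$ and $\beta$: there exists a coupling $\gamma$ such that $\gamma(\{(y,y'): y=y'\}) = 1 - \mathrm{TV}(\alpha,\beta)$. Because $|y-y'| \le 2\|f\|_\infty$ almost surely under any coupling supported in this bounded interval, the Kantorovich formulation yields
\[
W(\alpha,\beta) \;\le\; \mathbb{E}_\gamma\!\left[|Y-Y'|\right] \;\le\; 2\|f\|_\infty \cdot \mathrm{TV}(\alpha,\beta).
\]
(Alternatively, one could start from Kantorovich–Rubinstein duality and argue that any 1-Lipschitz test function on $[-\|f\|_\infty,\|f\|_\infty]$ has oscillation bounded by $2\|f\|_\infty$, then pair this with the dual representation of total variation; both routes yield the same constant.)

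Next, I would apply the data-processing inequality for total variation: since $\alpha$ and $\beta$ are the pushforwards of $\mathbb{P}_{0:t}$ and $\what{P}_{0:t}$ under the measurable map $f$, we have $\mathrm{TV}(\alpha,\beta) \le \mathrm{TV}(\mathbb{P}_{0:t}, \what{P}_{0:t})$. Finally, Pinsker's inequality gives $\mathrm{TV}(\mathbb{P}_{0:t}, \what{P}_{0:t}) \le \sqrt{\tfrac{1}{2}\, D_{\mathrm{KL}}(\mathbb{P}_{0:t}\|\what{P}_{0:t})}$. Stringing these three bounds together produces
\[
W\bigl(f(\what{X}_{0:t}),\, f(X_{0:t})\bigr) \;\le\; 2\|f\|_\infty \cdot \sqrt{\tfrac{1}{2}\, D_{\mathrm{KL}}(\mathbb{P}_{0:t}\|\what{P}_{0:t})} \;=\; \sqrt{2}\,\|f\|_\infty \sqrt{D_{\mathrm{KL}}(\mathbb{P}_{0:t}\|\what{P}_{0:t})},
\]
which is the desired inequality.

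There is no real technical obstacle here; the only decision point is whether to derive the Wasserstein-to-TV bound via maximal coupling or via the dual formulation. I would prefer the coupling argument because it makes the constant $2\|f\|_\infty$ transparent and only uses boundedness of $f$ (no regularity). One small care-point to flag: the inequality is meaningful only when $D_{\mathrm{KL}}(\mathbb{P}_{0:t}\|\what{P}_{0:t}) < \infty$, which is automatic under the training objective in \eqref{eq:different_losses} whenever the model places positive mass on observable histories; otherwise the bound is vacuous but still correct.
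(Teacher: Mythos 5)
Your proof is correct and uses the same three ingredients as the paper's (a bounded-support Wasserstein--total-variation comparison with constant $2\|f\|_\infty$, a data-processing inequality, and Pinsker's inequality), yielding the identical constant. The only cosmetic difference is the order of the last two steps: you apply data processing at the total-variation level and then Pinsker on the path laws, whereas the paper applies Pinsker to the pushforward laws and then data processing for KL; both orderings are valid and equivalent.
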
 

This result shows that the Wasserstein distance between the performance evaluations under the learned and true distributions is controlled by the square root of the KL divergence between the sequence distributions  $\mathbb{P}_{0:t}$ and $\widehat{P}_{0:t}$. Since the KL divergence is closely related to the loss function used to train the sequence model, see \eqref{eq:different_losses}, this bound provides theoretical justification that better model training, i.e., a smaller validation loss, leads to more accurate performance estimation. Importantly, Lemma~\ref{lemma:preliminary_lemma} holds without any structural assumptions on the true data-generating process $\mathbb{P}$ and applies to any bounded performance function $f$.

We next present a more detailed analysis by benchmarking our methodology against the conventional Bayesian approach as outlined in Section \ref{sec:generalized_formulation}. In the Bayesian framework, inference is performed over the latent parameter $\theta$, and uncertainty in a performance measure $f(\theta)$ is quantified via the posterior distribution over $\theta$. To facilitate a meaningful comparison, we impose additional structural assumptions on the data-generating process $\P(\cdot) $ and the learned autoregressive sequence model $\what{P}_\phi(\cdot)$.  

Specifically, we consider a family of stochastic processes $\{P_\theta:\theta \in \Theta\}$ that satisfies the regularity conditions in Assumptions~\ref{ass:PHR} -- \ref{ass:separability}.
Let $\mathcal S$ be a Polish space equipped with its Borel $\sigma$-algebra $\Sigma = \mathcal{B}(\mathcal S)$.
For each $\theta \in \Theta$, let $P_\theta = \{P_t^\theta : t \ge 0\}$ denote the transition semigroup of a time-homogeneous strong Markov process $X=(X(t))_{t \ge 0}$ on $(\mathcal S,\Sigma)$ with right-continuous paths having left limits (RCLL). 
For $A\in\Sigma$, define the hitting time
$
\kappa_A := \inf\{t\ge 0:\; X_t\in A\}.
$ For $x\in\mathcal S$, write $ P_{\theta,x}$ for the law of $X$ under parameter $\theta$ with $X_0=x$. 
Equivalently, the transition probabilities are
$
P_{t}^\theta(x,B) := P_{\theta,x}\!\left(X_t\in B\right), $ $ t\ge 0,\ B\in\Sigma.
$

\begin{assumption} \label{ass:PHR}
For each $\theta \in \Theta$, $P_\theta$ is an enhanced positive Harris recurrent strong Markov process with RCLL paths on $(\mathcal S,\Sigma)$ such that there exists a subset $C_\theta \in \Sigma$, a constant $\epsilon_\theta>0$, a probability $\varphi_\theta$ on $S$, and $\zeta_\theta>0$ such that
\begin{enumerate}
    \item $\kappa_{C_\theta}<\infty$ $P_{\theta,x}$ a.s. for each $x \in \mathcal S$.
    \item $P_x(X(\zeta_\theta)\in B) \geq \epsilon_\theta \varphi_\theta(B)$ for all $x\in C_\theta$, $B\in \Sigma$.
\end{enumerate}
\end{assumption}

Assumption~\ref{ass:PHR} is standard in the study of Harris recurrent Markov processes (see, e.g., \cite{asmussen2003, glynn11, Vlasiou2014}), and many queueing systems satisfy it \citep{SigmanWo1993, Sigman90, Vlasiou2014}. Whether a general Harris recurrent Markov process automatically satisfies properties~(1)–(2) in Assumption~\ref{ass:PHR} remains an open problem; see \cite{glynn11}. The following proposition is a direct consequence of Assumption~\ref{ass:PHR}.

\begin{proposition}
\label{prop:regeneration}
Under Assumption~\ref{ass:PHR}, there exist a sequence of a.s.\ finite stopping times $T_\theta(0)<T_\theta(1)<\cdots$, called \emph{regeneration times}, with the following properties: Let $
\tau_\theta(j):=T_\theta(j)-T_\theta(j-1)$, for $ j\ge 1$, denote the inter-regeneration lengths. Then,
\[
\{\tau_\theta(j), X(t):\, T_\theta(j-1)\le t<T_\theta(j)\},\quad j\ge 1,
\]
form one-dependent, identically distributed cycles. In addition,
$(\tau_\theta(j), j\ge 1)$ are
i.i.d.; and for each $j\ge 0$, the post-regeneration process $(X(T_\theta(j)+s):s\geq 0)$ is independent of $T_\theta(j)$. \end{proposition}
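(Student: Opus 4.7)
Fix $\theta \in \Theta$ and suppress it from the notation. The plan is to apply the Athreya--Ney--Nummelin splitting construction to the continuous-time Harris recurrent process, leveraging the minorization in Assumption~\ref{ass:PHR}(2). The key decomposition is
\[
P_{\zeta}(x,\cdot) \;=\; \epsilon\,\varphi(\cdot) \;+\; (1-\epsilon)\,R(x,\cdot), \qquad x \in C,
\]
where $R$ is a probability kernel (the residual). This lets us couple the $\zeta$-step transition out of each visit to $C$ with an independent $\mathrm{Bernoulli}(\epsilon)$ coin: on a success, the state $\zeta$ units later is drawn from $\varphi$ regardless of the current state in $C$; on a failure, it is drawn from $R(x,\cdot)$.

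\textbf{Constructing the regeneration times.} Define the successive visits to $C$ recursively by $\sigma_1 := \kappa_C$ and $\sigma_{k+1} := \inf\{t \ge \sigma_k + \zeta : X(t) \in C\}$; each is a.s.\ finite by Assumption~\ref{ass:PHR}(1) together with the strong Markov property. Enlarge the probability space to carry i.i.d.\ $\mathrm{Bernoulli}(\epsilon)$ coins $(U_k)_{k\ge 1}$ attached to the visits, and let $N_1 := \min\{k : U_k = 1\}$, which is geometric and thus a.s.\ finite. Set $T(0) := \sigma_{N_1} + \zeta$; by the splitting, $X(T(0)) \sim \varphi$ and is independent of the history up to $\sigma_{N_1}$. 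Iterating the same procedure starting from $X(T(j-1))$ produces regeneration times $T(0) < T(1) < T(2) < \cdots$ and cycle lengths $\tau(j) := T(j) - T(j-1)$ for $j \ge 1$.

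\textbf{I.i.d.\ cycle lengths and post-regeneration independence.} For every $j \ge 0$, $X(T(j)) \sim \varphi$ and is independent of the history up to $T(j)$. The strong Markov property then gives that $(X(T(j)+s) : s \ge 0)$ is distributed as $X$ started from $\varphi$ and is independent of $T(j)$, yielding the post-regeneration claim. Because each cycle is initialized from the same law $\varphi$ using an independent copy of the coin-driven construction, the cycle lengths $(\tau(j))_{j \ge 1}$ are marginally identically distributed, and their mutual independence follows by iterating the conditional-independence argument across cycles.

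\textbf{One-dependence of the path cycles, and the main obstacle.} The subtler claim is that the path-valued cycles
\[
\mathcal C_j := \bigl\{\tau(j),\; X(t) : T(j-1) \le t < T(j)\bigr\}, \qquad j \ge 1,
\]
form a \emph{one-dependent}, identically distributed sequence rather than an independent one. The source of dependence is the final bridge segment $[\sigma_{N_j}, T(j))$ of cycle $j$, of length $\zeta$: conditional on the success of the coin there, this bridge is a path whose right-endpoint $X(T(j))$ also serves as the initial state of cycle $j+1$. Hence $\mathcal C_j$ and $\mathcal C_{j+1}$ share the randomness in $X(T(j))$ and are genuinely dependent; however, conditional on $X(T(j))$ they factorize, and $\mathcal C_j$ is independent of $\mathcal C_{j+k}$ for every $k \ge 2$ by the strong Markov property applied at $T(j+1)$. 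Formalizing one-dependence through the joint law induced by the split --- in particular, verifying that the bridge segment leaks no information into non-adjacent cycles --- is the main technical hurdle, and is handled via the standard arguments for continuous-time wide-sense regeneration in \cite{asmussen2003, glynn11}, adapted to the RCLL regularity built into Assumption~\ref{ass:PHR}.
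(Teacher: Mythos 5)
Your proposal is correct and follows essentially the same route as the paper: Nummelin splitting of the $\zeta$-step kernel on the small set $C$, successive visits to $C$ separated by at least $\zeta$ with independent Bernoulli$(\epsilon)$ coins, regeneration at coin successes, and one-dependence arising precisely from the bridge on $(\tilde T(\gamma(k)), T(k))$ being generated conditionally on the endpoint $X(T(k))\sim\varphi$ that seeds the next cycle. The paper's only additional care is in spelling out why the spacings $\tau(k)$ are i.i.d.\ even though the cycles are merely one-dependent — namely that $\tau(k)$ is measurable with respect to the randomness up to the success time and hence independent of the fresh draw $X(T(k))\sim\varphi$ — which your "iterate the conditional-independence argument" gestures at but does not fully write out.
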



We further impose the following assumptions based on the construction in Proposition~\ref{prop:regeneration}.

\begin{assumption}
\label{ass:cycle-moments}
Fix the objects $T_\theta(j)$'s and $\tau_{\theta}(j)$'s from Proposition~\ref{prop:regeneration} and let $T_\theta(0)$ be the first regeneration time. Then,
\[
0<\inf_{\theta\in\Theta}\,\mathbb E_{P_\theta}[\tau_{\theta}(1)]
\ \le\ \sup_{\theta\in\Theta}\,\mathbb E_{P_\theta}[\tau_{\theta}(1)]<\infty,
\quad
\sup_{\theta\in\Theta}\,\mathbb E_{P_\theta}[\tau_{\theta}(1)^2]<\infty,
\]
and
\[
\sup_{\theta\in\Theta}\ \sup_{x\in\mathcal S}\ \mathbb E_{x,P_\theta}\!\big[T_{\theta}(0)\big]\;<\;\infty.
\]
\end{assumption}

\begin{assumption}\label{ass:separability}
Let $(C_\theta,\varepsilon_\theta,\varphi_\theta,\zeta_\theta)$ denote the small-set tuple from Assumption ~\ref{ass:PHR}. Assume $\inf_{\theta\in\Theta}\zeta_\theta>0$, and fix once and for all $\tilde{\zeta}\in\bigl(0,\inf_{\theta\in\Theta}\zeta_\theta\bigr)$. Define a probability measure $Q_\theta$ on $(\mathcal S\times\mathcal S,\Sigma\otimes\Sigma)$ by
\[
Q_\theta(A\times B)\;:=\;\int_A P^{\theta}_{\tilde{\zeta}}(x,B)\,\varphi_\theta(dx),\qquad A,B\in\Sigma.
\]
We assume separability of the family $\{Q_{\theta},\theta\in \Theta\}$ in total variation, i.e., for all $\theta\neq\theta'$,
\[
\|Q_\theta-Q_{\theta'}\|_{\mathrm{TV}}>0,
\]
where $\|\cdot\|_{\mathrm{TV}}$ denotes total variation distance. Equivalently, for each $\theta\neq\theta'$ there exist $A,B\in\Sigma$ such that
$
\int_A P^{\theta}_{\tilde{\zeta}}(x,B)\,\varphi_\theta(dx)\;\neq\;\int_A P^{\theta'}_{\tilde{\zeta}}(x,B)\,\varphi_{\theta'}(dx).
$
\end{assumption}

We further assume that both the true data-generating process $\mathbb{P}$ and the learned sequence model $\widehat{P}_\phi$ can be represented as mixtures over the processes $P_\theta$, differing only in their respective (implicit) mixing measures $\mu$ and $\widehat{\mu}$. This assumption is formalized below:

\begin{assumption} $\P$ and $\what{P}_\phi$ admit the following mixture representations:
        \[\P(X_{0:\infty}) = \int P_{\theta}(X_{0:\infty}) d\mu(\theta) \quad \text{and} \quad \what{P}_\phi(X_{0:\infty}) = \int P_{\theta}(X_{0:\infty}) d\what{\mu}(\theta). \]
\label{ass:mixture_assumption}
\end{assumption}

As discussed in Section~\ref{sec:uncertainty_q_experiments}, Assumption
\ref{ass:mixture_assumption} for $\P$ captures parametric uncertainty by positing that $\P$ is a mixture of Markov laws. Over a countable state space, this assumption admits a De Finetti–type characterization via \emph{Markov (partial) exchangeability}: a process is partially exchangeable (i.e., exhibits a symmetry of $X_{0:\infty}$) if and only if it is a mixture of recurrent Markov processes (on countable spaces, Harris recurrence reduces to the usual notion of recurrence). See Appendix~\ref{sec:Definneti_partial_exchangeability} for a formal review. Thus, in the countable–state case, Assumption~\ref{ass:mixture_assumption} is equivalent to imposing the symmetry of Markov (partial) exchangeability on $\P$.

For the learned model, Assumption~\ref{ass:mixture_assumption} (again, in the
countable–state case) likewise requires \emph{partial exchangeability}. Enforcing this structure in practice may involve architectural choices or inductive biases that encourage partial exchangeability. When the training data themselves are 
partially exchangeable, it is natural—and often empirically justified—to expect a
well–trained sequence model to inherit this structure to a meaningful degree. Even
if the assumption does not hold exactly, it provides a principled framework for
analyzing the effect of model misspecification on performance evaluation.

\vspace{3mm}

Next, we analyze a class of performance metrics that take the form of time-averaged functionals of the system trajectory. Specifically, we consider performance functions of the form 
\[f(X_{0:t}) = \frac{1}{t} \int_{s=0}^t h(X_s) ds,\] 
where $h: \mathcal{S} \to \mathbb{R}$ is a measurable function.
A typical example is when $h$ denotes the number of customers in the queue, in which case $f(X_{0:t})$ represents the average queue length over the time interval $[0,t]$. 
For each $\theta\in\Theta$, let $X_{0:\infty}(\theta)$ denote the trajectory of the Markov process under $P_{\theta}$.
Under suitable regularity conditions, the law of large numbers for Harris positive recurrent Markov processes ensures that the time-average performance converges almost surely (Theorem \ref{thm:L1_bound_harris_recurrent}), i.e.,
\[
f(\theta) := \lim_{t \to \infty} \frac{1}{t} \int_{s=0}^t h(X_s(\theta)) ds \mbox{ almost surely}.\]
Here, we slightly abuse notation by writing $f(\theta)$ as the limiting value of the time-average under dynamics governed by parameter $\theta$. Moreover, under the mixture distribution $\mathbb{P}$, the parameter $\theta$ is random, and hence $f(\theta)$ should be interpreted as a random variable induced by the prior $\mu$.
\begin{assumption} 
    Consider the family of stochastic processes $\{P_\theta : \theta \in \Theta\}$ and the measurable function $h: \mathcal{S} \to \mathbb{R}$.  
    \begin{enumerate} 
        \item       
         $ \sup_{\theta\in\Theta}\mathbb{E}\int_{s=T(0)}^{T(1)} |h(X_s(\theta))| ds < \infty$,  $\sup_{\theta\in\Theta}\mathbb{E}\left(\int_{s=T(0)}^{T(1)} |h(X_s(\theta)| ds \right)^2 < \infty$ and, uniformly over initial states, $\sup_{\theta\in\Theta} \sup_{x \in \mathcal S} \mathbb{E}\int_{s=0}^{T(0)} |h(X_s(\theta))| ds < \infty$. 
        \item 
        $\sup_{\theta,\theta' \in \Theta}|f(\theta)-f(\theta')| < \infty$.
    \end{enumerate}
    \label{ass:function_assumption}
\end{assumption}

Before presenting our main theoretical result, we state a lemma that links the divergences $\dkl{\P_{0:t}}{\what{P}_{0:t}}$  and  $\dkl {\mu}{\what{\mu}}$. 

\begin{lemma} Under Assumptions $\ref{ass:PHR}$ -- $\ref{ass:mixture_assumption}$,  
    \[
\dkl{\P_{0:t}}{\what{P}_{0:t}} \rightarrow \dkl {\mu}{\what{\mu}} \mbox{ as $t\rightarrow\infty$.}
\]
\label{lemma:convergence_kl_div_lemma}
\end{lemma}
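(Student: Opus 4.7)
The plan relies on a KL chain–rule decomposition paired with posterior consistency for the latent $\theta$ under Harris ergodicity.

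First, I would apply the KL chain rule to the joint distribution of $(\theta,X_{0:t})$ on $\Theta\times\mathcal{S}^{[0,t]}$. Under Assumption~\ref{ass:mixture_assumption}, the conditional of $X_{0:t}$ given $\theta$ is $P_\theta$ under both $\P$ and $\what{P}_\phi$; factoring first over $\theta$ gives the total joint KL equal to $\dkl{\mu}{\what{\mu}}$, while factoring first over $X_{0:t}$ expresses it as $\dkl{\P_{0:t}}{\what{P}_{0:t}} + \mathbb{E}_{\P_{0:t}}\dkl{\mu(\cdot\mid X_{0:t})}{\what{\mu}(\cdot\mid X_{0:t})}$. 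Equating the two decompositions yields the exact identity
\[\dkl{\mu}{\what{\mu}} \;=\; \dkl{\P_{0:t}}{\what{P}_{0:t}} \;+\; \mathbb{E}_{\P_{0:t}}\dkl{\mu(\cdot\mid X_{0:t})}{\what{\mu}(\cdot\mid X_{0:t})}.\]
Applying the data-processing inequality to the coordinate projection $X_{0:t}\mapsto X_{0:s}$ for $s\le t$ shows that $\dkl{\P_{0:t}}{\what{P}_{0:t}}$ is non-decreasing in $t$; combined with the upper bound $\dkl{\mu}{\what{\mu}}$ from the identity above, it converges.

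Second, I would identify the limit with $\dkl{\mu}{\what{\mu}}$ via martingale convergence on an increasing filtration. The Radon--Nikodym derivative $M_t = d\P_{0:t}/d\what{P}_{0:t}$ is a nonnegative martingale under $\what{P}_\phi$ on $\mathcal F_t = \sigma(X_{0:t})$; since $x\log x$ is convex, $\mathbb{E}_{\what{P}_\phi}[M_t\log M_t]$ is non-decreasing and, by monotone convergence, tends to $\dkl{\P_{0:\infty}}{\what{P}_{0:\infty}}$ on the tail $\sigma$-algebra $\sigma(X_{0:\infty})$. It then remains to show the tail KL equals $\dkl{\mu}{\what{\mu}}$, which I would reduce to constructing a $\sigma(X_{0:\infty})$-measurable recovery map $\hat\theta$ with $\hat\theta(X_{0:\infty}) = \theta$ holding $P_\theta$-almost surely for every $\theta\in\Theta$. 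Writing $f = d\mu/d\what{\mu}$, Radon--Nikodym on the joint space $\Theta\times\mathcal{S}^{[0,\infty)}$ then forces $M_\infty = f(\hat\theta(X_{0:\infty}))$, and taking expectation under $\P$ yields $\dkl{\P_{0:\infty}}{\what{P}_{0:\infty}} = \mathbb{E}_\mu[\log f(\theta)] = \dkl{\mu}{\what{\mu}}$, completing the argument.

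The main obstacle is the construction of $\hat\theta$, i.e.\ establishing posterior consistency in a form robust enough to yield Radon--Nikodym identification at the joint level. Assumption~\ref{ass:separability} only guarantees pairwise distinguishability of $\theta$ through the two-step kernel $Q_\theta$, so the estimator must be built explicitly. Using the regenerative structure from Proposition~\ref{prop:regeneration} together with the uniform cycle-moment bounds in Assumption~\ref{ass:cycle-moments}, regenerative laws of large numbers applied along cycles of length $\tilde\zeta$ provide consistent empirical estimates of a countable separating class of functionals of $Q_\theta$; these estimates identify $\theta$ pointwise under each $P_\theta$, and a measurable-selection argument assembles them into a single tail-measurable $\hat\theta$. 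The boundary case $\dkl{\mu}{\what{\mu}}=\infty$ is subsumed by the same scheme—the chain-rule identity then forces the finite-$t$ KL to diverge with the monotone limit consistently equal to $+\infty$.
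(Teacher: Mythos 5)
Your proposal is correct and follows essentially the same route as the paper's proof: the chain-rule identity $\dkl{\mu}{\what{\mu}}=\dkl{\P_{0:t}}{\what{P}_{0:t}}+\E_{\P_{0:t}}[\dkl{\mu(\cdot\mid X_{0:t})}{\what{\mu}(\cdot\mid X_{0:t})}]$ plus monotonicity, the likelihood-ratio martingale argument identifying the limit with $\dkl{\P}{\what{P}}$, and the identification of $\theta$ from the infinite path via regenerative strong laws built on Assumption~\ref{ass:separability}. The paper implements the last step by constructing i.i.d.\ indicator statistics along odd-indexed regeneration cycles to show the $P_\theta$ are pairwise mutually singular and then invokes a measurable classifier on the standard Borel path space, which is exactly the recovery map $\hat\theta$ you describe.
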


In particular, for sufficiently large $t$ we have $\dkl{\mu}{\what{\mu}} \approx \dkl{\P_{0:t}}{\what{P}_{0:t}}$.
We are now ready to state our main theoretical result, which quantifies how well the learned sequence model approximates the true long-run performance:
\begin{theorem} Let $\widehat{P}_{\phi}$ be a sequence model trained on data generated from the true process $\mathbb{P}$.  
Suppose Assumptions~\ref{ass:PHR}  -- ~\ref{ass:function_assumption} hold. 
Then for all $t>0$,
 \[W({f(\what{X}_{0:t}), f(\theta)}) \leq \underbrace{c_1 \sqrt{\dkl {\mu}{\what{\mu}}}}_{\text{Error due to imperfect learning}}+ \underbrace{ \frac{c_2}{\sqrt{t}}}_{\text{Error due to limited prediction length}},\]
where $W$ is the Wasserstein-1 distance, $c_1$ and $c_2$ are finite constants that do not depend on $t$.
 \label{thm:final_thm}
\end{theorem}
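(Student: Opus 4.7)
The plan is to introduce the auxiliary latent parameter $\theta'\sim\what{\mu}$ under the learned model so that, via the natural generative coupling $\what{X}_{0:t}\mid\theta'\sim P_{\theta'}$ afforded by Assumption~\ref{ass:mixture_assumption}, the triangle inequality gives
\begin{equation*}
W\bigl(f(\what{X}_{0:t}),f(\theta)\bigr)\ \leq\ W\bigl(f(\what{X}_{0:t}),f(\theta')\bigr)\ +\ W\bigl(f(\theta'),f(\theta)\bigr).
\end{equation*}
The two terms will be bounded separately: the first by the rate at which the time average under $P_{\theta'}$ concentrates around its long-run limit (yielding the $c_2/\sqrt{t}$ term), and the second by the discrepancy between $\mu$ and $\what{\mu}$ (yielding the $c_1\sqrt{\dkl{\mu}{\what{\mu}}}$ term). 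Lemma~\ref{lemma:convergence_kl_div_lemma} is not needed for the argument itself, but it supplies the conceptual justification for why $\dkl{\mu}{\what{\mu}}$ is the relevant divergence here rather than the trajectory KL.

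For the first term, I would condition on $\theta'$ and invoke the regenerative decomposition of Proposition~\ref{prop:regeneration}: write $\int_0^t h(X_s(\theta'))\,ds$ as a burn-in integral on $[0,T_{\theta'}(0)]$, a sum of one-dependent identically distributed cycle integrals $Y_j(\theta')=\int_{T_{\theta'}(j-1)}^{T_{\theta'}(j)} h(X_s(\theta'))\,ds$, and a tail integral over the incomplete final cycle. Assumption~\ref{ass:cycle-moments} and Assumption~\ref{ass:function_assumption}(1) provide uniform-in-$\theta'$ control of the first two moments of the cycle lengths and cycle integrals, together with a uniform bound on the expected time to first regeneration from any initial state. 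Standard renewal/regenerative variance estimates then give $\E_{P_{\theta'}}\bigl|f(X_{0:t}(\theta'))-f(\theta')\bigr|^2\leq \tilde{c}/t$ with $\tilde{c}$ independent of $\theta'$. By Cauchy--Schwarz, $\E_{P_{\theta'}}\!\bigl|f(X_{0:t}(\theta'))-f(\theta')\bigr|\leq c_2/\sqrt{t}$; integrating over $\theta'\sim\what{\mu}$ and using $W_1(\cdot,\cdot)\leq\E|\cdot-\cdot|$ under any coupling produces the $c_2/\sqrt{t}$ piece of the bound.

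For the second term, $f(\theta)$ and $f(\theta')$ are the pushforwards of $\mu$ and $\what{\mu}$ under the scalar map $f:\Theta\to\mathbb{R}$. By Assumption~\ref{ass:function_assumption}(2), $R:=\sup_{\theta,\theta'\in\Theta}|f(\theta)-f(\theta')|<\infty$, so for any $1$-Lipschitz $g:\mathbb{R}\to\mathbb{R}$, the composition $g\circ f$ has oscillation at most $R$ on $\Theta$. Combining Kantorovich--Rubinstein duality, the elementary estimate $\bigl|\!\int g\circ f\,d(\mu-\what{\mu})\bigr|\leq R\,\|\mu-\what{\mu}\|_{\mathrm{TV}}$ (valid for any function of oscillation at most $R$), and Pinsker's inequality gives
\begin{equation*}
W(f(\theta'),f(\theta))\ \leq\ R\,\|\mu-\what{\mu}\|_{\mathrm{TV}}\ \leq\ \tfrac{R}{\sqrt{2}}\sqrt{\dkl{\mu}{\what{\mu}}}\ =:\ c_1\sqrt{\dkl{\mu}{\what{\mu}}}.
\end{equation*}

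The main obstacle is the uniform-in-$\theta$ control of the variance of the regenerative time average. While the $1/\sqrt{t}$ rate is classical for any single positive Harris recurrent process, making the constant uniform over $\Theta$ requires carefully combining the cycle-moment bounds of Assumption~\ref{ass:cycle-moments}, the cycle integrability of $h$ in Assumption~\ref{ass:function_assumption}(1), and the uniform bound on the expected time to first regeneration, together with a careful treatment of the initial burn-in and incomplete terminal cycle and of the one-dependent (rather than fully i.i.d.) structure of the successive cycle integrals. Once this uniform second-moment estimate is in hand, the rest of the argument reduces to a short combination of the natural generative coupling and classical information-theoretic inequalities (data-processing for total variation plus Pinsker).
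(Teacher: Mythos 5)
Your proposal is correct and follows essentially the same route as the paper: the same triangle-inequality decomposition through an auxiliary $\what\theta\sim\what\mu$, the same regenerative-cycle $L^1$ concentration argument (uniform over $\Theta$ via Assumptions~\ref{ass:cycle-moments} and \ref{ass:function_assumption}) for the $c_2/\sqrt{t}$ term, and the same data-processing-plus-Pinsker chain for the $c_1\sqrt{\dkl{\mu}{\what\mu}}$ term. The only differences are an immaterial constant factor in $c_1$ arising from the total-variation normalization.
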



Theorem \ref{thm:final_thm} provides a finite-sample performance guarantee on the discrepancy between the distribution of the performance metric $f(\what{X}_{0:t})$ estimated from trajectories generated by the learned sequence model $\what P_{\phi}$ and the true long-run performance measure $f(\theta)$ under the underlying generative model indexed by the latent parameter $\theta\sim\mu$.
The bound comprises two additive components:
The first term
quantifies the error arising from imperfect learning, reflecting how well the sequence model approximates the true data-generating process, and is directly determined by the quality of training.
The second term, decaying at the rate $\mathcal{O}(1/\sqrt{t})$, captures approximation error induced by the finite prediction horizon. Even with a perfect model, a finite $t$ introduces error in approximating the long-run average $f(\theta)$. 

This theoretical bound is consistent with the empirical trends observed in Figure \ref{fig:M-M-1-kl-div} for the M/M/1 queue. As the prediction length $N-n$ increases, the KL divergence between performance distributions, computed either from the transformer model or the finite-horizon oracle benchmark, relative to the infinite-horizon oracle benchmark, decreases.

All previous results extend straightforwardly to settings where the model is conditioned on an observed history $X_{0:t}$ and a given policy $\pi$.  

\section{Empirical validation}
\label{sec:Approach Validation}


To demonstrate the capabilities of our modeling approach, we conduct a series of experiments. We start by evaluating whether Transformer models can learn the dynamics of queueing systems. Unlike standard applications in text and vision, queueing models impose domain-specific structural constraints that are not straightforward to learn, e.g., departures cannot occur when the system is empty. In Section~\ref{sec:Markovian_systems}, we validate our approach on a multi-class M/M/n queue under different scheduling policies. We then extend the analysis to more general non-Markovian, non-stationary systems in Section~\ref{sec:complex_service_system}. Finally, we conduct a case study using real-world call center data, involving a tandem queueing network with customer abandonment in Section \ref{sec:call-center}. These experiments demonstrate the flexibility of our Transformer-based approach in modeling a broad range of queueing dynamics.

For each setting, we train the Transformer model on a dataset 
\[\mathcal{D}^{train}\equiv \left\{\left(S_1^{(j)},T_1^{(j)},E_1^{(j)} \cdots, T_{N}^{(j)}, E_{N}^{(j)} \right): 1\leq j\leq K_t \right\},\]   
where each trajectory in $\mathcal{D}^{\text{train}}$ is generated using a discrete-event simulator of the corresponding queueing system. Additional details on data generation, model architecture, and training procedures are provided in Appendix~\ref{sec:experimental-details}.

To evaluate the performance of the trained Transformer model, we use evaluation metrics tailored to each system configuration, designed to assess the model’s ability to accurately capture the underlying system dynamics. For Markovian systems, i.e., M/M/n queues, we report both the prediction loss relative to the optimal loss and distributional comparisons between empirical and true distributions for key performance measures, including inter-arrival times, service times, and waiting times. For non-Markovian systems, i.e., G/G/1 queues, we focus on distributional validation, comparing empirical and theoretical distributions for the same performance metrics. For non-stationary systems, i.e., $M_t/M/n$ queues, we employ time-dependent mean-based metrics to evaluate the model’s ability to track non-stationary temporal dynamics.

Given the generated event table data, additional system statistics, such as inter-arrival times, service times, and waiting times—can be readily extracted. See Appendix~\ref{sec:extracting_events_table} for further details.

\subsection{M/M/5 queue}
\label{sec:Markovian_systems}

In this section, we study M/M/5 queueing systems with five customer classes under various scheduling policies.

The presence of multiple customer types adds complexity to the system representation. Each event now includes both an event type (e.g., arrival or departure) and an associated customer type, resulting in composite events such as “arrival of a type 1 customer.” To model this structure, two approaches are possible: (1) assigning a unique index to each event–customer type combination, which leads to a total event space equal to the product of the number of event types and customer types; or (2) using a conditional modeling approach, where the event type is predicted first, followed by the customer type conditioned on the event.
We adopt the second approach to reduce the size of the prediction space (see Appendix~\ref{sec:experimental-details} for details). Accordingly, we report both event type prediction loss and customer type prediction loss to evaluate the model’s accuracy in capturing composite event dynamics. Our transformer is trained on 40,000 sequences.

Tables~\ref{table:M-M-5-FIFO} and~\ref{table:M-M-5-Strict} compare the average losses achieved by the trained transformed on a test set and the corresponding ‘optimal’ losses for the multi-class M/M/5 queue under First-In-First-Out (FIFO) and strict priority scheduling policies, respectively. In both cases, the Transformer model achieves prediction losses that are close to the optimal loss, indicating that it effectively learns the underlying system dynamics. For brevity, distributional comparisons of key performance metrics are provided in Appendix~\ref{sec:experimental-details}.

\begin{table}[h!]
\caption{Comparison of test-set average losses from the transformer and optimal losses from the oracle for 5-class M/M/5 queues with different arrival rates and service rates under FIFO}
\label{table:M-M-5-FIFO}
\centering
\resizebox{\textwidth}{!}{%
\begin{tabular}{|c|c|ccc|ccc|}
\hline
&    & \multicolumn{3}{c|}{Optimal loss} & \multicolumn{3}{c|}{Transformer loss}  \\ \cline{3-8}
 $\lambda$ & $\nu$    & event   &time   &customer    & event   &time   &customer          
 \\ \hline
 0.2,0.4,0.6,0.8,1.0  & 0.8,0.8,0.8,1.0,1.0        
&1.3075   &0.0299 &0.7481   
&1.3092    &0.0298   &0.7488
\\ \hline
 0.4,0.8,1.2,1.6,2.0  & 0.8,0.8,0.8,1.0,1.0        
&1.3650   &0.0091 &0.8519   
&1.3688    &0.0091   &0.8532
\\ \hline
\end{tabular}%
}
\end{table}

\begin{table}[h!]
\caption{Comparison of test-set average losses from the transformer and optimal losses from the oracle for 5-class M/M/5 queues with different arrival rates and service rates under strict priority ($1>2>\dots>5$)}
\label{table:M-M-5-Strict}
\centering
\resizebox{\textwidth}{!}{%
\begin{tabular}{|c|c|ccc|ccc|}
\hline
                  & &\multicolumn{3}{c|}{Optimal loss} &\multicolumn{3}{c|}{Transformer loss}  \\ \cline{3-8}
 $\lambda$ & $\nu$  
& event   &time   &customer    & event   &time   &customer     \\ \hline
 0.2,0.4,0.6,0.8,1.0  & 0.8,0.8,0.8,1.0,1.0     
&1.3012   &0.0302 &0.7518   
&1.3047   &0.0301   &0.7564
\\ \hline
0.4,0.8,1.2,1.6,2.0  & 0.8,0.8,0.8,1.0,1.0      
&1.3423   &0.0094 &0.8656
&1.3450    &0.0095   &0.8728
\\ \hline
\end{tabular}%
}
\end{table}

\subsection{Non-Markovian and non-stationary systems}
\label{sec:complex_service_system}


\paragraph{$G/G/1$ queue:}  We consider an $G/G/1$ queue with uniform inter-arrival and service time distributions. In this case, the distributions of the next inter-event time and event type depend on the entire history. 
Due to a lack of parametric characterization of the inter-event time distribution, we discretize it using Riemann distributions described in Section \ref{sec:operationalizing_predictive}. The transformer model is trained on 10,000 sequences.

Figure \ref{fig:G-G-1-distributions} compares the distributions of the inter-arrival times, service times, and waiting times generated by the trained transformer against the true distributions, respectively. The results show a near-exact match, indicating that the model effectively captures the underlying stochastic behavior.


 \begin{figure}[h!]
  \centering
  \begin{minipage}[b]{0.325\textwidth}   \includegraphics[width=\textwidth, height=3.5cm]{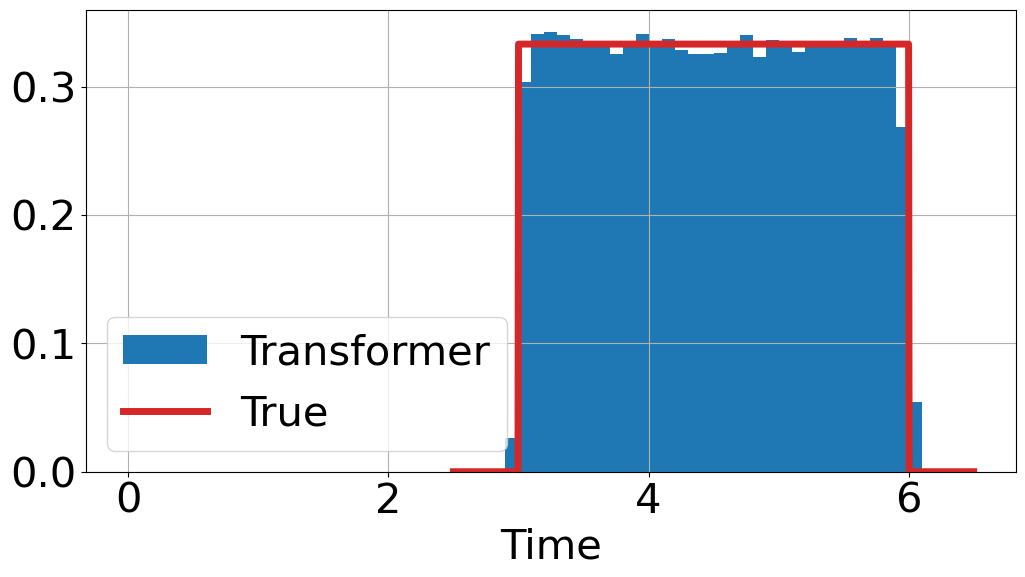}
  \centering{Inter-arrival time}
  \end{minipage}
  \hfill
  \begin{minipage}[b]{0.325\textwidth}   \includegraphics[width=\textwidth, height=3.5cm]{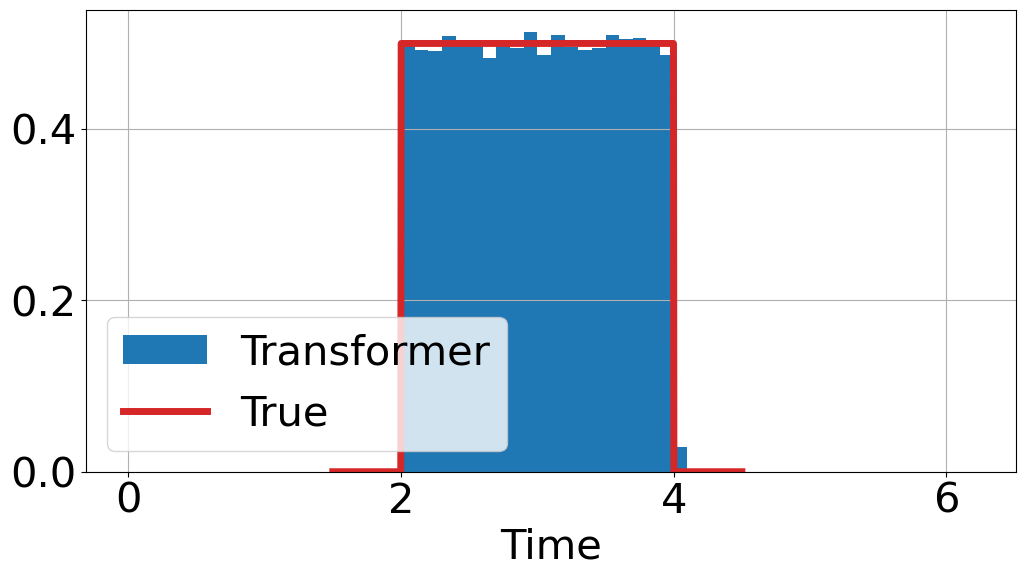}
  \centering{Service time}
  \end{minipage}
  \hfill
  \begin{minipage}[b]{0.325\textwidth}   \includegraphics[width=\textwidth, height=3.5cm]{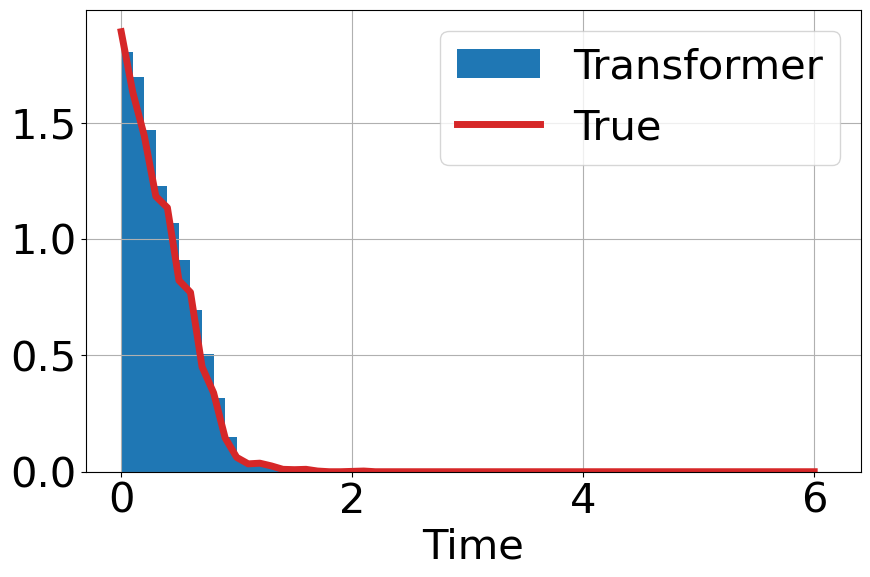}   
  \centering{Positive waiting time}
    \end{minipage}
    \caption{\textbf{Non-Markovian service system:} Comparison of performance measure distributions predicted by sequence model (transformer) and true distributions (derived using the knowledge of underlying queuing network) for a G/G/1 queue. Performance measures - Inter arrival time, Service time, Positive waiting time.}
    \label{fig:G-G-1-distributions}
\end{figure}

\paragraph{$M_t/M/n$ queue} 
We consider an $M_t/M/n$ queueing system in which arrivals follow a time-varying Poisson process and service times are exponentially distributed. The system is staffed with 11 servers, each operating at a service rate of 1.6 customers per hour. The arrival rate $\lambda(t)$ varies over a 17-hour period according to the hourly profile: $[ 8,  8,  8,  8,  8, 14, 15, 16, 17, 18, 19, 18, 17, 16, 15, 11, 11]$ customers per hour. Our transformer model is trained on 40,000 sequences.

Figure \ref{fig:MT-M-N-distributions} compares the average inter-arrival time, service time, and waiting time across different hours over the 17-hour period, as generated by the trained transformer versus the true discrete-event simulator. The averages are estimated using 200 trajectories of length 400. We again observe a strong alignment between the two.

 \begin{figure}[h!]
  \centering
  \begin{minipage}[b]{0.325\textwidth}   \includegraphics[width=\textwidth, height=3.5cm]{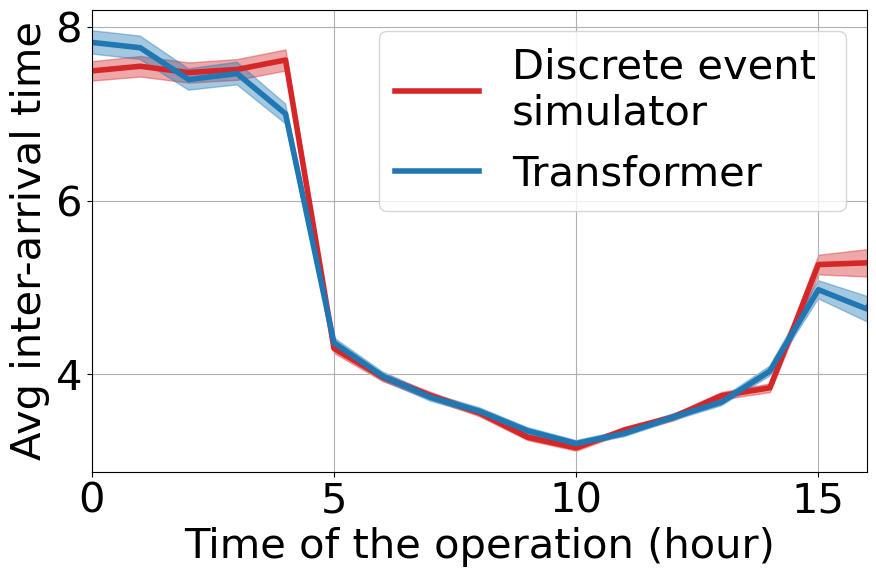}
  \centering{Average inter-arrival time}
  \end{minipage}
  \hfill
  \begin{minipage}[b]{0.325\textwidth}   \includegraphics[width=\textwidth, height=3.5cm]{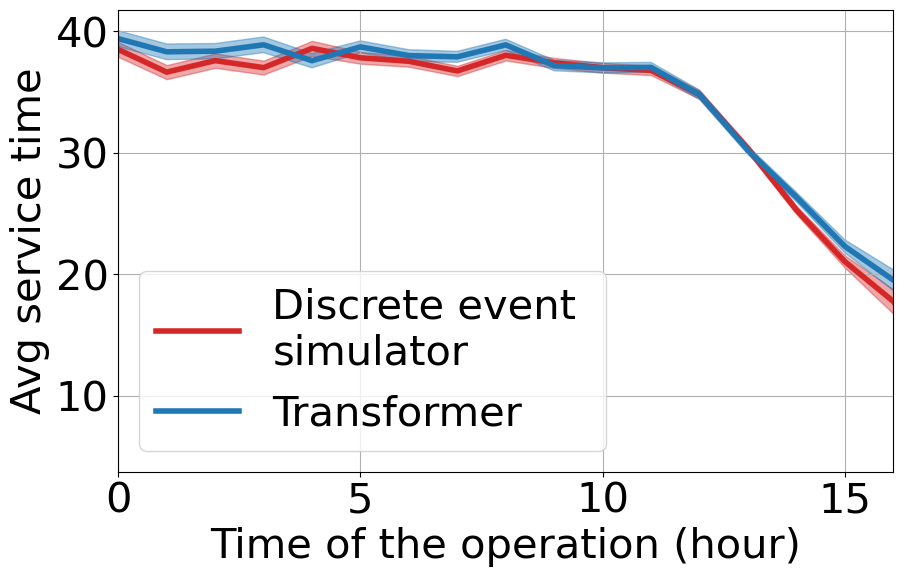}
  \centering{Average service time}
  \end{minipage}
  \hfill
  \begin{minipage}[b]{0.325\textwidth}   \includegraphics[width=\textwidth, height=3.5cm]{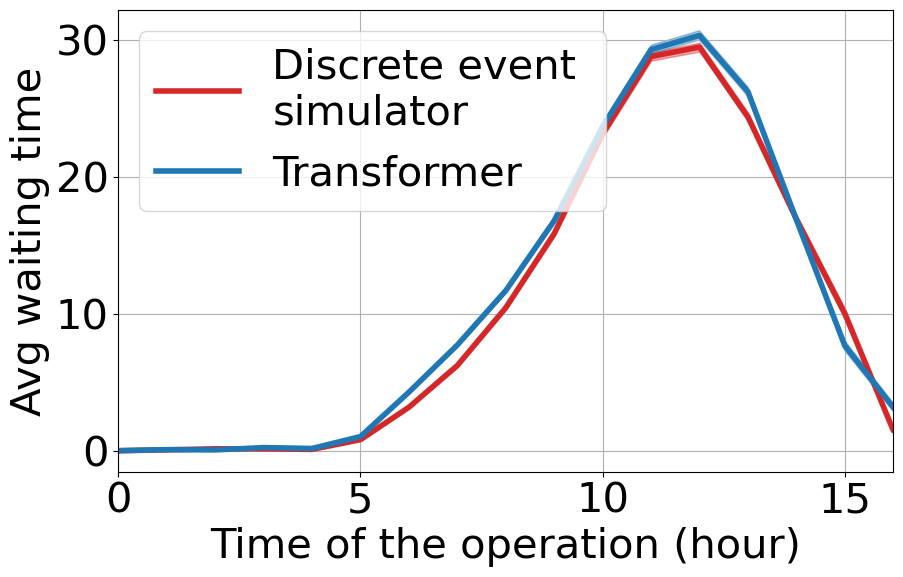}
  \centering{Average waiting time}
    \end{minipage}
    \caption{\textbf{Service system with non-stationary arrival rates:} Comparing trained transformer and a discrete event simulator with knowledge of the queueing network.  Graphs plot average performance measures v/s hour of the service system operation  (for e.g. average inter-arrival time, average service time, average waiting time)  }
    \label{fig:MT-M-N-distributions}
\end{figure}


\subsection{Call Center}
\label{sec:call-center}
We build a call center simulator using data from the call center of an Israeli Bank \citep{seelab-report}.
The call center follows a multi-stage service process with the following characteristics (see Figure \ref{fig:call-center-service-system} for a pictorial illustration):

\begin{itemize}
\item \textbf{Customer Types:} There are six distinct customer types, each characterized by different arrival rates, service time distributions, patience time distributions, and priority levels. 

\item  \textbf{Customer Arrival:} Customers arrive at the system according to Poisson processes. 

\item \textbf{VRU Servers:} Upon arrival, each customer is initially routed to a pool of Voice Response Unit (VRU) servers. This stage typically involves automated interactions such as navigating menus or providing account verification. 
The service times at VRU servers are sampled from the empirical distribution, without any parametric assumption. 

\item \textbf{Queueing and Abandonment:} After completing the VRU stage, customers proceed to the second stage, where they are served by human agents. If no agent is immediately available, customers wait in a queue.
While waiting, customers may abandon the queue due to excessive wait times.
Abandonment behavior is modeled via exponential patience-time distributions, where the rates are calibrated from the data.



\item \textbf{Human Servers:} Customers who do not abandon the queue are eventually routed to human agents for service. This stage involves more complex and personalized interactions. 
Service times at this stage are sampled from the empirical distribution, without any parametric assumption. 

\item \textbf{Customer Departure:} After being served by a human agent, customers exit the system.  

\end{itemize}

 \begin{figure}[h!]
 \centering
  \includegraphics[width=0.85\textwidth, height=2.75cm]{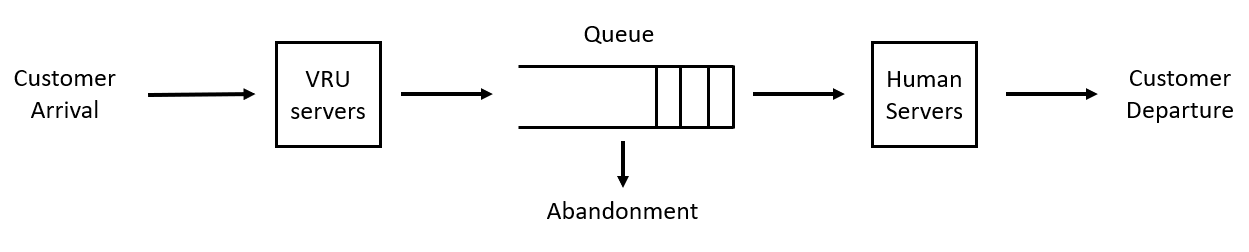}   
    \caption{Call-center service system.}
    \label{fig:call-center-service-system}
\end{figure}

 A simulator is necessary in this setting due to the limited size of the call center dataset, which is insufficient for training purposes. Note that building this simulator requires substantial queueing expertise. See Appendix \ref{sec:experimental-details} for details about the model building and calibration.


Our simulation framework incorporates nine different event types:
1) customer arrival, (2) VRU service completion, (3) queue abandonment, and (4-9) customer departure from human servers. Event (4-9) comprises six events since our model incorporates six distinct human servers. As before, we use the conditional modeling approach, where we first predict the event type and then, conditioned on that event, predict the customer type. Our transformer is trained on 40,000 sequences.

\begin{figure}
\centering
   \begin{minipage}[b]{0.18\textwidth}   
  \end{minipage}
  \hfill
  \centering
  \begin{minipage}[b]{0.33\textwidth}   \includegraphics[width=\textwidth, height=3.5cm]{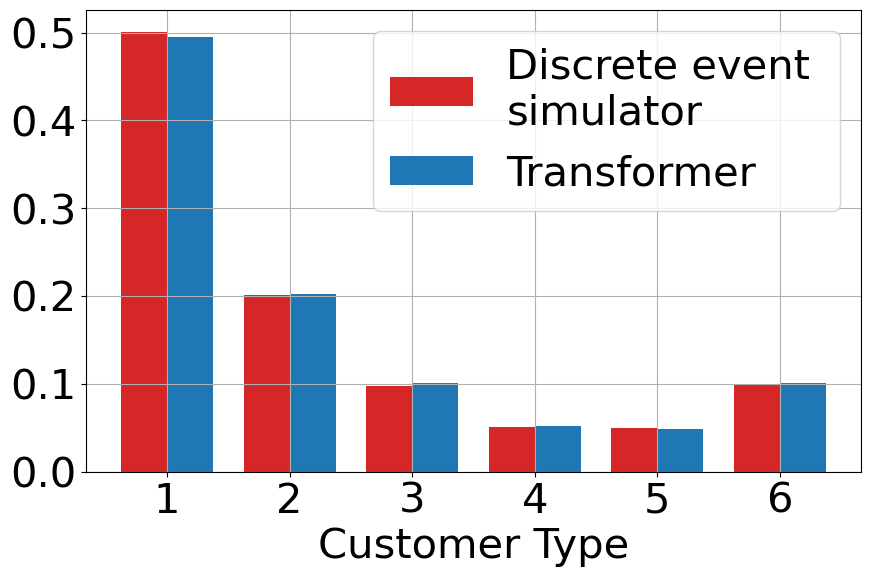}
  \centering
  {Customer type}
  \end{minipage}
  \hfill
  \begin{minipage}[b]{0.33\textwidth}   \includegraphics[width=\textwidth, height=3.5cm]{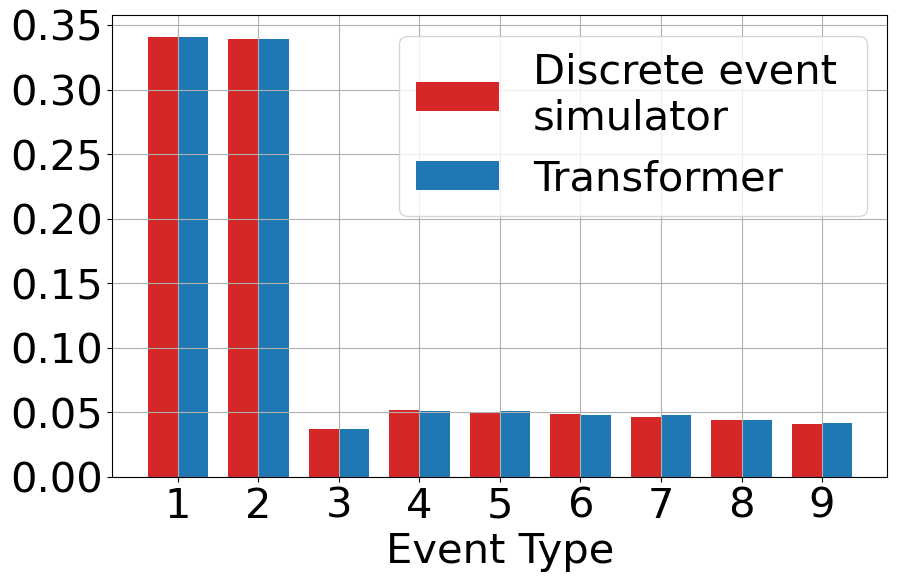}
  \centering
  {Event type}
  \end{minipage}
  \hfill
  \begin{minipage}[b]{0.18\textwidth}   
  \end{minipage}
  \caption{\textbf{Call-center (summary statistics):} Comparing ratios of different customers and events across trajectories from the sequence model (transformer) and discrete event simulator.
  }
    \label{fig:call-center-summary-statistics}
\end{figure}

 \begin{figure}
  \centering
    \begin{minipage}[b]{0.18\textwidth}
    \end{minipage}
    \hfill
  \begin{minipage}[b]{0.325\textwidth}   \includegraphics[width=\textwidth, height=3.5cm]{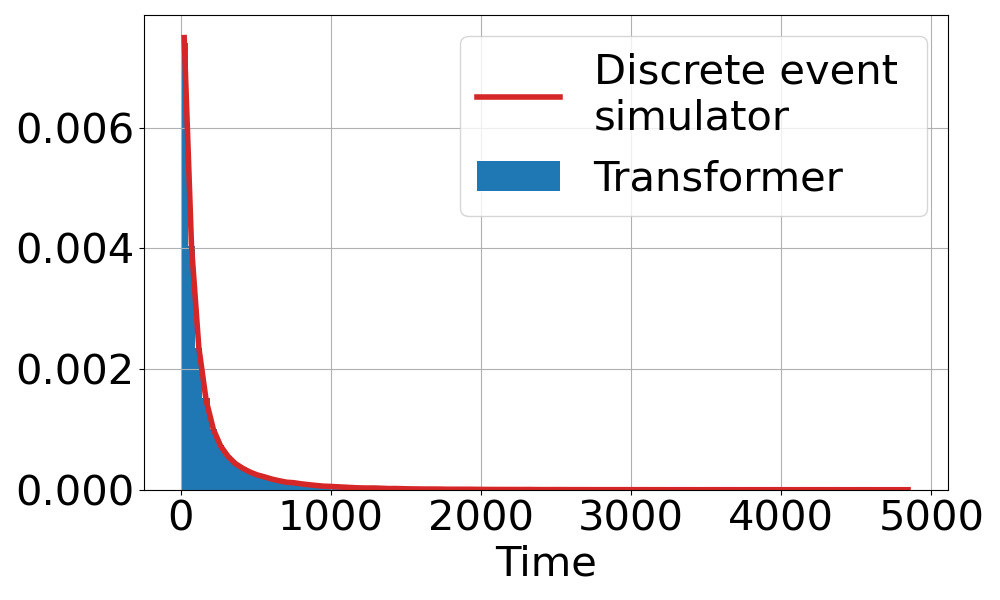}
  \centering{Inter-arrival time}
  \end{minipage}
  \hfill
  \begin{minipage}[b]{0.325\textwidth}   \includegraphics[width=\textwidth, height=3.5cm]{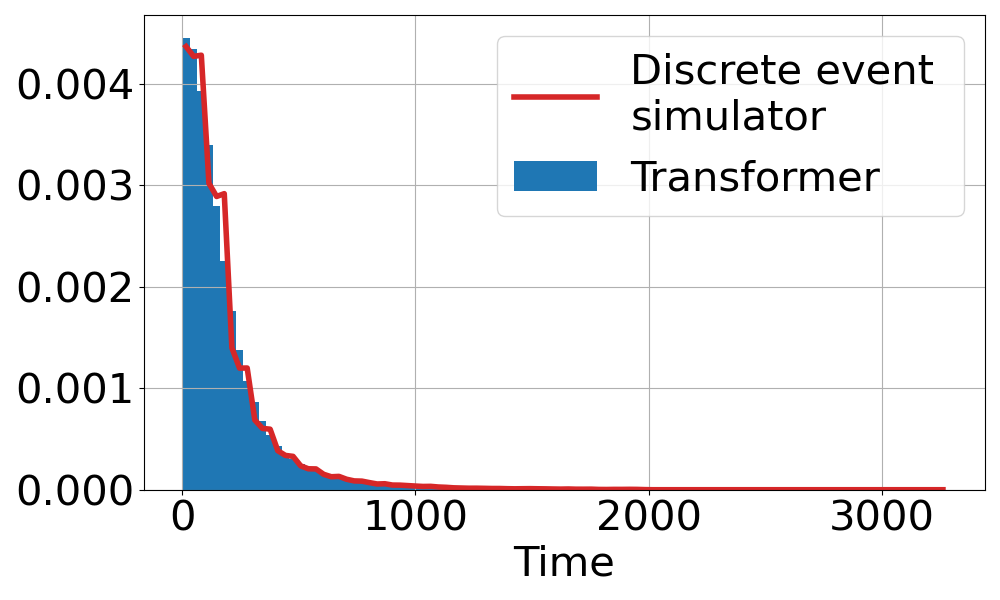}
  \centering{Service time}
  \end{minipage}
  \hfill
  \begin{minipage}[b]{0.18\textwidth}
    \end{minipage}
    \caption{\textbf{Call-center:} Comparison of performance measure distributions predicted by sequence model (transformer) and discrete event simulator (derived using the knowledge of underlying queuing network) for a call-center. Performance measures show inter-arrival times and service times pooled across all customers.}
    \label{fig:See-lab-average-distributions}
\end{figure}

Figure \ref{fig:call-center-summary-statistics} compares the distributions of customer types and event types generated by the trained transformer and the call center simulator. 
We observe a strong match between the two distributions. In addition, Figure \ref{fig:See-lab-average-distributions} compares the distributions of inter-arrival times and service times over all customers generated by the transformer and the call center simulator. Figure \ref{fig:representative-waiting-time-distributions-see-lab} compares the class-dependent average waiting time distributions for three representative customer classes (1, 4, and 6) generated by the transformer versus the call center simulator (Additional results are presented in Appendix \ref{sec:experimental-details}).
We again observe a strong match between the two.
These results demonstrate the trained Transformer model’s ability to accurately capture and reproduce the dynamics of the underlying system, highlighting its effectiveness as a high-fidelity queueing simulator.



 \begin{figure}
  \centering
  \begin{minipage}[b]{0.325\textwidth}   \includegraphics[width=\textwidth, height=3.5cm]{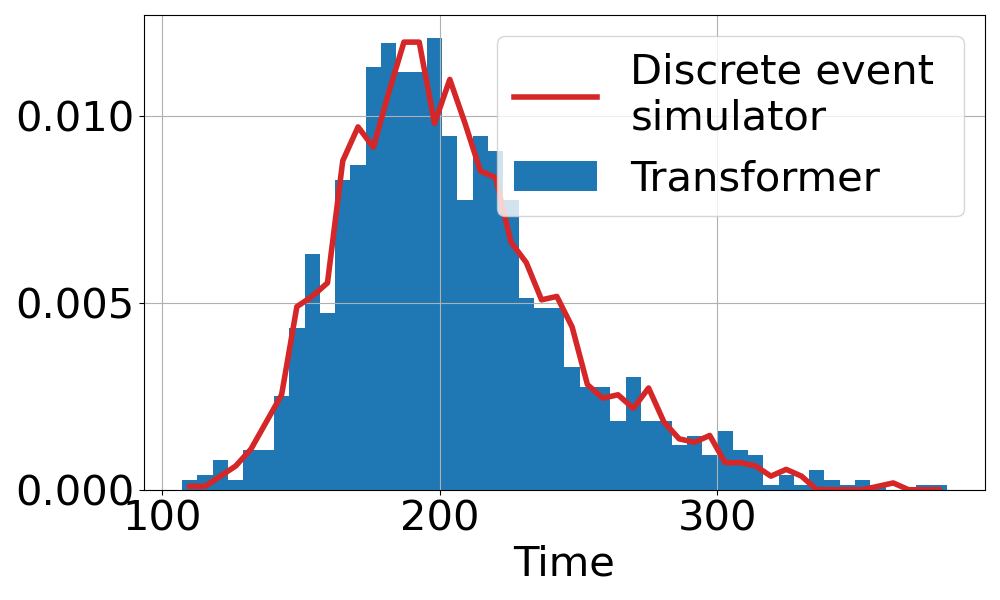}
  \centering
  {Customer 1}
  \end{minipage}
  \hfill
  \begin{minipage}[b]{0.325\textwidth}   \includegraphics[width=\textwidth, height=3.5cm]{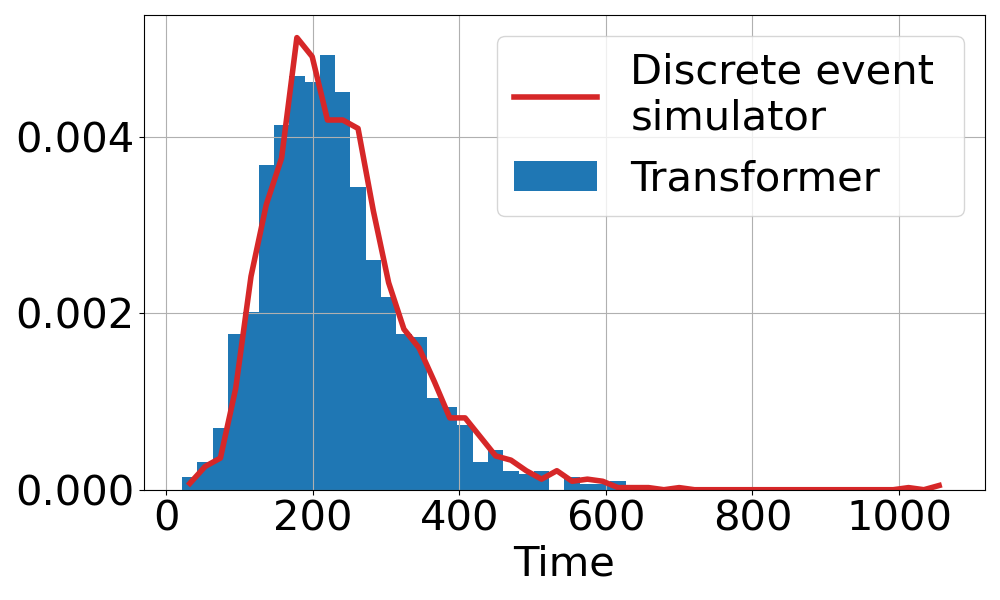}
  \centering
  {Customer 4}
  \end{minipage}
  \hfill
  \begin{minipage}[b]{0.325\textwidth}   \includegraphics[width=\textwidth, height=3.5cm]{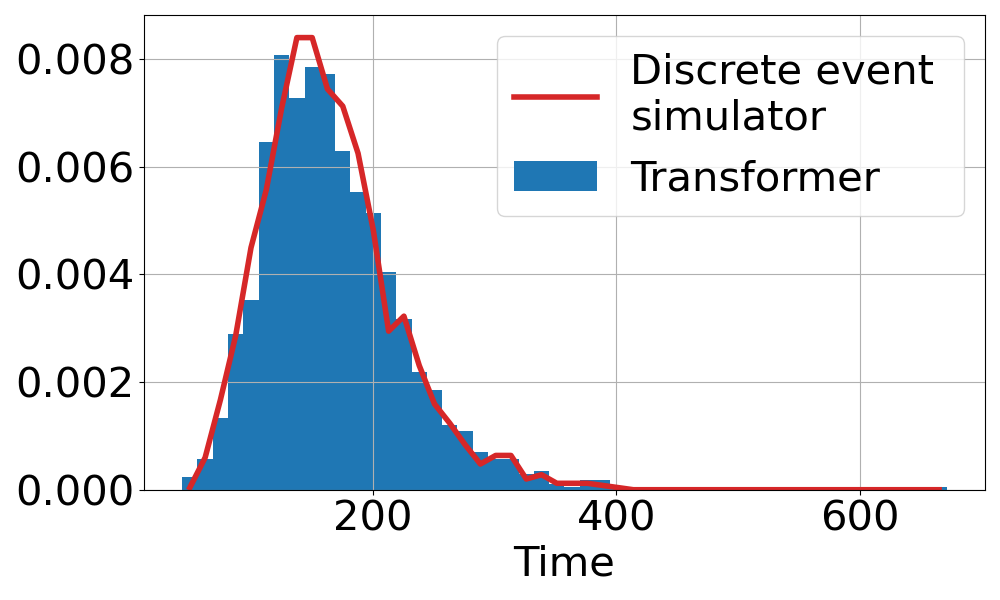}
  \centering
  {Customer 6}
    \end{minipage}
    \hfill
    \caption{\textbf{Call-center (Average waiting time distributions for different customers):} Comparison of average waiting time distributions predicted by sequence model (transformer) and discrete event simulator (derived using the knowledge of underlying queuing network) for a call-center.}
    \label{fig:representative-waiting-time-distributions-see-lab}
\end{figure}




\section{Simulating counterfactuals}
\label{sec:simulating_counterfactuals}

The ability to simulate counterfactual scenarios is a central strength of stochastic simulation and plays a critical role in operational decision-making. By enabling exploration of `what-if' scenarios, these simulations support data-driven policy optimization. In this section, we assess whether sequence models can effectively generate meaningful counterfactual simulations.

We examine a service system analogous to a hospital emergency department, where the service operator must determine the optimal number of servers $N$ (e.g., nurses) to deploy for a 12-hour shift. The arrival rate exhibits daily variation and is modeled as $c+\lambda(t)$, where $c$ represents day-to-day variability and $\lambda(t)$ captures time-varying intraday arrival patterns.  The system is modeled as an $M_t/M/N$ queue with $N\in \{2,3, \ldots, 20\}$, and the service rate is 3.5 patients per hour.
The intraday baseline $\lambda(t)$ follows the hourly pattern   $[8,8,8,8,8,14,15,16,17,18,19,18]$ patients per hour. Staffing levels, once chosen, remain fixed for a 12-hour shift.

We assume that the operator has access to the daily value of $c$ a priori through a prediction model. Given this estimate of $c$, the trained Transformer (sequence model) must simulate the system across multiple values of $N$ to determine the optimal $N$ to deploy for that shift. Consequently, this application requires the Transformer model to generate system trajectories under diverse operational conditions characterized by varying values of $c$ and $N$.

 \begin{figure}[t]
  \centering
  \begin{minipage}[b]{0.325\textwidth}   \includegraphics[width=\textwidth, height=3.5cm]{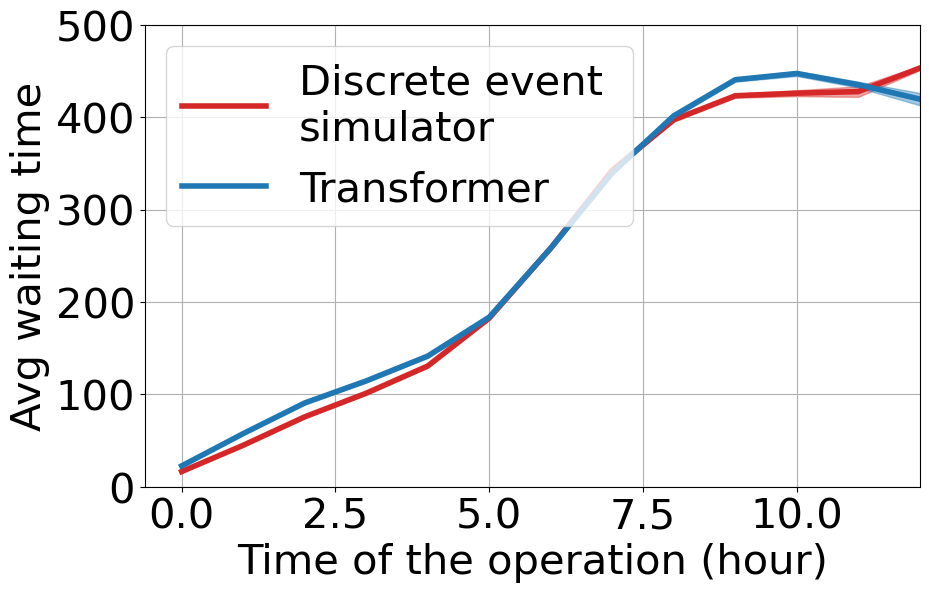}
  \centering{{{$N=2$, $c=2$ \\  Average waiting time}}}
  \end{minipage}
  \hfill
  \begin{minipage}[b]{0.325\textwidth}   \includegraphics[width=\textwidth, height=3.5cm]{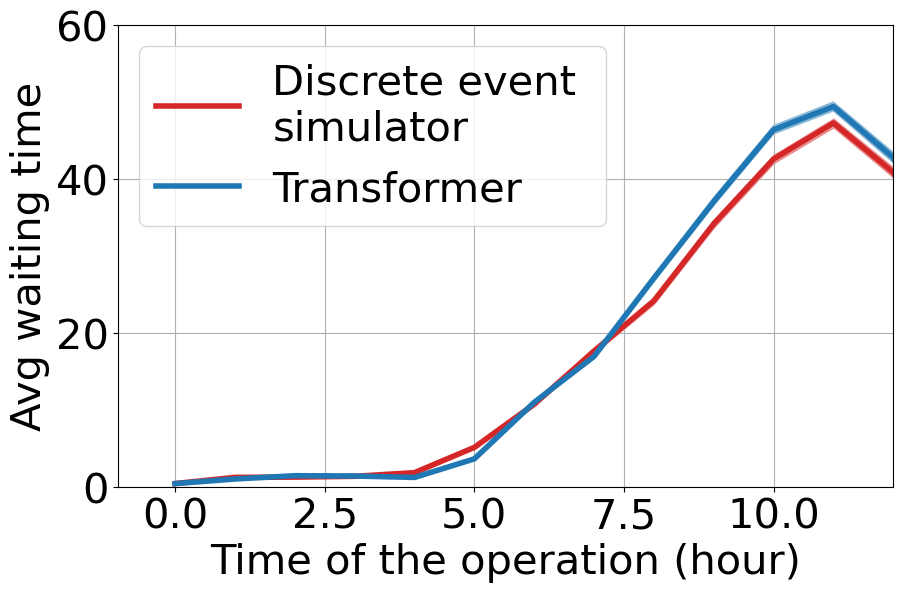}
  \centering{{{$N=5$, $c=2$ \\ Average waiting time }}}
  \end{minipage}
  \hfill
  \begin{minipage}[b]{0.325\textwidth}   \includegraphics[width=\textwidth, height=3.5cm]{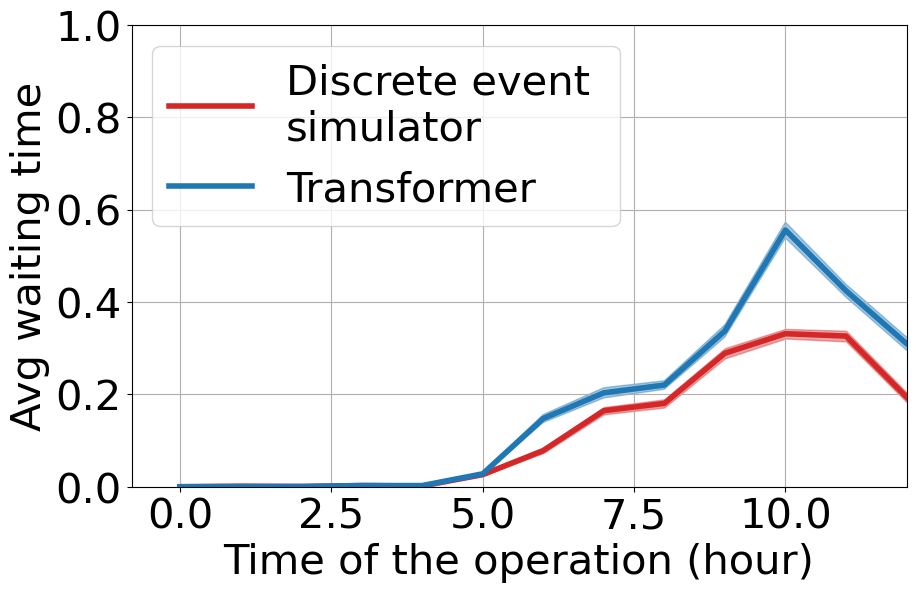}
  \centering{{{$N=10$, $c=2$ \\ Average waiting time}}}
  \end{minipage}
    \caption{\textbf{Counterfactual simulations:} Comparing trained transformer and a discrete event simulator with knowledge of the queueing network.  Graphs plot average waiting time v/s hour of the service system operation.  }
    \label{fig:n_c_waiting-time-distributions}
\end{figure}

 \begin{figure}[t]
  \centering
  \begin{minipage}[b]{0.18\textwidth}  
  \end{minipage}
  \hfill
  \begin{minipage}[b]{0.325\textwidth}   \includegraphics[width=\textwidth, height=3.5cm]{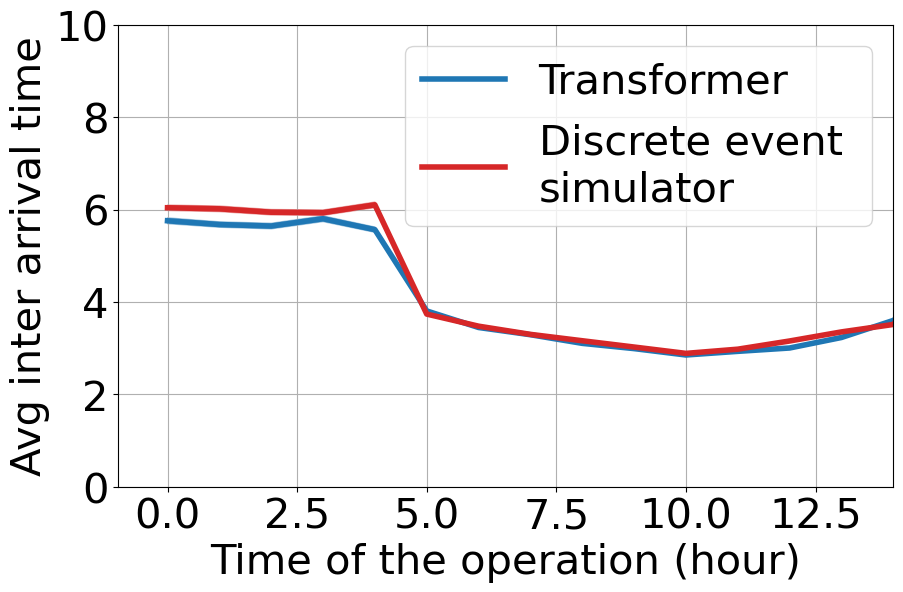}
   \centering{{{$N=2$, $c=2$ \\ Average inter-arrival time}}}
  \end{minipage}
  \hfill
  \begin{minipage}[b]{0.325\textwidth}   \includegraphics[width=\textwidth, height=3.5cm]{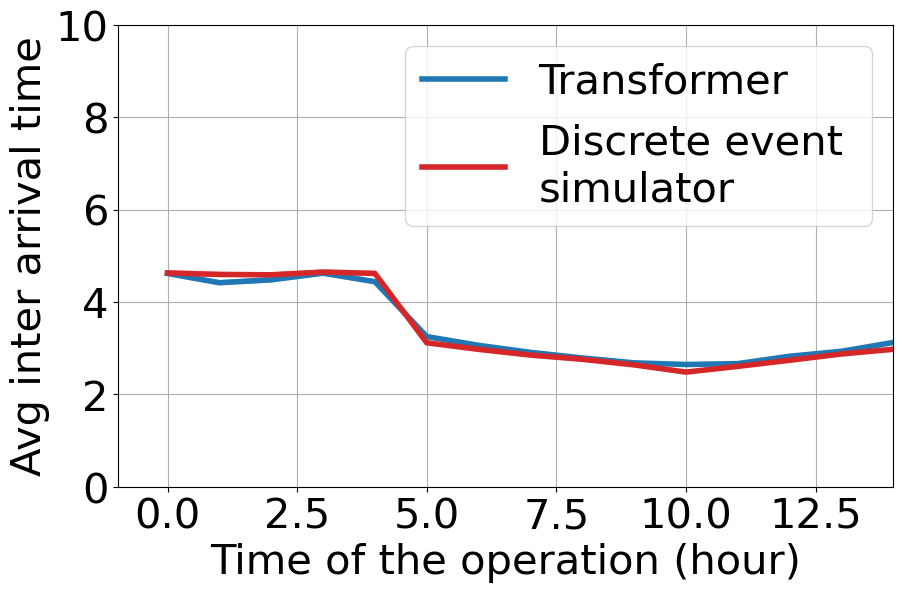}
   \centering{{{$N=2$, $c=5$ \\ Average inter-arrival time}}}
   \end{minipage}
   \hfill
   \begin{minipage}[b]{0.18\textwidth}   
  \end{minipage}
    \caption{Comparing trained transformer and a discrete event simulator with knowledge of the queueing network.  Graphs plot average inter-arrival time v/s hour of the service system operation. }
    \label{fig:n_c_arrival-time-distributions}
\end{figure}

 \begin{figure}[t]
  \centering
  \begin{minipage}[b]{0.18\textwidth}  
  \end{minipage}
  \hfill
  \begin{minipage}[b]{0.325\textwidth}   \includegraphics[width=\textwidth, height=3.5cm]{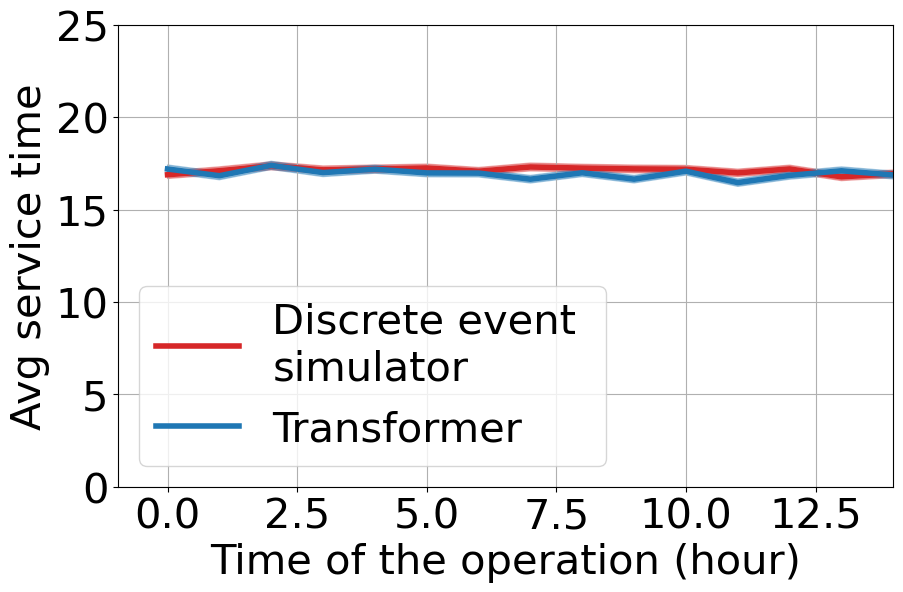}
   \centering{\small{{$N=2$, $c=2$ \\ Average service time}}}
  \end{minipage}
  \hfill
  \begin{minipage}[b]{0.325\textwidth}   \includegraphics[width=\textwidth, height=3.5cm]{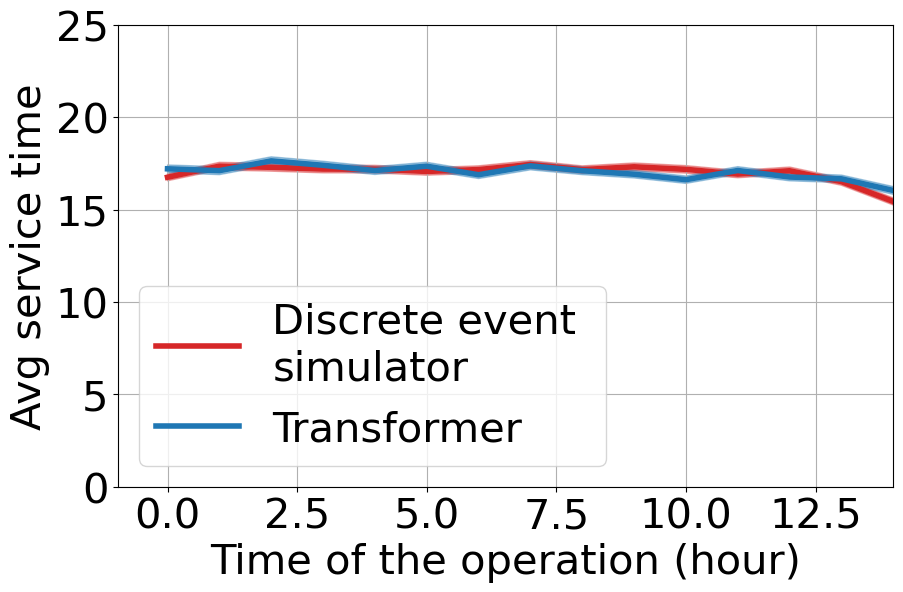}
   \centering{\small{{$N=2$, $c=5$ \\ Average service time }}}
   \end{minipage}
   \hfill
   \begin{minipage}[b]{0.18\textwidth}   
  \end{minipage}
    \caption{Comparing trained transformer and a discrete event simulator with knowledge of the queueing network.  Graphs plot average service time v/s hour of the service system operation. }
    \label{fig:n_c_service-time-distributions}
\end{figure}

The Transformer is trained on the dataset:
\[
\mathcal{D}^{train}\equiv \left\{\left(c_j, N_j,\left(S_1^{(j)},T_1^{(j)}, \cdots, E_{N}^{(j)} \right)\right): 1\leq j\leq K_t \right\},
\]
where each event sequence $\left(S_1^{(j)}, T_1^{(j)}, \dots, E_N^{(j)}\right)$  is generated via discrete event simulation using the corresponding parameter values  $c_j$ and $N_j$. The parameters $c_j$ and $N_j$ are included as inputs to the Transformer. Details on how they are incorporated into the model architecture are provided in Appendix~\ref{sec:experimental-details}. We train the Transformer on 40,000 sequences. 

In our training data, we draw $c$ uniformly from $ [-5,-1]\cup[5,8]$, while $N$ is drawn unifromly from $\{2,3,\ldots,20\}$. 
For testing, we consider the range $c\in[-1,4]$, which lies outside the training range for $c$, thereby testing the model's extrapolation capabilities.

We benchmark the transformer's output against a discrete event simulator. Figure \ref{fig:n_c_waiting-time-distributions} illustrates the time-varying average waiting times for $c=2$ and $N\in\{2,5,10\}$. The transformer's predictions demonstrate strong concordance with simulator outputs. 
Similar predictions can be generated for alternative values of $N$, enabling operators to select the most appropriate staffing level for the shift.
Figures \ref{fig:n_c_arrival-time-distributions} and \ref{fig:n_c_service-time-distributions} present analogous results for time-varying average inter-arrival and (time homogeneous) average service times, respectively, further confirming robust agreement between the transformer and discrete event simulator outputs.

Overall, our experimental findings suggest that Transformer-based sequence models can perform counterfactual simulations, supporting their use for operational decision-making.

\section{Computational and Data requirements}
\label{sec:computational_requirements}


This section provides an initial analysis of the computational and data requirements, aiming to provide practical insights for researchers and practitioners interested in applying Transformer models to similar problems. In addition, we present preliminary experiments that explore a promising research direction: the development of a foundation model for queueing networks. Such a model may be especially useful for organizations with limited datasets, as it could potentially adapt quickly through in-context learning.

\subsection{Computational Requirements} 
A common concern when training Transformers and other sequence models is their substantial computational resource requirements. However, our experiments suggest that, in this specific application domain, Transformer models can be trained with relatively modest resources, making them more practical for real-world use than often assumed.
Although both training and inference generally require access to GPUs, the need for dedicated infrastructure can be mitigated through cloud-based solutions. This not only reduces capital and maintenance costs but also improves the overall accessibility and cost-effectiveness of the proposed approach.

All training experiments were conducted on a single NVIDIA A100-SXM4 GPU with 80GB of memory. The training datasets ranged from 10,000 to 40,000 trajectories (event sequences), each consisting of 400 events. Each model was trained for 500 epochs, with total training time ranging from approximately 7 to 30 hours, depending on the dataset size. We observe that the model performance, measured by validation loss, typically stabilizes within the first 300 epochs. During training, memory usage peaked at approximately 20GB, primarily for storing intermediate activations and gradients. This represents only 25\% of the available GPU memory, indicating substantial headroom for parallelization. 

Inference was performed on a single NVIDIA A40 GPU with 48GB of memory. Computational time scaled linearly with both the input sequence length and prediction horizon. Generating 1,000 trajectories, each with 400 events, required approximately 120 minutes, with peak memory usage of 3.5GB, just 7\% of the GPU’s capacity. This low memory footprint suggests that inference tasks can be executed efficiently even on more modest hardware configurations.

Based on current cloud platform pricing (vast.ai as of May 2025), with conservative estimates of \$1.00/hour for A100 SXM4 and \$0.60/hour for A40 instances, our cost structure is highly competitive:
\begin{itemize}
    \item Single Training Run: \$10-\$30 per complete training cycle
\item Inference Generation: $<$\$2.00 per 1,000 trajectories
\end{itemize}

\noindent When leveraging parallel processing capabilities enabled by low memory utilization:
\begin{itemize}
    \item Parallel Training: \$2.50-\$7.50 per training run (4 concurrent jobs)
    \item Parallel Inference: \$0.50 per 1,000 trajectories (4 concurrent jobs)
\end{itemize}

While our experiments are limited to relatively simple modeling tasks, the observed resource efficiency suggests potential for scaling. Specifically, the modest memory and compute requirements open up the possibility of scaling model size or sequence length without requiring additional hardware, enabling more complex modeling under similar cost constraints. These preliminary findings indicate that Transformer-based stochastic modeling may offer a cost-effective and accessible alternative to traditional simulation approaches.

\subsection{Data Requirements} 
We now examine the data requirements for training our model. As operational data becomes increasingly available through advanced information systems, there is growing potential to train data-driven models at scale. Understanding how inference quality improves with training data volume is therefore key to realizing the full value of such data and guiding practical implementations.

To investigate this relationship, we conducted a series of experiments using $M/M/n$ queues, $n=1,5$, with homogeneous customer populations. The arrival rate $\lambda$ and service rate $\nu$ are set such that the system utilization satisfies $\rho = \lambda/n\nu = 0.6$. 

For each system and training dataset size, we evaluate three key performance metrics:
\textbf{(1)} Prediction loss - it is the sum of both the event prediction losses and time prediction losses as defined in Section \ref{sec:part_1_approach_validation}.  
 \textbf{(2)} Fraction of valid trajectories -- the proportion of generated trajectories that satisfy the routing and operational constraints of the queueing system (e.g., no departures from an empty system or idle servers);
 \textbf{(3)} KL divergence -- we assess the quality of our sequence model by estimating the distribution of average inter-arrival times across multiple simulated trajectories and measuring its KL divergence relative to the ground truth distribution obtained from the oracle discrete event simulator. 


Figures~\ref{fig:data_requirements_M_M_1} and~\ref{fig:data_requirements_M_M_5} present the three performance metrics for the M/M/1 and M/M/5 queues with varying training dataset size, respectively. In both cases, the Transformer's prediction loss decreases steadily as the training dataset size increases. However, the convergence behavior differs across systems:  for the M/M/1 queue, prediction loss stabilizes with approximately 1,000–2,000 training trajectories, whereas the M/M/5 queue requires 3,000–4,000 trajectories to achieve comparable performance. This gap reflects the greater structural complexity of multi-server systems, which demand more data to learn the dynamics of concurrent service and queue coordination accurately. 

Overall, we find that simple queueing systems demand only modest amounts of data, with little variation across different traffic intensities.  By contrast, our experiments reveal that data requirements increase with the number of event types. As the network’s complexity grows, so does the volume of data required.
A key direction for future work is to rigorously characterize how data requirements scale with the structural complexity of the underlying queueing network.

\begin{figure}[h!]
  \centering
  \begin{minipage}[b]{0.325\textwidth}
    \includegraphics[width=\textwidth, height=3.5cm]{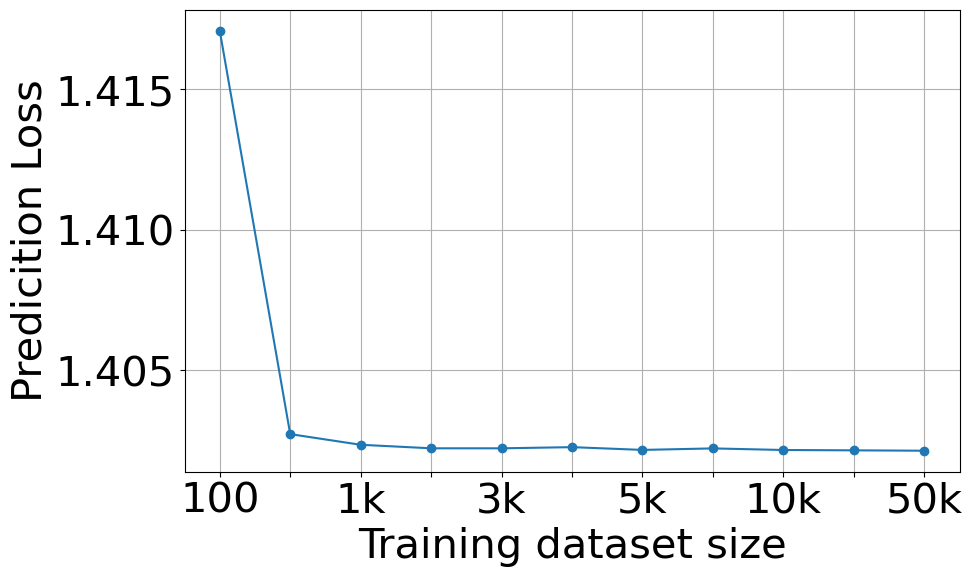}
    \centering{(a)}
  \end{minipage}
  \hfill
  \begin{minipage}[b]{0.325\textwidth}
    \includegraphics[width=\textwidth, height=3.5cm]{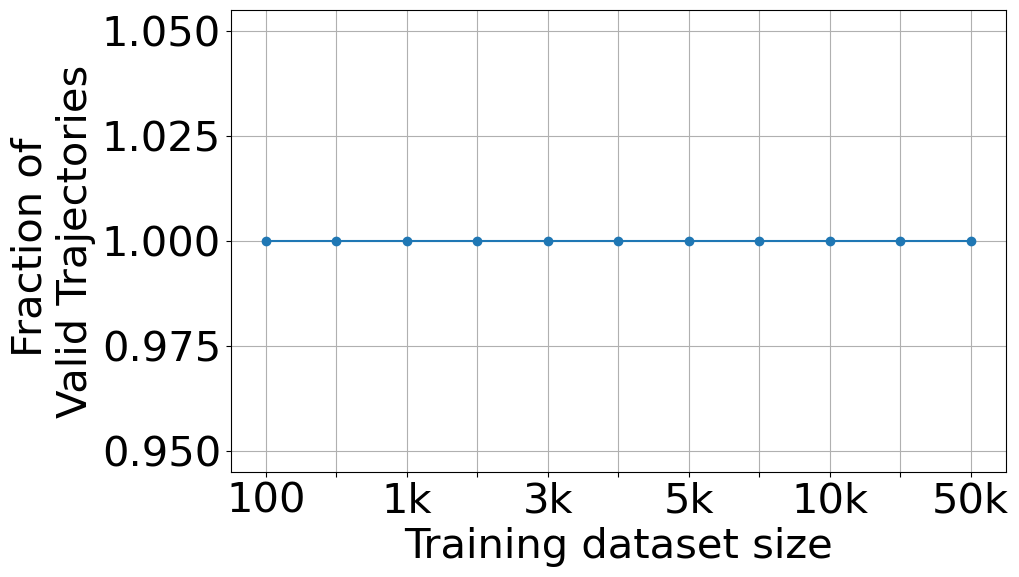}
    \centering
    {(b)}
  \end{minipage}
  \hfill
  \begin{minipage}[b]{0.325\textwidth}
    \includegraphics[width=\textwidth, height=3.5cm]{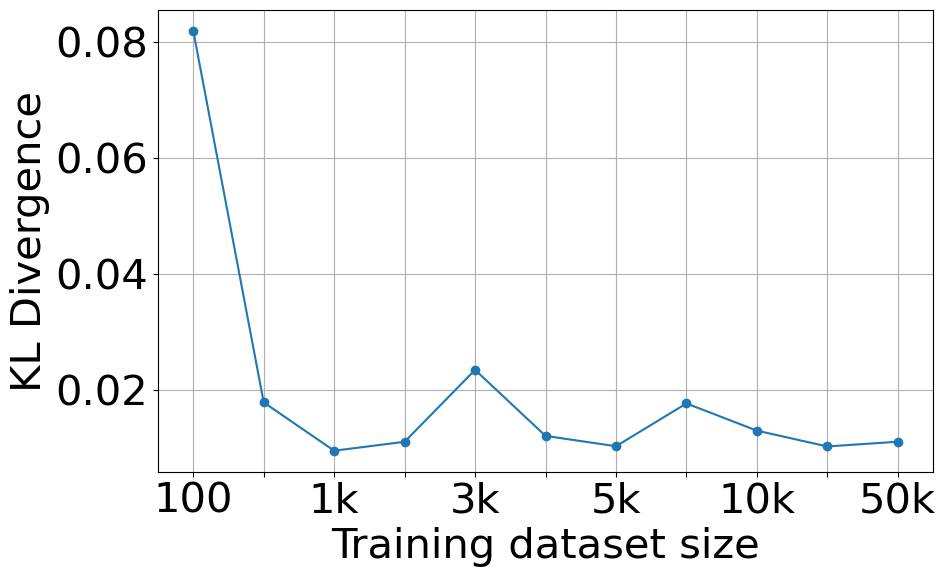}
    \centering
    {(c)}
  \end{minipage}
  \caption{\textbf{Data requirement analysis (M/M/1):}  Impact of training dataset size on sequence model performance for the M/M/1 queue: (a) prediction loss, (b) fraction of valid trajectories, and (c) KL divergence between average arrival time distributions predicted by the transformer model and those generated by the reference discrete event simulator.}
  \label{fig:data_requirements_M_M_1}
\end{figure}

\begin{figure}[h!]
 \centering
  \begin{minipage}[b]{0.325\textwidth}
    \includegraphics[width=\textwidth, height=3.5cm]{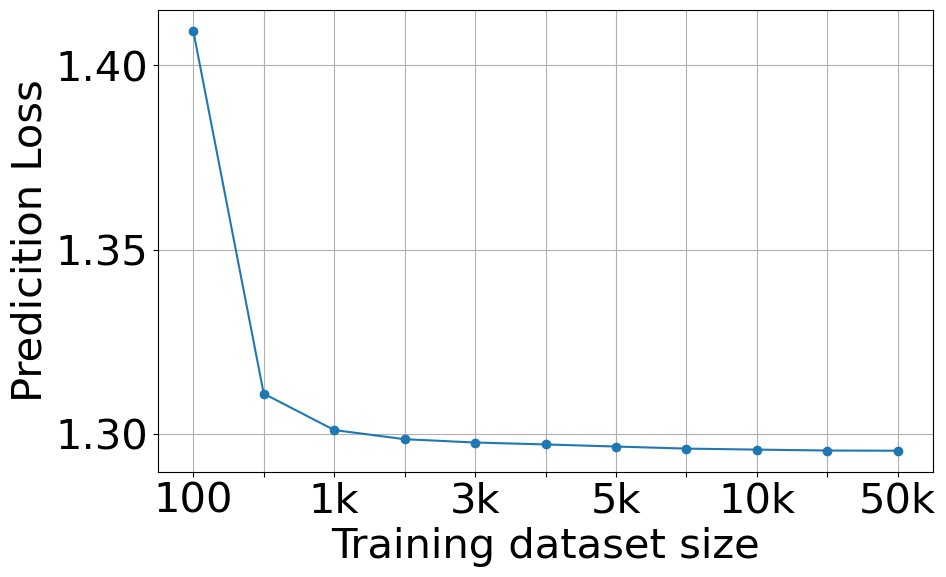}
    \centering{(a)}
  \end{minipage}
  \hfill
  \begin{minipage}[b]{0.325\textwidth}
    \includegraphics[width=\textwidth, height=3.5cm]{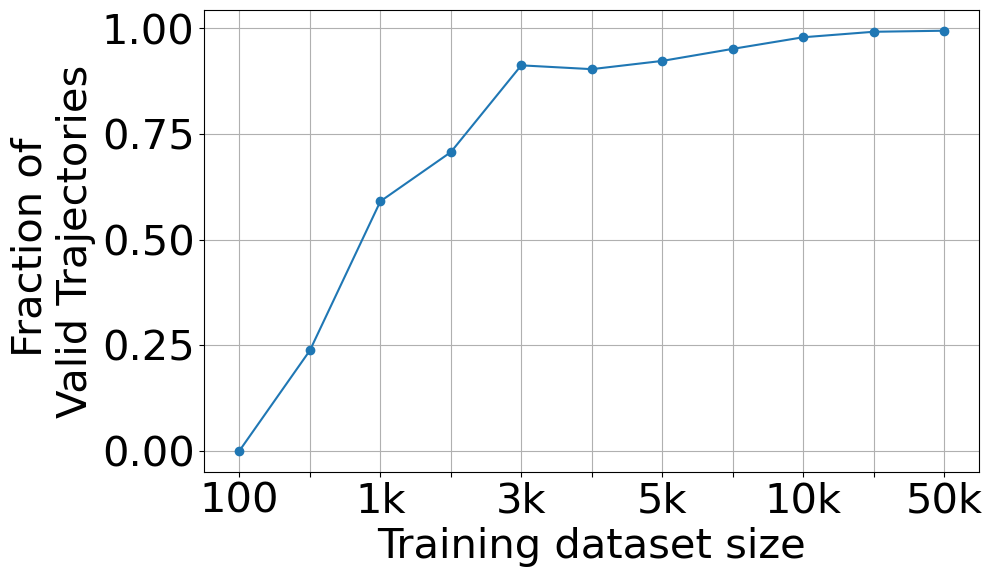}
    \centering
    {(b)}
  \end{minipage}
  \hfill
  \begin{minipage}[b]{0.325\textwidth}
    \includegraphics[width=\textwidth, height=3.5cm]{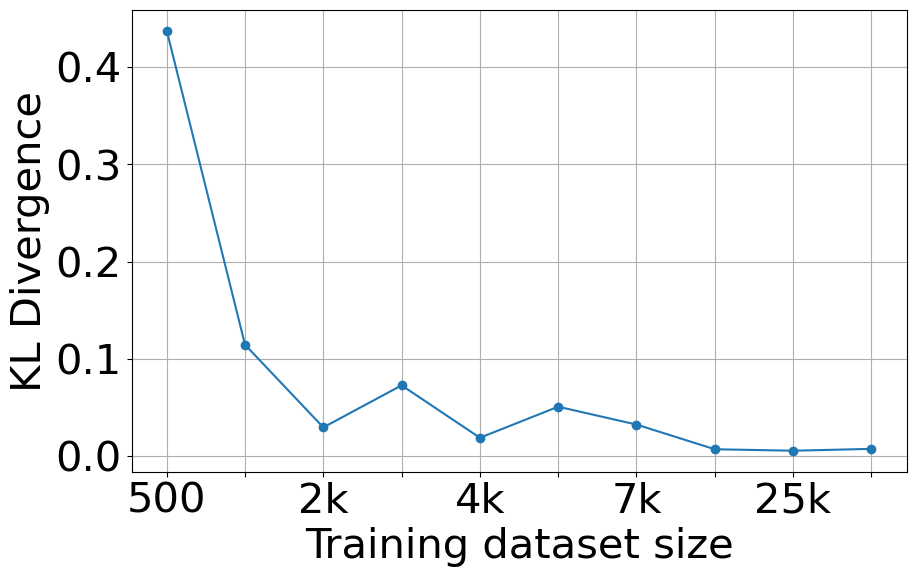}
    \centering
    {(c)}
  \end{minipage}
  \caption{\textbf{Data requirement analysis (M/M/5):}  Impact of training dataset size on sequence model performance for the M/M/5 queue: (a) prediction loss, (b) fraction of valid trajectories, and (c) KL divergence between average arrival time distributions predicted by the transformer model and those generated by the reference discrete event simulator.}
  \label{fig:data_requirements_M_M_5}
\end{figure}


\subsection{Towards Foundational Models} 
In data‑scarce settings, a promising strategy is to pre‑train a foundation sequence model on synthetic queueing data and then adapt it to new environments through in‑context learning. We explore this possibility in this section.

Recognizing the broader trend in AI toward foundation models, i.e., pretrained systems that generalize across multiple tasks, we investigate whether a similar paradigm could apply to queueing models. We present preliminary yet encouraging results. In particular, a Transformer model trained on a set of queueing networks exhibits the ability to adapt to input trajectories from any of these networks, indicating that in-context learning is a viable strategy in this setting.

We consider four distinct queueing models, each consisting of three nodes (or servers) connected through different network topologies (see Figure \ref{fig:different_queueing_models}).Training data is generated by simulating these models, and the Transformer is trained on the dataset
\[\mathcal{D}^{train}\equiv \left\{\left(S_1^{(j)},T_1^{(j)}, \cdots, E_{N}^{(j)} \right): 1\leq j\leq K_t \right\},\] 
where each event sequence $\left(S_1^{(j)}, T_1^{(j)}, \dots, E_N^{(j)}\right)$ is sampled from one of the four queueing networks, chosen uniformly at random. Across these models, there are nine distinct event types: arrivals to servers 1, 2, and 3; routing between nodes 1-2, 1-3, and 2-3; and departures from servers 1, 2, and 3. Each event type is encoded with a unique index.

To evaluate the model's in-context learning ability, we generate a historical trajectory $\mathcal{H}_n$
from one of the queueing networks and test whether the transformer can generate a valid continuation $\mathcal{H}_{n:N}$
that respects the routing and operational constraints of the originating network. Figure \ref{fig:fraction_correct} reports the fraction of correctly inferred network types as a function of the input trajectory length $n\in[1,100]$,
with each generated trajectory extended to a total length of $N=400$.

 \begin{figure}[h!]
    \centering
    \includegraphics[width=\textwidth]{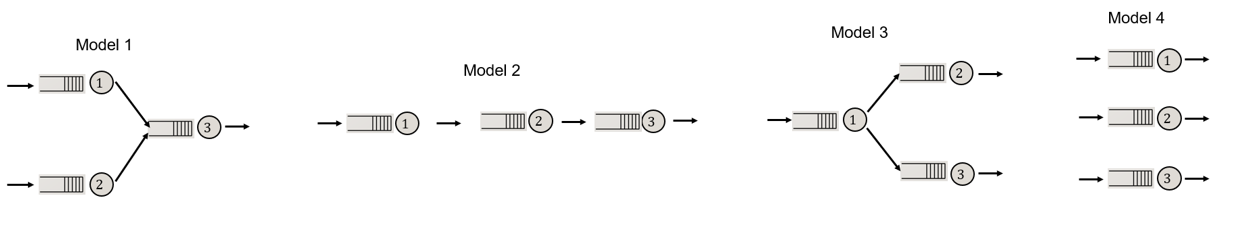}
    \caption{Different queueing models considered}
    \label{fig:different_queueing_models}
\end{figure}

\begin{figure}[h!]
    \centering
    \includegraphics[width=0.35\textwidth]{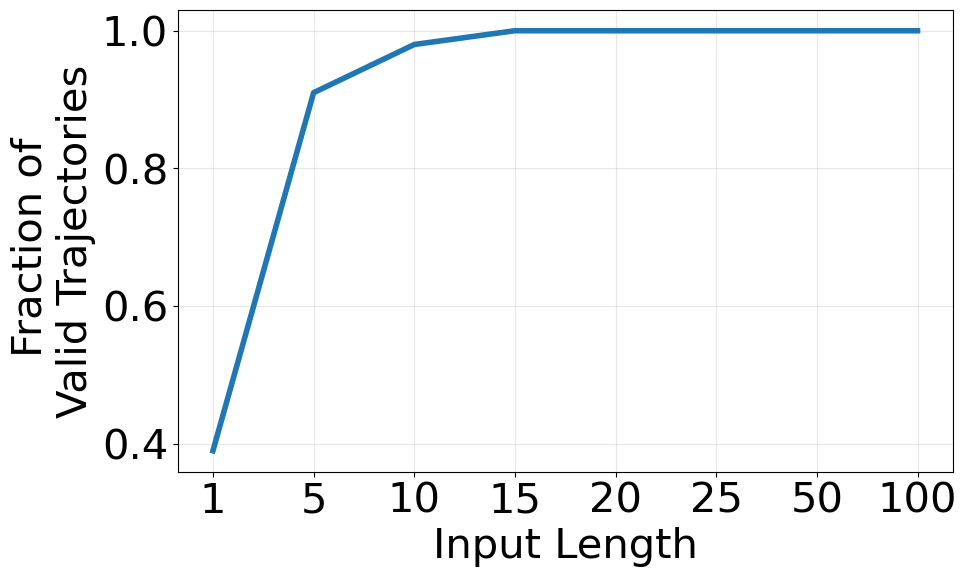}
    \caption{Fraction of times, transformer generated the trajectory corresponding to the correct input model trajectory. }
    \label{fig:fraction_correct}
\end{figure}

The results show that with as few as 10–15 initial events, the transformer consistently generates trajectories corresponding to the correct queueing model. This behavior suggests that the model can quickly infer the underlying structure from a short context and align its future predictions accordingly. While this may not be entirely unexpected -- since early events often include network-specific patterns or transitions -- it is nonetheless promising to observe such adaptation in practice. These findings suggest developing foundation models for queueing networks can be an interesting future research direction. Such foundation models could broaden access to stochastic modeling by enabling general-purpose, pretrained tools that adapt effectively across systems, even for systems with limited data.

\section{Conclusion}
\label{sec:conclusion}

In this paper, we propose a novel framework for stochastic modeling that leverages autoregressive sequence models to learn system dynamics directly from event-stream data, thereby lowering the barrier to access for sophisticated stochastic modeling tools. By recasting the traditional expert-driven process of specifying arrival processes, service mechanisms, and routing logic as a sequence distribution learning problem, our approach enables practitioners without queueing theory expertise to construct high-fidelity simulators using modern AI tools and readily available operational data. Our numerical experiments across diverse queueing systems highlight the flexibility of our method, demonstrating its effectiveness for simulation, uncertainty quantification, and counterfactual analysis.

Looking forward, our work opens several promising research directions. One is the development of foundation models for queueing systems that can adapt to new service environments with limited data. Another is the integration of policy optimization frameworks that utilize the learned simulators for automated decision-optimization. As information systems continue to generate increasingly rich event-stream data and AI tools become more accessible, we anticipate that data-driven approaches like ours will play a central role in narrowing the gap between queueing theory and practical service system management. Ultimately, this may facilitate broader adoption of stochastic modeling tools across a wide range of service domains.



\newpage

\bibliographystyle{abbrvnat}

\ifdefined\useorstyle
\setlength{\bibsep}{.0em}
\else
\setlength{\bibsep}{.7em}
\fi



  



\newpage
\ECSwitch

\ECHead{Appendix}

\section{DeFinetti's Partial Exchangeability}
\label{sec:Definneti_partial_exchangeability}
In this section, we state the DeFinetti's partial exchangeability for the continuous-time Markov chains.  Let $\mathcal S$ be a countable state space. Let $G$ be the set of functions from $[0,\infty)$ to $\mathcal S$. Let $\{Y(t):t\in [0,\infty)\}:= Y_{0:\infty}$ be a coordinate process on $G$.    Further, let $\mathcal{G}$ be the product $\sigma-$field on $G$. 
Fix $y_0 \in \mathcal S$, which will be the starting state. Consider $\Pi$ a set of probabilities $P$ over $\mathcal{G}$ with the following property:

 \textit{Let $P \in \Pi$ if and only if $P$ is a stochastic semigroup on a subset $ {S}_P$ of $ S$, with ${S}_P$ a single recurrent class of stable states and $y_0 \in \mathcal{S}_P$.}

Clearly, $\Pi$ is a standard Borel space.  Let $P_{y_0}$ make $\{Y(t)\}$ a markov chain with stationary transition $P$, starting from $y_0$. Thus $P_{y_0}$ is a probability on $\mathcal G$.

\noindent

\begin{definition} [Partial Exchangeable Stochastic process]
 Let $\mathcal{P}$ be a probability over $\mathcal G$ such that:
    \begin{enumerate}
        \item $\mathcal{P} (Y_0=y_0) =1$, and
        \item $\{{Y}_t\}$ has no fixed points of discontinuity, and
        \item $\mathcal{P} \{{Y}_n = y_0 ~ for ~  infintely ~ many ~ integers ~ n\} =1$ 
        \item for each $h>0$, the $\mathcal{P}$-law of $\{{Y}_{nh}:n=1,2,...\}$ depends only on transition counts.
    \end{enumerate}
    \label{def:definneti_assumption}
\end{definition}

Let $h = \frac{1}{2^k}$ for some $k =0,1,...$. Let $\mathcal{G}_h$ be the tail $\sigma-$field of $\{{Y}_{nh}:n=0,1,...\}$. Let $\mathcal{G}_0$ be the $\sigma-$field spanned by $ \cup_h \mathcal{G}_h$.

\begin{theorem}[De Finetti's Partial Exchangeability] If $\mathcal{P}$ satisfy definition \ref{def:definneti_assumption} then there exists a set $G_0 \in \mathcal{G}_0$ with $\mathcal{P}(G_0)=1$ and for each $\omega \in G_0$, there is a stochastic semigroup $\alpha_{\omega}$ with state space $S_{\omega}$ such that 
    \begin{enumerate}
        \item $y_0 \in S_\omega $
        \item $S_\omega$ consists of one recurrent class of state relative to $\alpha_{\omega}$, and
        \item Given $\mathcal{G}_0$, process $\{{Y}_t\}$ is conditionally Markov with stationary transitions $\alpha_{\omega}$ starting from $y_0$.
    \end{enumerate}

    The distribution $\tilde{\mu}$ of $\{\alpha_ \omega:\omega \in G_0\}$ is unique and $\mathcal{P} = \int \alpha \tilde{\mu} (d\alpha)$.
    \label{thm:definneti_thm}
\end{theorem}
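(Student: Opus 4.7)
The plan is to reduce the continuous-time statement to Diaconis and Freedman's discrete-time De Finetti theorem for partially exchangeable Markov sequences, applied to a nested family of dyadic skeletons, and then to glue the resulting random transition matrices into a full continuous-parameter semigroup using the sample-path regularity guaranteed by condition (2).

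First, for each $k\ge 0$, consider the discrete-time skeleton $Z^{(k)}_n := Y_{n/2^k}$, $n\ge 0$. Condition (4) says precisely that the $\mathcal{P}$-law of $Z^{(k)}$ depends only on transition counts, i.e.\ $Z^{(k)}$ is Markov-exchangeable in the Diaconis--Freedman sense. Condition (3) guarantees that $y_0$ is visited infinitely often by $Z^{(k)}$, supplying the recurrence hypothesis of the discrete-time De Finetti theorem. Applying that theorem at each level $k$ yields a unique probability $\mu_k$ on the standard Borel space of stochastic matrices (with one recurrent class containing $y_0$) and a $\mathcal{G}_{1/2^k}$-measurable random matrix $\alpha_k(\omega)$ such that, conditionally on $\mathcal{G}_{1/2^k}$, the skeleton $Z^{(k)}$ is Markov with transition matrix $\alpha_k(\omega)$ started at $y_0$.

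Next, I would establish consistency across levels. Since $Z^{(k)}$ is obtained by sub-sampling $Z^{(k+1)}$ at every second step, the Markov representation at level $k+1$ forces
\[
\alpha_k(\omega) \;=\; \alpha_{k+1}(\omega)^{\,2} \qquad \mathcal{P}\text{-a.s.}
\]
Iterating gives $\alpha_k = \alpha_{k+j}^{2^j}$ for all $j\ge 0$, so defining $\alpha_\omega(t):=\alpha_k(\omega)$ whenever $t=2^{-k}$ produces a family indexed by dyadic rationals that satisfies the semigroup identity $\alpha_\omega(s)\alpha_\omega(t)=\alpha_\omega(s+t)$ on dyadic rationals, almost surely. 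This entire construction is measurable in $\omega$, and the exceptional null sets can be pooled over the countable collection of dyadic times.

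Extension to all real $t$ is the main obstacle. Using condition (2) --- no fixed points of discontinuity --- every deterministic $t\ge 0$ satisfies $\mathcal{P}(Y_{t-}=Y_t)=1$, so for any dyadic sequence $t_n\downarrow t$ we have $Y_{t_n}\to Y_t$ $\mathcal{P}$-a.s., and by bounded convergence each entry $\alpha_\omega(t_n)(y_0,x)$ converges to $\mathcal{P}(Y_t=x\mid \mathcal{G}_0)(\omega)$. A careful right-continuity argument then shows the entries $\alpha_\omega(t_n)(x,y)$ (not just those starting at $y_0$) converge to a common limit $\alpha_\omega(t)(x,y)$, yielding a standard stochastic semigroup $\{\alpha_\omega(t):t\ge 0\}$ for $\omega$ in a set $G_0\in \mathcal{G}_0$ of full $\mathcal{P}$-measure. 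The delicate points here are (i) ensuring $\alpha_\omega(t)\to I$ as $t\downarrow 0$ (standardness), which is where condition (2) is doing its real work, and (ii) checking that the almost-sure limit $\alpha_\omega$ actually lies in $\Pi$, i.e.\ its transition support $S_\omega$ forms a single recurrent class of stable states containing $y_0$ --- this uses condition (3) together with recurrence at each dyadic level.

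Finally, I would verify the conditional Markov property on $[0,\infty)$. On $G_0$, for any $0\le t_1<\cdots<t_m$ the dyadic approximation argument, combined with the finite-dimensional Markov property at dyadic times and the no-discontinuity condition, gives
\[
\mathcal{P}\bigl(Y_{t_1}=x_1,\dots,Y_{t_m}=x_m \,\big|\, \mathcal{G}_0\bigr)(\omega) \;=\; \prod_{j=1}^{m}\alpha_\omega(t_j-t_{j-1})(x_{j-1},x_j),
\]
with $t_0=0$, $x_0=y_0$. Integrating against the $\mathcal{P}$-law $\tilde\mu$ of $\omega\mapsto \alpha_\omega$ yields the mixture representation $\mathcal{P}=\int \alpha\,\tilde\mu(d\alpha)$. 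Uniqueness of $\tilde\mu$ is inherited from the uniqueness of each $\mu_k$ at the dyadic level: two candidate mixing measures on $\Pi$ inducing the same finite-dimensional laws of $(Z^{(k)})_k$ must agree on the Borel $\sigma$-field of $\Pi$ since the dyadic finite-dimensional distributions separate points of $\Pi$. The principal technical hurdle throughout is Step~3, namely producing an almost-sure, measurable, \emph{standard} stochastic semigroup from the dyadic skeletons while preserving the recurrence-class structure; condition (2) is the key ingredient that makes this passage from discrete to continuous time possible.
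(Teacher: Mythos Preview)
The paper does not supply its own proof of this theorem: it is stated in the appendix as a known result and attributed to the literature (Freedman, ``De Finetti's theorem in continuous time,'' 1996; Diaconis--Freedman, 1980). So there is no in-paper argument to compare against.

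That said, your plan is the right one and tracks Freedman's original proof closely: reduce to the Diaconis--Freedman discrete-time theorem on each dyadic skeleton, use the nesting $\alpha_k=\alpha_{k+1}^2$ to get a consistent semigroup on dyadic rationals, and then pass to all $t$ via the no-fixed-points-of-discontinuity hypothesis. Your identification of Step~3 (constructing a \emph{standard} semigroup almost surely from the dyadic data, and verifying the single-recurrent-class structure) as the main technical hurdle is accurate; in Freedman's paper this is exactly where most of the work goes, and condition~(2) is indeed the ingredient that makes the right-continuity $\alpha_\omega(t)\to I$ as $t\downarrow 0$ go through. One point to be careful about in a full write-up: convergence of the entries $\alpha_\omega(t_n)(x,y)$ for \emph{all} starting states $x$ (not just $y_0$) requires either a shift argument using recurrence to $y_0$ or an appeal to the semigroup identity on dyadics, since condition~(2) only directly controls the one-dimensional marginals of $Y$.
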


\section{Proofs for Section \ref{sec:theory}}
\
In this section we present the proofs for the results presented in Section \ref{sec:theory}.

\subsection{Proof of Lemma \ref{lemma:preliminary_lemma}}
\begin{proof} Let  $ 
P_t:= P(X_{0:t})$,
$\what P_t := P(\what X_{0:t})$,
$P_t^f:= P\!\bigl(f(X_{0:t})\bigr)$, $\what P_t^f:=P\!\bigl(f(\what X_{0:t})\bigr)$.
\begin{align*}
   W\!\bigl(\what P_t^f,P_t^f\bigr)
     & =^{(i)}\sup_{\lVert h\rVert_{\mathrm{Lip}}\le1}
   \bigl[\,\E_{\what P_t^f}[h]-\E_{P_t^f}[h]\,\bigr] \\[8pt]
     & \leq^{(ii)} 2\lVert f\rVert_\infty\,\mathrm{TV}\!\bigl(\what P_t^f,P_t^f\bigr) \\
      & \leq^{(iii)} 2\lVert f\rVert_\infty\,\sqrt{\tfrac12\,\mathrm{KL}\!\bigl(P_t^f\!\parallel\!\what P_t^f\bigr)}\\
      & \leq^{(iv)} \sqrt{2}\lVert f\rVert_\infty\,\sqrt{\mathrm{KL}\!\bigl(P_t\!\parallel\!\what P_t\bigr)}.
\end{align*} 
In the above derivation, (i) follows from Kantorovich-Rubinstein duality; (ii) follows as we assume $f$ is bounded, i.e., $\what P_t^f$ and $P_t^f$ are supported in an interval of diameter $D\leq 2\|f\|_{\infty}$; (ii) follows from Pinsker's inequality; (iv) follows from the data processing inequality for KL under measurable maps.
\end{proof}

\subsection{Proof of Proposition \ref{prop:regeneration}}

\begin{proof}
In this proof, to lighten notation, we suppress the dependence on $\theta$ since it is fixed.

Recall that $X(t)$ is Harris recurrent on $(\mathcal{S}, \Sigma)$ and there exists a measurable set $C\in \Sigma$,   a constant $\varepsilon>0$, a probability measure $\varphi$ on $(\mathcal S, \Sigma)$, and $\zeta>0$,  such that
\begin{equation}\label{eq:Harris}
P_x(X(\zeta)\in B) \;\ge\; \varepsilon\,\varphi(B),\qquad x\in C,\ B\in \Sigma. \end{equation}

We now provide details of the one-dependent regeneration cycle structure. In the presence of \eqref{eq:Harris}, we can write 
${P}(X(\zeta)\in \cdot|X_0=x)$ for $x\in C$ in the form

\begin{equation}\label{eq:regen2}
{P}(X(\zeta)\in \cdot|X_0=x)=\epsilon\varphi(\cdot)+(1-\epsilon)R(x,\cdot),
\end{equation}
where $R(x,\cdot)$ is a probability measure on $(\mathcal S, \Sigma)$ defined through \ref{eq:regen2}; see, \citet{glynn11,Sigman90}. We now let $\tilde{T}(0)=\inf\{t\geq 0: X(t)\in C\}$ be the first hitting time of $C$, and set
\[\tilde{T}(k)=\inf\{t\geq \tilde{T}(k-1)+\zeta: X(t)\in C\} \qquad k\geq 1,\]
i.e., $ \tilde{T}(k)$'s are successive entry times to $C$ that are each separated by at least $\zeta$. At each time $ \tilde{T}(k)$ at which $X(t)$ visits $C$, we generate an independent Bernoulli$(\epsilon)$ rv $I_k$. If $I_k=0$, we sample $X({\tilde{T}(k)+\zeta})$ according to $R(X({\tilde{T}(k)}),\cdot)$, followed by generating $(X(t):\tilde{T}(k)<t< \tilde{T}(k)+\zeta)$ from the conditional distribution 
\begin{equation}\label{eq:R}
{P}\Big(\left(X(t):\tilde{T}(k)<t<\tilde{T}(k)+\zeta\right)\Big|X(\tilde{T}(k)),X(\tilde{T}(k)+\zeta)\Big).
\end{equation}
(We use the fact that $\mathcal{S}$ is a Polish space to ensure the existence of the conditional distribution.) If $I_k=1$, we sample $X({\tilde{T}(k)+\zeta})$ according to $\varphi(\cdot)$, followed again by generating $(X(t): \tilde{T}(k)<t<\tilde{T}(k)+\zeta)$ from the conditional distribution \eqref{eq:R}.

Let $\gamma(0)=\inf\{j\geq 0: I_j=1\}$ and for $k\geq 1$, 
\[
\gamma(k)=\inf\{j>\gamma(k-1): I_j=1\}.
\]
Let $T(k)=\tilde{T}(\gamma(k))+\zeta$ for $k\geq 0$ and
$\tau(k)= T(k)- T(k-1)$ for $k\geq 1$. 
We now define, for $k\geq 1$,
\[
\mathcal{E}_k=(\tau(k), X(t): T_{k-1} \leq t < T_k)
\]
as the $k$-th ``cycle" associated with $X(t)$. Note that $\{X(t): t\geq T(k)\}$ is independent of $\{X(t): t\leq \tilde{T}(\gamma(k))\}$.
It is also important to recognize that in general, $(X(t):\tilde{T}(\gamma(k))<t< T(k))$ is correlated with $X(\tilde{T}(k)+\zeta)$, i.e., $X(T(k))$. This correlation implies that $\mathcal{E}_k$'s are one-dependent. 

In addition, at each regeneration time $T(k)$, the future path ${X(t):t\ge T_k}$ depends on the past only through $X(T(k))$. Since $T(k)$ is measurable w.r.t. $\{ (\tilde T(j),I_j):j\le \gamma(k)\}$) and
$X(T(k))\sim\varphi$ is sampled independently of that information, $X(T(k))$ and $T(k)$ are independent. Hence,  $\{X(t): t\geq T(k)\}$ is independent of $T(k)$.

Next, by construction, $\tau(k)$ is determined by the randomness up to the success time $\tilde T(\gamma(k))=T(k)$ and does \emph{not} involve the fresh randomization used later to draw
$X(T_k)\sim\varphi$ or to generate the $\zeta$--window bridge on
$\big(\tilde T(\gamma(k)),\,T_k\big)$. Hence $X(T(k))$ is independent of $\tau(k)$.
The next spacing $\tau(k+1)$ depends only on the starting state $X(T(k))$
and on fresh randomness after time $T(k)$; therefore
$\tau(k+1)$ is independent of $\tau(k)$. Since $X(T_k)\sim\varphi$ for all $k$, the
$\tau(k)$ have a common distribution. Consequently, $(\tau(k): k\ge1)$ are i.i.d.

\end{proof}

\subsection{Proof of Lemma \ref{lemma:convergence_kl_div_lemma}}

\begin{proof}
By the chain rule for KL divergence, 
\[
\dkl{\P_{0:t+s}}{\what{P}_{0:t+s}} = \dkl{\P_{0:t}}{ \what{P}_{0:t}} + \E_{\P_{0:t}}\left[\dkl{\P(X_{t:t+s}|X_{0:t})}{\what{P}(X_{t:t+s}|X_{0:t})}\right].
\]
Hence $\dkl{\P_{0:t}}{\what{P}_{0:t}}$ is nondecreasing in $t$. Moreover, since
\[
\dkl {\mu}{\what{\mu}} = \dkl{\P_{0:t}}{\what{P}_{0:t}} + \E_{\P_{0:t}} \left[\dkl {\mu(\cdot|X_{0:t})}{\what{\mu}(\cdot|X_{0:t})} \right],
\]
we have $\dkl{\P_{0:t}}{\what{P}_{0:t}} \le \dkl{\mu}{\what{\mu}}$ for all $t$. It follows that
$\dkl{\P_{0:t}}{\what{P}_{0:t}}$ converges as $t\to\infty$ and

\begin{equation}
    \lim_{t \to \infty} \dkl{\P_{0:t}}{\what{P}_{0:t}} \leq \dkl{\mu}{\what{\mu}}.
    \label{eq:dkl_limit_posterior_rel}
\end{equation}

Next, note that
\begin{equation}
    \dkl{\P}{\what{P}} = 
\lim_{t \to \infty} \dkl{\P_{0:t}}{\what{P}_{0:t}}.
\label{eq:dkl_limit_overall}
\end{equation}
The discrete‑time version of \eqref{eq:dkl_limit_overall} appears in \cite[Corollary 7.3]{gray11}; we adapt it to continuous time in Theorem \ref{thm:kl-convergence-theorem}.
Furthermore,
\begin{equation}\label{eq:decomp2}
\dkl {\mu}{\what{\mu}}
=\dkl{\P}{\what{P}}
+ \E_{\P} \left[\dkl{\mu(\theta|X_{0:\infty})}{\what{\mu}(\theta|X_{0:\infty})}\right].
\end{equation}

Let $T(k)$ be the regeneration times of the process  $\{X_t\}$ as defined in Proposition \ref{prop:regeneration}. 
Fix $\theta$ and choose $A,B\in\Sigma$ witnessing the TV-separation in Assumption~\ref{ass:separability}:
$q_\theta:=Q_\theta(A\times B)\neq q_{\theta'}:=Q_{\theta'}(A\times B)$ for $\theta'\neq\theta$.
Let $T(j)$ be the regeneration times (we suppress $\theta$ in notation) and define for odd indices
\[
W_k\ :=\ \mathbf 1\!\left\{\,X\big(T(2k-1)\big)\in A,\ X\big(T(2k-1)+\tilde{\zeta}\big)\in B\,\right\},\qquad k\ge1.
\]
From Proposition \ref{prop:regeneration}, we have that
\[\{X(t):  t < T(2k) \} \perp \{X(t):  t \geq T(2k+1) \},\qquad k\ge1. \]
By Assumption~\ref{ass:separability}, $T(2k-1)<T(2k-1)+\tilde{\zeta}< T(2k)$ a.s.. Therefore, 
\[\{X(t): T(2k-1)\leq t \leq T(2k-1) + \tilde{\zeta}\} \perp \{X(t): T(2k+1)\leq t \leq T(2k+1) + \tilde{\zeta}\} ,\qquad k\ge1.\]
Hence,  the sequence $(W_k:k\ge1)$ is i.i.d.. Under $P_{\theta}$,
\[
\mathbb E_{\theta}[W_k]=\int_A P^\theta_{\tilde{\zeta}}(x,B)\,\varphi_\theta(dx)=q_\theta.
\]
By the strong law of large numbers,
\[
\frac{1}{K}\sum_{k=1}^{K}W_k \rightarrow q_\theta \quad \mbox{ $P_{\theta}$ a.s. as $K\rightarrow\infty$}.
\]
Set $E_\theta:=\Big\{\omega:\lim_{K\to\infty}K^{-1}\sum_{k=1}^K W_k(\omega)=q_\theta\Big\}$. Then $P_\theta(E_\theta)=1$, while for any $\theta'\neq\theta$, the same law of large numbers gives $P_{\theta'}(E_\theta)=0$ because the limit equals $q_{\theta'}\neq q_\theta$.
Thus, ${P_\theta:\theta\in\Theta}$ are pairwise mutually singular on the path space $D([0,\infty),\mathcal S)$.

As $D([0,\infty),\mathcal S)$ is standard Borel, there exists a measurable classifier $T:D([0,\infty),\mathcal S)\to\Theta$ with $P_\theta(T(X_{0:\infty})=\theta)=1$ for all $\theta$. Let $\mathcal F_t:=\sigma(X(s):0\le s\le t)$ and $\mathcal F_\infty:=\sigma\bigl(X_{0:\infty}\bigr)$. For a prior $\mu$ on $\Theta$, consider the joint law $\P(d\theta,d\omega):=\mu(d\theta)\, P_\theta(d\omega)$. Then a version of the full-path posterior is the Dirac mass
$\mu(\cdot\mid\mathcal F_\infty)=\delta_{T(X_{0:\infty})}$ a.s.. Moreover, for each $B\in\mathcal B(\Theta)$ the process $M_t(B):=\mu(B\mid \mathcal F_t)$ is a bounded martingale; by Doob’s theorem,
\[
\mu(\cdot\mid\mathcal F_t) \rightarrow \mu(\cdot\mid\mathcal F_\infty)
=\delta_{T(X_{0:\infty})} \quad \mbox{ a.s. as $t\to\infty$}.
\]
i.e., the conditional KL term in \eqref{eq:decomp2} vanishes. Therefore, 
\begin{equation}
    \dkl {\mu}{\what{\mu}}
=\dkl{\P}{\what{P}}.
\label{eq:equivalence_kl_prior}
\end{equation}
Combining \eqref{eq:dkl_limit_overall} and \eqref{eq:equivalence_kl_prior} yields
\[\lim_{t \to \infty} \dkl{\P_{0:t}}{\what{P}_{0:t}} =  \dkl{\mu}{\what{\mu}}.\]
\end{proof}

\begin{theorem}
Let $(\Omega,\mathcal F)$ be a measurable space with an increasing family
$(\mathcal F_{0:t})_{t\ge 0}$ such that $\sigma\!\big(\bigcup_{t\ge 0}\mathcal F_{0:t}\big)=\mathcal F$.
Let $\P$ and $\what{P}$ be probability measures on $(\Omega,\mathcal F)$ and write the
restrictions $\P_{0:t}:=\P|_{\mathcal F_{0:t}}$ and $\what{P}_{0:t}:=\what{P}|_{\mathcal F_{0:t}}$.
If $\P\ll \what{P}$, then
\[
\dkl{\P}{\what{P}}
\;=\;\lim_{t\to\infty}\dkl{\P_{0:t}}{\what{P}_{0:t}},
\]
with the convention that all terms equal $+\infty$ if $\P\not\ll \widehat P$.
\label{thm:kl-convergence-theorem}
\end{theorem}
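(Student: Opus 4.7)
The strategy is to realize the KL divergences as expectations of $\phi(x):=x\log x$ evaluated on the Radon--Nikodym derivatives, and then apply Doob's (Lévy's upward) martingale convergence theorem. Assume $\P\ll\what P$ on $\mathcal F$ (otherwise both sides are $+\infty$ by the stated convention). Set $f:=d\P/d\what P$, and for each $t\ge 0$ let $f_t:=d\P_{0:t}/d\what P_{0:t}$, viewed as a $\mathcal F_{0:t}$-measurable function on $\Omega$. A routine change-of-variables argument gives $f_t=\E_{\what P}[f\mid \mathcal F_{0:t}]$, so $(f_t)_{t\ge 0}$ is a nonnegative $\what P$-martingale adapted to $(\mathcal F_{0:t})$. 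Moreover, the definition of KL together with this identification yields
\[
\dkl{\P_{0:t}}{\what P_{0:t}} \;=\; \E_{\what P}\bigl[\phi(f_t)\bigr], \qquad \dkl{\P}{\what P} \;=\; \E_{\what P}\bigl[\phi(f)\bigr].
\]

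\textbf{Upper bound.} Because $\phi$ is convex and $f_t=\E_{\what P}[f\mid\mathcal F_{0:t}]$, Jensen's inequality gives $\phi(f_t)\le \E_{\what P}[\phi(f)\mid\mathcal F_{0:t}]$ pointwise; taking $\what P$-expectation yields $\dkl{\P_{0:t}}{\what P_{0:t}}\le \dkl{\P}{\what P}$. This is the usual data-processing inequality and also shows that $t\mapsto \dkl{\P_{0:t}}{\what P_{0:t}}$ is nondecreasing (restricting to a coarser sub-$\sigma$-field can only shrink the divergence), so the limit in $t$ exists in $[0,\infty]$ and is bounded above by $\dkl{\P}{\what P}$.

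\textbf{Lower bound.} By the assumption $\sigma\!\bigl(\bigcup_t \mathcal F_{0:t}\bigr)=\mathcal F$ and Lévy's upward theorem, $f_t\to f$ $\what P$-a.s. as $t\to\infty$. Since $\phi$ is continuous on $(0,\infty)$ and $\phi(0)=0$, this gives $\phi(f_t)\to\phi(f)$ $\what P$-a.s. The function $\phi$ is bounded below by $-1/e$, so $\phi(f_t)+1/e\ge 0$ and Fatou's lemma applies:
\[
\E_{\what P}[\phi(f)] + 1/e \;\le\; \liminf_{t\to\infty}\E_{\what P}[\phi(f_t)] + 1/e.
\]
Subtracting $1/e$ gives $\dkl{\P}{\what P}\le \liminf_{t\to\infty}\dkl{\P_{0:t}}{\what P_{0:t}}$. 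Combined with the upper bound, this proves equality (with both sides possibly infinite).

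\textbf{Main obstacle.} The only delicate point is handling the case $\dkl{\P}{\what P}=+\infty$, where one cannot invoke uniform integrability to promote $f_t\to f$ in $L^1(\what P)$ to convergence of $\E_{\what P}[\phi(f_t)]$ directly. The use of Fatou (exploiting that $\phi$ is bounded below by $-1/e$) sidesteps this issue cleanly and keeps the argument uniform across the finite and infinite cases. The rest is essentially bookkeeping: verifying $f_t=\E_{\what P}[f\mid \mathcal F_{0:t}]$ from the definition of restriction of measures, and checking that $\phi(f)$ and $\phi(f_t)$ are well-defined ($\what P$-integrable in the finite case, bounded below in general).
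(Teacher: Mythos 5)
Your proof is correct and follows essentially the same route as the paper's: identify $d\P_{0:t}/d\what P_{0:t}$ with the conditional expectation of $d\P/d\what P$, get the upper bound from Jensen, and get the lower bound from martingale convergence plus Fatou applied to a bounded-below convex transform of the likelihood ratio. The only cosmetic difference is that the paper shifts by $1-x$ (using $\psi(x)=x\log x+1-x\ge 0$ together with $\E_{\what P}[\Lambda_t]=1$) whereas you shift by the constant $1/e$; both devices serve the identical purpose.
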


\begin{proof}
Assume $\P\ll \what{P}$. Let $L:=\frac{d\P}{d\what{P}}$ and define the likelihood-ratio
martingale $\Lambda_t:=\E_{\what{P}}[\,L\,|\,\mathcal F_{0:t}]$.

\textbf{Finite-time Radon--Nikodym derivative.}
For any $A\in\mathcal F_{0:t}$,
\[
\int_A \Lambda_t\,d\what{P}
=\int_A \E_{\what{P}}[L\,|\,\mathcal F_{0:t}]\,d\what{P}
=\int_A L\,d\what{P}
=\P(A)=\P_{0:t}(A).
\]
Thus, $\frac{d\P_{0:t}}{d\widehat P_{0:t}}=\Lambda_t$ $\widehat P$-a.s.
By $\mathcal F_{0:t}$-measurability of $\Lambda_t$,
\[
\dkl{\P_{0:t}}{\what{P}_{0:t}}
=\int \log\!\Big(\frac{d\P_{0:t}}{d\what{P}_{0:t}}\Big)\,d\P_{0:t}
=\int \Lambda_t \log \Lambda_t\, d\what{P}_{0:t}
=\E_{\what{P}}[\Lambda_t\log \Lambda_t],
\]
Likewise,
\[
\dkl{\P}{\what{P}}
=\int \log L\, d\P
=\int L\log L\, d\what{P}
=\E_{\what{P}}[L\log L].
\]
Let $\psi(x):=x\log x+1-x$, convex and nonnegative on $[0,\infty)$; then
$\dkl{\P_{0:t}}{\what{P}_{0:t}}=\E_{\what{P}}[\psi(\Lambda_t)]$ and
$\dkl{\P}{\what{P}}=\E_{\what{P}}[\psi(L)]$.

\textbf{Upper bound}
Since $\Lambda_t=\E_{\what{P}}[L\,|\,\mathcal F_{0:t}]$ and $\psi$ is convex, Jensen's inequality gives
\[
\dkl{\P_{0:t}}{\what{P}_{0:t}}
=\E_{\what{P}}[\psi(\Lambda_t)]
\;\le\; \E_{\what{P}}[\psi(L)]
=\dkl{\P}{\what{P}}
\qquad\text{for all }t.
\]

\textbf{Lower bound (martingale convergence and Fatou).}
$(\Lambda_t)_{t\ge 0}$ is a nonnegative $\what{P}$-martingale with $\E_{\what{P}}[\Lambda_t]=1$.
By the martingale convergence theorem, $\Lambda_t\to L$ $\what{P}$-a.s.. 
Since $\psi\ge 0$ and is continuous, Fatou’s lemma yields
\[
\dkl{\P}{\what{P}}
=\E_{\what{P}}[\psi(L)]
\;\le\; \liminf_{t\to\infty}\E_{\what{P}}[\psi(\Lambda_t)]
=\liminf_{t\to\infty}\dkl{\P_{0:t}}{\what{P}_{0:t}}.
\]

Combining the upper and lower bounds, we have
\[
\dkl{\P}{\what{P}}
=\lim_{t\to\infty}\dkl{\P_{0:t}}{\what{P}_{0:t}}.
\]
\end{proof}

\subsection{Proof of Theorem \ref{thm:final_thm}}



\begin{proof}
Let $P_f:= P(f({\theta}))$,
$\what{P}_f := P(f(\what{\theta}))$, 
$\what P_t^f:=P\!\bigl(f(\what X_{0:t})\bigr)$, where $\theta\sim \mu$ and $\what{\theta}\sim \what{\mu} $. Also define $\text{dia}(f) = \sup_{\theta, \theta'\in \Theta} |f(\theta)-f(\theta')|$.
\begin{align*}
    W\!\bigl(\what P_t^f,P_f\bigr)
      & \leq^{(i)}
     W\!\bigl(\what P_f,P_f\bigr) + W\!\bigl(\what P_t^f,\what P_f\bigr)
      \\
      & \leq^{(ii)}  W\!\bigl({f}(\what\theta),{f(\theta)}\bigr)
      + \frac{c_2}{\sqrt{t}} \\
      & \leq^{(iii)} 2\text{ dia}(f) \,\mathrm{TV}\!\bigl(\what{P}_f,P_f\bigr) + \frac{c_2}{\sqrt{t}} \\
      & \leq^{(iv)} 2\text{ dia}(f) \,\mathrm{TV}\!\bigl(\what{\mu},\mu\bigr) + \frac{c_2}{\sqrt{t}} \\
      & \leq^{(v)} \sqrt{2}\text{ dia}(f)\,\sqrt{\mathrm{KL}\!\bigl({\mu}\!\parallel\!\what \mu\bigr)} + \frac{c_2}{\sqrt{t}} \\ 
      & =^{(vi)} c_1 \sqrt{\mathrm{KL}\!\bigl({\mu}\!\parallel\!\what \mu\bigr)} + \frac{c_2}{\sqrt{t}} 
\end{align*}
For the above derivation, (i) follows from the triangle inequality applied to the Wasserstein distance; (ii) follows from Lemma \ref{lemma:performance_measure_dist_CLT}; (iii) follows from Kantorovich-Rubinstein duality and the bound $|\E_Ph-E_{\what P}h|\leq 2\|h\|_{\infty}\mathrm{TV}(P, \what P)$; (iv) follows from the data processing inequality under measurable maps; (v) follows from Pinsker's inequality. 
In (vi), we set $c_1 = \sqrt{2} \text{ dia}(f)  $.

\end{proof}

\begin{lemma} 
    Let $X_{0:t}\sim \P$, where ${{\P}}({X}_{0:\infty}) = \int P_{\theta}({X}_{0:\infty}) d\mu(\theta)$, then 
    \begin{align}
        W(f({X}_{0:t}), f(\theta)) & \leq \frac{c_2}{\sqrt{t}}.
    \end{align}
   \label{lemma:performance_measure_dist_CLT}
\end{lemma}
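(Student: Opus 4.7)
The plan is to convert the Wasserstein bound into an $L^{1}$ deviation under the canonical mixture coupling and then establish a pointwise $O(1/\sqrt{t})$ rate for the time-average under each $P_{\theta}$ via the regenerative decomposition from Proposition~\ref{prop:regeneration}, with uniform-in-$\theta$ constants coming from Assumptions~\ref{ass:cycle-moments}--\ref{ass:function_assumption}. Under the coupling $\theta\sim\mu$, $X_{0:\infty}\mid\theta\sim P_{\theta}$, Kantorovich duality and Jensen's inequality give
\[
W\bigl(f(X_{0:t}),f(\theta)\bigr)\;\le\;\mathbb{E}\bigl|f(X_{0:t})-f(\theta)\bigr|\;=\;\int \mathbb{E}_{\theta}\bigl|f(X_{0:t})-f(\theta)\bigr|\,d\mu(\theta),
\]
so it suffices to produce $\mathbb{E}_{\theta}|f(X_{0:t})-f(\theta)|\le c(\theta)/\sqrt{t}$ with $\sup_{\theta}c(\theta)<\infty$.

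Fix $\theta$ and invoke the regeneration times $T_{\theta}(j)$. Define cycle rewards $Y_{j}(\theta):=\int_{T_{\theta}(j-1)}^{T_{\theta}(j)}h(X_{s})\,ds$, cycle lengths $\tau_{\theta}(j)$, and $N_{\theta}(t):=\max\{j\ge 1:T_{\theta}(j)\le t\}$. The renewal-reward identity (Theorem~\ref{thm:L1_bound_harris_recurrent}) gives $f(\theta)=\mathbb{E}_{\theta}[Y_{1}(\theta)]/\mathbb{E}_{\theta}[\tau_{\theta}(1)]$, so the centered rewards $Z_{j}(\theta):=Y_{j}(\theta)-f(\theta)\tau_{\theta}(j)$ are identically distributed, mean-zero, and one-dependent. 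Writing
\[
t\bigl(f(X_{0:t})-f(\theta)\bigr)\;=\;B_{0}+\sum_{j=1}^{N_{\theta}(t)}Z_{j}(\theta)+B_{t},
\]
with $B_{0}=\int_{0}^{T_{\theta}(0)}h(X_{s})\,ds-f(\theta)T_{\theta}(0)$ and $B_{t}$ the analogous term on $[T_{\theta}(N_{\theta}(t)),t]$, one bounds each piece separately: $\mathbb{E}_{\theta}|B_{0}|$ by the initial-segment $L^{1}$ bounds in Assumption~\ref{ass:cycle-moments} together with Assumption~\ref{ass:function_assumption}(1) and uniform boundedness of $f$ from Assumption~\ref{ass:function_assumption}(2); $\mathbb{E}_{\theta}|B_{t}|$ by a Palm/inspection-paradox argument that exploits the $L^{2}$ cycle moments; and the middle sum by the standard regenerative variance estimate $\mathbb{E}_{\theta}\bigl(\sum_{j=1}^{N_{\theta}(t)}Z_{j}(\theta)\bigr)^{2}\le C\,t$, which combines the one-dependent covariance bound $|\mathrm{Cov}_{\theta}(Z_{j},Z_{j+1})|\le \mathbb{E}_{\theta}[Z_{1}(\theta)^{2}]$ with the elementary renewal bound $\mathbb{E}_{\theta}N_{\theta}(t)=O(t)$. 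Dividing by $t$ and applying Cauchy--Schwarz to the middle term yields $\mathbb{E}_{\theta}|f(X_{0:t})-f(\theta)|\le c(\theta)/\sqrt{t}$.

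The main obstacle will be showing that $c(\theta)$ is bounded \emph{uniformly} over $\Theta$, since the conclusion requires a single constant $c_{2}$. Each ingredient that enters $c(\theta)$, namely $\mathbb{E}_{\theta}[\tau_{\theta}(1)]$ and its reciprocal (bounded away from $0$), $\mathbb{E}_{\theta}[\tau_{\theta}(1)^{2}]$, $\mathbb{E}_{\theta}[Y_{1}(\theta)^{2}]$, the uniform $L^{1}$ control on the initial segment $\int_{0}^{T_{\theta}(0)}|h(X_{s})|\,ds$, and $\sup_{\theta}|f(\theta)|$, is uniformly controlled by Assumptions~\ref{ass:cycle-moments} and~\ref{ass:function_assumption}. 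The most delicate piece is the straddling boundary $B_{t}$: the cycle containing $t$ is size-biased and hence not distributed as an ordinary cycle, but the $L^{2}$ cycle moment assumption together with a Palm inversion gives a uniform $L^{1}$ bound on $B_{t}$. Setting $c_{2}:=\sup_{\theta}c(\theta)<\infty$ and integrating the pointwise bound against $\mu$ yields $W(f(X_{0:t}),f(\theta))\le c_{2}/\sqrt{t}$, which is the claimed inequality.
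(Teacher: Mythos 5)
Your proposal is correct and follows essentially the same route as the paper: the canonical mixture coupling to reduce to a per-$\theta$ $L^{1}$ bound, the regenerative decomposition into an initial segment, a sum of centered one-dependent cycle terms controlled by the variance bound and $\mathbb{E}_{\theta}[N_{\theta}(t)]\le t/\mathbb{E}_{\theta}[\tau_{\theta}(1)]$, and a straddling boundary term handled by the inspection-paradox/renewal-measure computation, with uniformity over $\Theta$ supplied by Assumptions~\ref{ass:cycle-moments} and~\ref{ass:function_assumption}. The paper packages the per-$\theta$ estimate as Theorem~\ref{thm:L1_bound_harris_recurrent} and splits your $B_{t}$ into separate reward and age residuals, but these are presentational differences only.
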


\begin{proof}
Let $P_f:= P(f({\theta}))$ and $P_t^f:= P(f(X_{0:t}))$. 
\begin{align*}
        W(P_t^f, P_f)  & =^{(i)} \inf_{\pi\in \Pi\Big( P_t^f, P_f \Big)} \mathbb{E}_{x,y\sim \pi} [\|x-y\|_2]
\\ & \leq^{(ii)} \mathbb{E}_{\theta \sim \mu} \mathbb{E}_{{X}_{0:t} \sim P_\theta} [\|f({X}_{0:t})-f(\theta)\|_2]
\\ & \leq^{(iii)} \mathbb{E}_{\theta \sim \mu} \left( \frac{a_1(\theta)}{\sqrt{t}} + \frac{b_1(\theta)}{t} \right)
        \\ & \leq^{(iv)} \frac{c_2}{\sqrt{t}}.
    \end{align*}
For the above derivation:
(i) follows from the definition of 1-Wasserstein distance.
(ii) follows if we couple \(\theta\sim\mu\) with \(X_{0:t}\sim P_\theta\), and set \(x=f(X_{0:t})\), \(y=f(\theta)\).
(iii) is established below in Theorem \ref{thm:L1_bound_harris_recurrent}; where
  \[a_1(\theta) = \sqrt{\frac{6\big(\ell_2(\theta)+f^2(\theta)\,m_2(\theta)\big)}{m_1(\theta)}}
  \]
  \[ b_1(\theta) = k_1(\theta)+ \ell_1(\theta)+\frac{\sqrt{m_2(\theta) \ell_2(\theta)}}{m_1(\theta)}
+|f(\theta)|\left(m_1(\theta)+\tfrac{m_2(\theta)}{2m_1(\theta)}\right)\]
with $
\ell_1(\theta) := \mathbb E_{P_\theta}\!\left[\int_{T(0)}^{T(1)} |h(X_s)|\,ds\right],$ $
\ell_2(\theta):=\mathbb E_{P_\theta}\!\left[\left(\int_{T(0)}^{T(1)} |h(X_s)|\,ds\right)^2\right],$
$m_1(\theta):=\mathbb E_{P_\theta}[T_1]$, $
m_2(\theta):=\mathbb E_{P_\theta}[T_1^2],$ and $
 k_1(\theta) := \mathbb E_{P_\theta}\left[ \int_{0}^{T(0)} \left|h(X_s)\right|\,ds\right]  + |f(\theta)| \mathbb E_{P_\theta}\left| T(0)\right|. 
    $
(iv) follows by choosing 
        \[
           c_2
           \;=\;
           \sup_\theta
           \{a_1(\theta)
           \;+\; b_1(\theta)\},
        \]
        which is finite under the  assumptions
        \(\sup_\theta\mathbb E_{P_\theta}[T(1)]<\infty\), \(\sup_\theta\mathbb E_{P_\theta}[T(1)^2]<\infty\),
        \(\inf_\theta\mathbb E_{P_\theta}[T(1)]>0\),  \(\sup_\theta\mathbb E_{P_\theta}[T(0)]<\infty\), $\sup_{\theta}\mathbb{E}_{P_\theta}[\int_{s=T(0)}^{T(1)} |h(X(s))| ds] < \infty$, $\sup_{\theta}\mathbb{E}_{P_\theta}[\int_{s=T(0)}^{T(1)} |h(X(s))|^2 ds] < \infty$ and $\sup_{\theta}\mathbb{E}_{P_\theta}[\int_{s=0}^{T(0)} |h(X(s))| ds] < \infty$.
\end{proof}


\newcommand{\Var}{\operatorname{Var}}
\newcommand{\Cov}{\operatorname{Cov}}
\newcommand{\EE}{\mathbb{E}}
\newcommand{\PP}{\mathbb{P}}

\begin{theorem}
Let $(X(t))_{t\ge0}$ be a continuous-time enhanced positive Harris recurrent Markov process on a Polish space $\mathcal{S}$ with $\sigma$-algebra $\Sigma$ defined in Assumption \ref{ass:PHR}.
$0=T(-1)<T(0)<T(1)<\cdots$ are the regeneration times defined in Proposition \ref{prop:regeneration}.
Define the  cycle integral
\[ Z_j\;:=\;\int_{T(j-1)}^{T(j)} h(X(s))\,ds,
\]
and set $\beta:=\dfrac{\mathbb E[Z_1]}{\mathbb E[\tau(1)]}$ (assumed finite).
Let
\[
A(t):=\int_0^t h(X(s))\,ds,
\qquad N(t):=\max\{n:\ T(n)\le t\}.
\]
Suppose Assumptions \ref{ass:cycle-moments} and \ref{ass:function_assumption} hold, i.e.,
\[
\ell_1 := \mathbb E\!\left[\int_{T(0)}^{T(1)} |h(X(s))|\,ds\right]<\infty,\quad
\ell_2:=\mathbb E\!\left[\left(\int_{T(0)}^{T(1)} |h(X(s))|\,ds\right)^2\right]<\infty,\]
\[
m_1:=\mathbb E[\tau(1)]<\infty,\quad
m_2:=\mathbb E[\tau(1)^2]<\infty.
\]

\[ k_1 := \mathbb E\left[ \int_{0}^{T(0)} \left|h(X(s))\right|\,ds\right]  + |\beta| \mathbb E\left| T(0)\right| <\infty 
    \]  
Then,
\[ \lim_{t \to \infty}\frac{A(t)}{t} = \beta  \quad a.s.\]
Additionally, for all $t>0$,
\begin{equation}\label{eq:explicit-second-moment}
\E\left|\frac{A(t)}{t}-\beta\right|
\;\le\;
\sqrt{\frac{6\big(\ell_2+\beta^2\,m_2\big)}{m_1}}\;\frac{1}{\sqrt t}
+\left(k_1+ \ell_1+\frac{\sqrt{m_2 \ell_2}}{m_1}
+|\beta|\left(m_1+\tfrac{m_2}{2m_1}\right)\right)\frac{1}{t}.
\end{equation}






\label{thm:L1_bound_harris_recurrent}
\end{theorem}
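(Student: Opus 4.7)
The plan is a standard regenerative decomposition of $A(t)-t\beta$ into an initial defect, a centered sum over complete cycles, a backward-overshoot term, and a residual reward, followed by separate $L^1$ control on each piece. The centered cycle sum is controlled via a second-moment bound that exploits the one-dependence from Proposition \ref{prop:regeneration}. Define $Y_j := Z_j - \beta\tau(j)$; since $\beta = \mathbb{E}[Z_1]/m_1$ and the $\tau(j)$'s are i.i.d.\ for $j\ge 1$, we have $\mathbb{E}[Y_j]=0$. Using $\sum_{j=1}^{N(t)}\tau(j) = T(N(t))-T(0)$, I would write
\[
A(t) - t\beta = \Bigl(\int_0^{T(0)} h(X(s))\,ds - \beta T(0)\Bigr) + \sum_{j=1}^{N(t)} Y_j + \beta\bigl(T(N(t))-t\bigr) + \int_{T(N(t))}^{t} h(X(s))\,ds.
\]

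For the almost-sure statement, the initial defect divided by $t$ vanishes a.s.\ because it is an almost surely finite random variable not depending on $t$. The residual-reward and backward-overshoot terms divided by $t$ vanish a.s.\ because each is dominated by a single cycle whose reward/length has finite mean and whose distribution does not depend on $t$. Applying the SLLN to the i.i.d.\ inter-regeneration times $\tau(j)$ yields $N(t)/t\to 1/m_1$, and splitting $(Y_j)_{j\ge 1}$ into odd- and even-indexed subsequences (each i.i.d.\ by the one-dependence structure in Proposition \ref{prop:regeneration}) and applying the ordinary SLLN to each gives $n^{-1}\sum_{j=1}^n Y_j \to 0$ a.s. Combining these limits yields $A(t)/t\to\beta$ a.s.

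For the explicit $L^1$ bound I would handle the four pieces separately. The initial defect has $L^1$ norm at most $k_1$ by the triangle inequality and the definition of $k_1$. For the centered cycle sum, I would use that $(Y_j)$ is one-dependent, stationary and centered, so $\mathbb{E}\bigl(\sum_{j=1}^n Y_j\bigr)^2 \le 3n\,\mathbb{E}[Y_1^2]$ via the Cauchy-Schwarz estimate $|\mathrm{Cov}(Y_i,Y_{i+1})|\le \mathbb{E}[Y_1^2]$, together with $\mathbb{E}[Y_1^2]\le 2(\ell_2+\beta^2 m_2)$ from $(a-b)^2\le 2a^2+2b^2$ applied to $Z_j - \beta\tau(j)$. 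Conditioning on $N(t)$ and using the elementary renewal bound $\mathbb{E}[N(t)]\le t/m_1$, then applying Jensen, yields $\mathbb{E}\bigl|\sum_{j=1}^{N(t)} Y_j\bigr| \le \sqrt{6t(\ell_2+\beta^2 m_2)/m_1}$, which gives the $1/\sqrt{t}$ term after division by $t$. For the two boundary pieces I would invoke standard inspection-paradox estimates: the expected backward overshoot satisfies $\mathbb{E}[t-T(N(t))]\le m_1 + m_2/(2m_1)$ (a Lorden-type bound for i.i.d.\ renewals), and $\mathbb{E}\bigl|\int_{T(N(t))}^{t} h(X(s))\,ds\bigr|\le \ell_1 + \sqrt{m_2\ell_2}/m_1$ by Cauchy-Schwarz applied to the straddling cycle $N(t)+1$.

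The main technical obstacle is rigorously justifying the variance bound on $\sum_{j=1}^{N(t)} Y_j$, since $N(t)$ is measurable with respect to the $\tau(j)$'s and therefore \emph{not} a stopping time independent of the $Y_j$'s. The cleanest workaround is the odd/even split of the one-dependent sequence into two i.i.d.\ sequences: for each half, one can apply Doob's $L^2$ maximal inequality up to a deterministic horizon $n^{\ast}$ chosen slightly larger than $t/m_1$, and then absorb the low-probability event $\{N(t)>n^{\ast}\}$ via Chebyshev on $N(t)$. Matching constants afterward is routine but requires careful bookkeeping to recover exactly the coefficients $\sqrt{6(\ell_2+\beta^2m_2)/m_1}$ and $k_1+\ell_1+\sqrt{m_2\ell_2}/m_1+|\beta|(m_1+m_2/(2m_1))$ stated in the theorem.
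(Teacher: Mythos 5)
Your proposal follows essentially the same route as the paper's proof: the identical four-term regenerative decomposition of $A(t)-\beta t$ (initial defect, centered cycle sum, residual reward, age/overshoot), the same $L^1$ bounds $k_1$, $\ell_1+\sqrt{m_2\ell_2}/m_1$, and $m_1+m_2/(2m_1)$ for the boundary pieces via renewal/inspection arguments, the same one-dependence estimate $\mathbb{E}\bigl(\sum_{j=1}^n Y_j\bigr)^2\le 3n\,\mathbb{E}[Y_1^2]\le 6n(\ell_2+\beta^2 m_2)$ combined with $\mathbb{E}[N(t)]\le t/m_1$ and Jensen, and the same odd/even split plus SLLN for the almost-sure statement. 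The one point of divergence is the ``technical obstacle'' you flag at the end: the paper does not take the Doob-maximal-inequality/Chebyshev detour you sketch, but simply conditions on $N(t)$ and asserts $\mathbb{E}\bigl[(\sum_{j=1}^{N(t)}\tilde Z_j)^2\mid N(t)\bigr]\le N(t)\,\sigma_\star^2$ before applying Jensen; that direct step is what produces the exact coefficient $\sqrt{6(\ell_2+\beta^2 m_2)/m_1}$, whereas your workaround (a maximal inequality up to a deterministic horizon plus a tail absorption) would inflate the constant, as you yourself concede. Your underlying concern---that conditioning on $N(t)$ biases the distribution of the straddled cycles, so the unconditional covariance computation does not transfer verbatim---is legitimate and is arguably a gap in the paper's own argument rather than in yours; but if the goal is to reproduce the bound exactly as stated, you should use the paper's conditioning step rather than the detour.
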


\begin{proof}
We prove \eqref{eq:explicit-second-moment} first.
Since $T(N(t))\le t< T(N(t)+1)$, write
\[
A(t)-\beta t
= \int_{0}^{T(0)} h(X_s)\,ds - \beta T(0) + \sum_{j=1}^{N(t)} \bigl(Z_j-\beta \tau_j\bigr)
+ \underbrace{\int_{T(N(t))}^{t} h(X_s)\,ds}_{=:R(t)}
- \beta\,\underbrace{\bigl(t-T\bigl(N(t)\bigr)\bigr)}_{=:R_T(t)}.
\]
Define $\tilde{Z}_j:=Z_j-\mu \tau_j$ for $j\geq 1$. By the triangle inequality,
\begin{equation}\label{eq:two-terms}
\mathbb E\left|\frac{A(t)}{t}-\beta\right|
\;\le\; \underbrace{\frac{1}{t}\mathbb E\left| \int_{0}^{T(0)} h(X(s))\,ds - \beta T(0)\right|}_{\mathrm{(I)}} + 
\underbrace{\mathbb E\left|\frac{1}{t}\sum_{j=1}^{N(t)} \tilde{Z}_j\right|}_{\mathrm{(II)}}
\;+\;
\underbrace{\frac{1}{t}\,\mathbb E|R(t)|}_{\mathrm{(III)}} + \underbrace{\frac{|\beta|}{t}\,\mathbb E[R_T(t)]}_{\mathrm{(IV)}}.
\end{equation}

\medskip

\begin{enumerate}
    \item \textbf{Term  \(\mathrm{(I)}\): Initial cycle term.}

    \begin{align*}
    \frac{1}{t}\mathbb E\left| \int_{0}^{T(0)} h(X(s))\,ds - \beta T(0)\right| &\leq \frac{1}{t}\mathbb E\left[ \int_{0}^{T(0)} \left|h(X_s)\right|\,ds\right]  +\frac{1}{t} |\beta| \mathbb E\left| T(0)\right| 
    = \frac{k_1}{t}.   
    \end{align*}

    \item  \textbf{Term \(\mathrm{(II)}\): One-dependent variance bound.}
Condition on $N(t)$ and note $\mathbb E[\tilde{Z}_j]=0$ by definition of $\beta$.
For any $n\ge1$,
\[
\text{Var}\!\Big(\sum_{j=1}^n \tilde{Z}_j\Big)
= n\,\text{Var}(\tilde{Z}_1) + 2\sum_{k=1}^{n-1} (n-k)\,\text{Cov}(\tilde{Z}_1,\tilde{Z}_{1+k}).
\]
Since the sequence is one-dependent, $\text{Cov}(\tilde{Z}_1,\tilde{Z}_{1+k})=0$ for $k\ge2$, thus
\[
\text{Var}\!\Big(\sum_{j=1}^n \tilde{Z}_j\Big)
= n\,\text{Var}(\tilde{Z}_1) + 2(n-1)\,\text{Cov}(\tilde{Z}_1,\tilde{Z}_2)
\;\le\; n\,\sigma_\star^2,
\]
where $\sigma_\star^2 \;:=\; \text{Var}(\tilde{Z}_1) + 2\,\bigl|\text{Cov}(\tilde{Z}_1,\tilde{Z}_2)\bigr| \;<\;\infty.$ Therefore,
\[
\mathbb E\!\left[ \Big(\sum_{j=1}^{N(t)} \tilde{Z}_j\Big)^2 \,\Big|\, N(t)\right]
\le N(t)\,\sigma_\star^2.
\]
Taking expectations and using Jensen's inequality,
\[
\mathrm{(II)}
= \frac{1}{t}\,\mathbb E\left|\sum_{j=1}^{N(t)} \tilde{Z}_j\right|
\le \frac{1}{t}\,\sqrt{\,\mathbb E\Big(\sum_{j=1}^{N(t)} \tilde{Z}_j\Big)^2\,}
\le \frac{1}{t}\,\sqrt{\,\sigma_\star^2\,\mathbb E[N(t)]\,}.
\]
As $\tau(j)$'s are i.i.d.\ with mean $m_1$,
$
\E[N(t)]=\frac{1}{m_1}\E[T(N(t))]\le t/m_1$. Then,
\[
\mathrm{(II)}
= \frac{1}{t}\,\mathbb E\left|\sum_{j=1}^{N(t)} \tilde{Z}_j\right|
\le {\frac{\sqrt{\sigma_\star^2}}{\sqrt{m_1}}\,} \frac{1}{\sqrt{t}}.
\]
We provide an explicit bound on  $\sigma_\star^2$ towards the end of the proof.

\item \textbf{Term \(\mathrm{(III)}\): Reward Residual. } Note that \[R(t) \leq \int_{T(N(t))}^{T(N(t)+1)} |h(X_s)|\,ds.\]
Further, $(\tau(n+1),Z_{n+1})$ is independent of $\sigma(\tau(1),\dots,\tau(n))$ for each $n\geq1$, $\tau(j)$'s are i.i.d., and the renewal measure $U([0,s])=\sum_{n\geq 0} \P(T(n)\leq s) =\E[N(s)]+1$, for $s\geq 0$. Hence,
    \begin{align*}
       \E\left[\int_{T(N(t))}^{T(N(t)+1)} |h(X_s)|\,ds\right]
& =\sum_{n\ge0}\E\!\left[1\{T(n)\le t<T(n)+\tau_{n+1}\}\int_{T(n)}^{T(n+1)} |h(X_s)|\,ds\right]
\\ & =\int_0^t G(t-s)\,dU(s) 
    \end{align*}
where $G(u):=\E[1\{\tau_1>u\}\int_{0}^{\tau_1} |h(X_s)|\,ds]$. Using $U(s)\le 1+s/m_1$ 
and integration by parts yields
\begin{align*}
\E\left[\int_{T(N(t))}^{T(N(t)+1)} |h(X_s)|\,ds\right]
&\le G(0)+\frac{1}{m_1}\int_0^\infty G(u)\,du
\\ &=^{(i)} \E\left[\int_{0}^{\tau_1} |h(X_s)|\,ds\right]+\frac{1}{m_1}{\E\left[\tau_1\int_{0}^{\tau_1} |h(X_s)|\,ds\right]}
\\ &\leq^{(ii)} \ell_1+\frac{1}{m_1}{\sqrt{\E\left[(\tau_1)^2\right]\E\left[\left(\int_{0}^{\tau_1} |h(X_s)|\,ds\right)^2\right]}}
\\ &= \ell_1+\frac{1}{m_1}{\sqrt{m_2 \ell_2}},
\end{align*}
where (i) follows as  $\int_0^\infty G(u)\,du=\E[\tau_1 \int_{0}^{\tau_1} |h(X_s)|\,ds]$ by Fubini, and (ii) follows using Cauchy–Schwarz inequality.

\item \textbf{Term \(\mathrm{(IV)}\): Age Residual. } 
Fix $t\geq 0$. By the definition of $R_T(t)$,
\[R_T(t) = \sum_{n\geq 0} (t-T(n)) {\mathbf 1}\{T(n) \leq t < T(n+1)\}\]
Taking expectations and conditioning on $T(n)$ gives
\begin{align*}
    \E[R_T(t)] &= \sum_{n\geq 0} \E \left[ \E \left[ (t-T(n))\mathbf{1}\{t<T(n)+\tau(n+1)\} |T(n)\right] \mathbf{1}\{T(n)\leq t\} \right]
    \\ & =^{(i)}  \sum_{n\geq 0} \E \left[ \underbrace{\E \left[ (t-T(n)){\mathbf 1}\{\tau(n+1)>t-T(n)\}\right]}_{G(t-T(n))} \mathbf{1}\{T(n)\leq t\} \right]
    \\ &=^{(ii)} \int_0^t G_T(t-s)\,dU(s),
\end{align*}
where for (i) we used independence and identical distribution of $\tau_{n+1}$ and for (ii) set $ G_T(u):=\E[u\mathbf{1}\{\tau(1)>u\}]$. 
Further, using $U(s)\le 1+s/m_1$ 
and  integration by parts yields
\begin{align*}
\E[R_T(t)]&\le G_T(0)+\frac{1}{m_1}\int_0^\infty G_T(u)\,du
\\ &= m_1+\frac{\E[\tau(1)^2]}{2m_1}. 
= m_1+\frac{m_2}{2m_1}
\end{align*}

\item \textbf{Explicit bound for $\sigma_\star^2$.} 
\begin{align*}
   \sigma_\star^2 &= \text{Var}(\tilde Z_1) + 2 \text{Cov}(\tilde Z_1,\tilde Z_2)
    \\&\leq  \text{Var}(\tilde Z_1) + 2 \sqrt{\text{Var}(\tilde Z_1)\text{Var}(\tilde Z_2)} 
    \\ &=  3\text{Var}(\tilde Z_1)
    \\ &=  3\text{Var}\left( Z_1-\beta \tau_1\right)
    \\ &\leq 6 \left(\E[Z_1^2] + \beta^2 \E[\tau_1^2] \right)
   \leq 6 \left(\ell_2 + \beta^2 m_2 \right).
\end{align*}

\end{enumerate}
Putting the bounds for the four terms in \eqref{eq:two-terms} together we have \eqref{eq:explicit-second-moment}.\\

We next prove the first part (SLLN).  Note that 
\begin{align}
    \frac{A(t)}{t}
& =\frac{\int_0^{T(0)}h(s) ds}{t} +  \frac{\sum_{i=1}^{N(t)} Z_i}{t}+\frac{R(t)}{t}
\nonumber\\ & =  \underbrace{\frac{\int_0^{T(0)}h(s) ds}{t}}_{(i)} + \underbrace{\Big(\frac{\sum_{i=1}^{N(t)} Z_i}{N(t)}\Big)}_{(ii)}\cdot \underbrace{\Big(\frac{N(t)}{t}\Big)}_{(iii)}
\ +\ \underbrace{\frac{R(t)}{t}}_{(iv)}. \label{eq:At_decomp}
\end{align}

\begin{enumerate}
\item \textbf{Term (i):} As $\E\left[\int_0^{T(0)} h(s) ds \right]<\infty$, $\int_0^{T(0)} h(s) ds <\infty$ a.s.. Thus, \[\frac{\int_0^{T(0)}h(s) ds}{t} \rightarrow 0 \quad \mbox{ a.s. as $t\rightarrow\infty$.}\] 
    \item \textbf{Term (ii):} In a one–dependent sequence, the
subsequences $\{Z_{2k}\}_{k\ge1}$ and $\{Z_{2k-1}\}_{k\ge1}$ are independent
and i.i.d.\ (any two indices in the same subsequence differ by at least~2, hence
belong to independent $\sigma$–fields by the defining property). By SLLN, 
\[
\frac{1}{n}\sum_{k=1}^n Z_{2k}\to \E[Z_1],\qquad
\frac{1}{n}\sum_{k=1}^n Z_{2k-1}\to \E[Z_1]\qquad\text{a.s.}
\]
Thus, $\frac{1}{n}\sum_{j=1}^n Z_j\to \E[Z_1]$ a.s..

\item \textbf{Term (iii):} Since $\tau(j)$'s are
i.i.d.\ with $0<\E[\tau(1)]<\infty$ and $\E[T(0)]<\infty$, we have $T(n)/n\to m_1$ a.s. as $n\rightarrow\infty$, and
\[
\frac{N(t)}{t}\ \rightarrow\ \frac{1}{m_1}\qquad\text{a.s. $t\rightarrow\infty$.}
\]
To see this, note that $T(N(t))\le t< T(N(t)+1)$ implies
\[
\frac{T(N(t))}{N(t)}\ \le\ \frac{t}{N(t)}\ <\
\frac{T(N(t))}{N(t)}+\frac{\tau(N(t)+1)}{N(t)}.
\]
By the SLLN, $T(N(t))/N(t)\to m$ a.s. Moreover, since $\E[\tau(1)]<\infty$,
$\tau(n)/n\to0$ a.s., which implies, $\tau(N(t)+1)/N(t)\to0$ a.s..
Thus, $t/N(t)\to m_1$ and $N(t)/t\to 1/m_1$ a.s.
\item\textbf{Term (iv):}   
Since $T(N(t))\leq t$ and $|R(t)|\leq \int_{T(N(t))}^{T(N(t)+1)}|h(X(s))|ds$
\[
\frac{|R(t)|}{t}\ \le\ \frac{\int_{T(N(t))}^{T(N(t)+1)}|h(X(s))|ds}{T(N(t))}
=\frac{\int_{T(N(t))}^{T(N(t)+1)}|h(X(s))|ds}{N(t)}\frac{N(t)}{T(N(t))}
\rightarrow 0 \quad\text{a.s. as $t\rightarrow\infty$.}
\]
\end{enumerate} 
Putting the bounds for the four terms in \eqref{eq:At_decomp} together, we have $A(t)/t\to \E[Z_1]/\E[\tau(1)]=\beta$ a.s..
\end{proof}

\section{Alternative Conventional Bayesian Approach}
\label{sec:alt_bayesian}



\begin{algorithm}[H]
\caption{Bayesian Bootstrap (Alternate Procedure)} 
\label{alg:alt_bayesian}
\begin{flushleft}
\textbf{Input:} Prior $\mu$, stochastic model $P_{\pi,\theta}$, history $\mathcal{H}_n$, number of trajectories $J$, length of trajectory $N$, performance measure $f(\cdot)$
\begin{enumerate}
    \item \textbf{for} $j \leftarrow 1$ \textbf{to} $J$ \textbf{do}
    \begin{enumerate}
        \item[2.] \textbf{Initialization:} $\mathcal{H}^j =\mathcal{H}_n$
            \item[3.] Evaluate $\mu(\theta|\mathcal{H}^j, \pi) = \frac{ P_{\pi,\theta}(\mathcal{H}^j) \mu(\theta)}{\int P_{\pi,\theta}(\mathcal{H}^j) \mu(\theta) d\theta}$
            \item[4.] Sample $\theta\sim \mu(\cdot|\mathcal{H}^j, \pi)$  
            \item[5.] Sample $(T_{n:N}^{j},E_{n:N}^{j}) \sim P_{\pi,\theta}(\cdot|\mathcal{H}^j)$   
            \item[6.] Update $\mathcal{H}^j \leftarrow \mathcal{H}^j \cup \{T_{n:N}^{(j)}, E_{n:N}^{(j)}\}$
        \item[7.] Estimate $f(\mathcal{H}^j)$ 
    \end{enumerate}
    \item[8.] \textbf{end for}
    \item[9.] \textbf{Return:} $\{f({\mathcal{H}}^{1}),\dots, f({\mathcal{H}}^{J})\}$
\end{enumerate}
\end{flushleft}
\end{algorithm}


\section{Optimal losses}
\label{sec:optimal_losses}

\subsection{Optimal loss for M/M/1}

\begin{lemma} Consider an M/M/1 queue with arrival rate $\lambda$ and service rate $\nu$, and let the traffic intensity be $\rho = \frac{\lambda}{\nu}$. 
\begin{itemize} \item If $\rho<1$, then
\[\lim_{N\to \infty}  -\frac{1}{N}\sum_{i=1}^N  \log \left[ {\P}_{i}^{event}(E_i^{(j)}) \right] = - \frac{1}{2}\left[ \frac{\lambda}{\nu}  \log\left(\frac{\lambda}{\lambda+\nu}\right) + \log\left(\frac{\nu}{\lambda+\nu}\right)\right], \]
\[
\lim_{N\to \infty}  \frac{1}{N}\sum_{i=1}^N   \left[ \frac{1}{{\lambda}_{i}^{(j)}} - T_i^{(j)} \right]^2 =\frac{1}{2\lambda^2} \left[ \frac{\nu}{\lambda+\nu}\right].\]

\item If $\rho\geq 1$, then
 \[\lim_{N\to \infty}  -\frac{1}{N}\sum_{i=1}^N  \log \left[ {\P}_{i}^{event}(E_i^{(j)}) \right] =-\frac{\lambda}{\lambda+\nu}  \log \left(\frac{\lambda}{\lambda+\nu}\right)- \frac{\nu}{\lambda+\nu} \log\left(\frac{\nu}{\lambda+\nu}\right)\]
 \[\lim_{N\to \infty}  \frac{1}{N}\sum_{i=1}^N   \left[ \frac{1}{{\lambda}_{i}^{(j)}} - T_i^{(j)} \right]^2 = \frac{1}{(\lambda+\nu)^2}.\]
    \label{lm:m_m_1_lower_bound}
    \end{itemize}
\end{lemma}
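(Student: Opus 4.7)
The plan is to split each empirical average into the contributions from steps where the system is empty ($S_i = 0$) versus non-empty ($S_i \geq 1$), compute the per-step expected losses in each regime using the Markov structure, and then determine the long-run fraction of events coming from each regime via ergodic arguments. First I would identify the optimal predictors. Conditional on $S_i$, the Markov property gives $T_i \sim \mathrm{Exp}(\lambda)$ if $S_i=0$ and $T_i \sim \mathrm{Exp}(\lambda+\nu)$ if $S_i\geq 1$, while $E_i$ is deterministically an arrival in the first case and Bernoulli$(\lambda/(\lambda+\nu))$ in the second. Thus the oracle predicts $1/\widehat{\lambda}_i \in \{1/\lambda,\,1/(\lambda+\nu)\}$, and the conditional expected squared time error equals the exponential variance ($1/\lambda^2$ or $1/(\lambda+\nu)^2$), while the conditional expected negative log-likelihood of $E_i$ equals $0$ when $S_i=0$ and the binary entropy $H := -\frac{\lambda}{\lambda+\nu}\log\frac{\lambda}{\lambda+\nu} - \frac{\nu}{\lambda+\nu}\log\frac{\nu}{\lambda+\nu}$ when $S_i\geq 1$.

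Next I would reduce both limits to the long-run fraction $p_0 := \lim_{N\to\infty} N^{-1}\sum_{i=1}^N \mathbf{1}\{S_i=0\}$ of events that occur from an empty system. Conditional on the entire state sequence $\{S_i\}$, both the squared-error terms and the event-log-loss terms are independent with uniformly bounded variance (the log term lies in $[0,\log 2]$, and the exponential squared error has bounded second moments). A conditional strong law therefore shows that each empirical average equals its conditional mean plus $o(1)$ a.s., so it suffices to evaluate $p_0 \cdot L_0 + (1-p_0)\cdot L_+$ for the two loss components, where $L_0,L_+$ are the per-regime conditional expectations computed above.

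To identify $p_0$, consider first the stable case $\rho<1$. The underlying CTMC is positive recurrent with stationary distribution $\pi_n=(1-\rho)\rho^n$, and events occur at rate $\lambda$ when in state $0$ and $\lambda+\nu$ when in state $\geq 1$. The ergodic theorem applied to the marked point process of events then yields
\[
p_0 \;=\; \frac{\pi_0\,\lambda}{\pi_0\,\lambda+(1-\pi_0)(\lambda+\nu)} \;=\; \frac{(1-\rho)\lambda}{2\lambda} \;=\; \frac{1-\rho}{2},
\]
where I used $\rho\nu=\lambda$ to collapse the denominator. Substituting back gives time loss $\frac{1-\rho}{2}\cdot\frac{1}{\lambda^2}+\frac{1+\rho}{2}\cdot\frac{1}{(\lambda+\nu)^2}$, which simplifies algebraically to $\frac{\nu}{2\lambda^2(\lambda+\nu)}$, and gives event loss $\frac{1+\rho}{2}\cdot H$, which simplifies to the stated expression after distributing the $\frac{1}{2\nu}$ factor and using $\rho=\lambda/\nu$. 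For the unstable/critical case $\rho\geq 1$, I would show $p_0=0$: when $\rho>1$ the chain is transient and visits to $0$ are a.s. finite, and when $\rho=1$ the chain is null recurrent so the fraction of time in state $0$ vanishes, which combined with the bounded event rates gives $p_0=0$ by a ratio-ergodic argument. Plugging $p_0=0$ into the decomposition immediately recovers $H$ for the event loss and $1/(\lambda+\nu)^2$ for the time loss.

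The main obstacle is the critical case $\rho=1$: one cannot appeal to positive recurrence there, and some care is needed to pass from ``fraction of time in state $0$'' to ``fraction of events from state $0$,'' since the two need not coincide. I would handle this either by invoking the ratio ergodic theorem for null-recurrent chains directly, or by a renewal-theoretic argument on excursions from $0$ for the symmetric random-walk representation of the embedded jump chain. The transient case $\rho>1$ is essentially immediate (only finitely many events emanate from $0$), and the stable case only requires a standard application of the ergodic theorem together with the algebraic simplifications above.
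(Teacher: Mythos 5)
Your proposal is correct and follows essentially the same route as the paper's proof: condition on whether the queue is empty just before each event, compute the oracle's conditional event-entropy and exponential-variance losses in each regime, and weight by the long-run fraction of events originating from the empty state (your rate-weighted stationary computation $p_0=(1-\rho)/2$ is equivalent to the paper's ``half arrivals, half departures, with PASTA for the arrivals'' accounting, and your final expressions match the stated limits in both regimes). The only real difference is one of rigor: you make explicit the conditional strong law and the transience/ratio-ergodic arguments needed for $\rho\ge 1$, whereas the paper's appendix treats $\rho<1$ informally and dispatches the remaining case with ``the analyses follow the same lines of argument.''
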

\begin{proof} We analyze the event losses and even-time losses separately.

\textbf{Event Losses:} 
Consider a system where we observe $N$ events, with $N$ being sufficiently large. Under the condition $\rho < 1$, the system reaches a steady state where approximately half of the events correspond to arrivals and half to departures, yielding $N/2$ events of each type.
For optimal prediction when the queue is non-empty, the oracle predictor assigns probabilities based on the relative rates: departures occur with probability $\frac{\nu}{\lambda+\nu}$ and arrivals with probability $\frac{\lambda}{\lambda+\nu}$. However, departures can only occur when the queue contains at least one customer, which introduces a dependency on the queue state.

\textit{Departure Loss Analysis:} When the queue is non-empty and a departure occurs, the log-probability of the correct prediction is $\log\left(\frac{\nu}{\lambda+\nu}\right)$. Since we expect $N/2$ departures in total, the cumulative departure loss becomes:
\begin{equation}
    -\frac{N}{2} \log\left(\frac{\nu}{\lambda+\nu}\right)
\end{equation}

\textit{Arrival Loss Analysis:} The arrival loss calculation requires considering two distinct queue states. When the queue is empty (occurring with probability $\frac{\nu-\lambda}{\nu}$), the oracle predictor assigns probability 1 to arrivals, yielding a log-probability of $\log(1) = 0$. When the queue is non-empty (with probability $\frac{\lambda}{\nu}$) and an arrival occurs, the log-probability is $\log\left(\frac{\lambda}{\lambda+\nu}\right)$.
The total arrival loss is therefore:
\begin{equation*}
    -\left(\frac{N}{2} \cdot \frac{\nu-\lambda}{\nu} \cdot \log(1) + \frac{N}{2} \cdot \frac{\lambda}{\nu} \cdot \log\left(\frac{\lambda}{\lambda+\nu}\right)\right) = -\frac{N}{2} \cdot \frac{\lambda}{\nu} \cdot \log\left(\frac{\lambda}{\lambda+\nu}\right)
\end{equation*}

\textit{Combined Event Loss:} Combining both departure and arrival losses, the total loss for $N$ events is:
\begin{equation*}
    -\left(\frac{N}{2} \log\left(\frac{\nu}{\lambda+\nu}\right) + \frac{N}{2} \cdot \frac{\lambda}{\nu} \log\left(\frac{\lambda}{\lambda+\nu}\right)\right)
\end{equation*}

The optimal average loss per event :
\begin{equation*}
    -\frac{1}{2}\left(\frac{\lambda}{\nu} \log\left(\frac{\lambda}{\lambda+\nu}\right) + \log\left(\frac{\nu}{\lambda+\nu}\right)\right)
\end{equation*}

\textbf{Inter-event Time Losses:} 
The inter-event time distribution depends critically on the queue state. When the queue is non-empty, the next event (either arrival or departure) occurs according to an exponential distribution with rate $\lambda + \nu$, reflecting the competition between both processes. Conversely, when the queue is empty, only arrivals are possible, resulting in an exponential distribution with rate $\lambda$.

\textit{Departure Time Loss:} Since departures only occur when the queue is non-empty, all $N/2$ departure events follow the exponential distribution with rate $\lambda + \nu$. The corresponding loss is:
\begin{equation*}
    \frac{N}{2} \cdot \frac{1}{(\lambda+\nu)^2}
\end{equation*}

\textit{Arrival Time Loss:} Arrival events occur under both queue states. When the queue is empty (probability $1-\frac{\lambda}{\nu}$), arrivals follow rate $\lambda$. When non-empty (probability $\frac{\lambda}{\nu}$), they follow the combined rate $\lambda + \nu$. The total arrival loss becomes:
\begin{equation*}
    \frac{N}{2}\left(1-\frac{\lambda}{\nu}\right) \frac{1}{\lambda^2} + \frac{N}{2} \cdot \frac{\lambda}{\nu} \cdot \frac{1}{(\lambda+\nu)^2}
\end{equation*}

\textit{Average Inter-event Time Loss:} Combining both components and normalizing by $N$, the optimal average inter-event time loss is:
\begin{equation*}
    \left(\frac{1}{2} + \frac{\lambda}{2\nu}\right) \frac{1}{(\lambda+\nu)^2} + \frac{1}{2}\left(1-\frac{\lambda}{\nu}\right) \frac{1}{\lambda^2} = \frac{1}{2\nu(\lambda+\nu)} + \frac{\nu-\lambda}{2\nu\lambda^2}
\end{equation*}

The analyses for $\rho>1$ follow the same lines of argument. 
\end{proof}

\subsection{Optimal loss for M/M/N}

For an M/M/N queue with k customer classes, we evaluate the optimal loss on the test set $\mathcal{D}^{test}$ by exploiting the knowledge of underlying queueing system. After observing all events up to and including the $(t-1)^{th}$ event, the system’s state is fully determined by the initial state and these observations.  In particular, we can identify
\begin{enumerate}
    \item the number of customers of each type in system $(n_1,\ldots,n_k)$;
    \item for each server $j\in \{1,\ldots,N\}$;
    \begin{itemize}
        \item  whether it is busy—denoted by the indicator ($1_j$); and
        \item if busy, the class of the customer it is serving.
    \end{itemize} 
\end{enumerate}

\noindent Let $\lambda_i$ and $\nu_i$ be the arrival rate  and service rate of customer $i$. Using above information, we can evaluate the following probabilities: 

\paragraph{Event-type probabilities}

 \begin{enumerate}
     \item \textbf{Arrival:} The probability that the next event is an arrival is 
\[ P(\text{arrival}) = \frac{\sum_i \lambda_i}{\sum_i \lambda_i + \sum_{j=1}^{N} 1_j \cdot \nu_{\text{customer-type at j-th server}}}\]
where $1_j \in \{0,1\}$ indicates whether server 
$j$ is busy and $\nu_{\text{customer-type at j-th server}}$
  is the service-completion rate of the customer currently being served at that server $j$.
Conditioned on an arrival, the probability that the arriving customer is of class $i$  is 
 \[P(\text{class = i $|$ arrival}) = \frac{\lambda_i}{\sum_j \lambda_j}\]

\item \textbf{Departure:} The probability that the next event is the departure from server $j$  is 
\[P(\text{departure from server $j$}) =\frac{1_j \cdot \nu_{\text{customer-type at j-th server}}}{\sum_i \lambda_i + \sum_{m=1}^{N} 1_m \cdot\nu_{\text{customer-type at m-th server}}}\]

In this case the customer class is already known:

\[ \text{Class $|$ departure from server $j$ $=$ Customer class at j-th server }\]

 \end{enumerate}

\paragraph{Inter-event time distribution}
Given the current state, the time until the next event is exponentially distributed with rate

\[r_t = \sum_i \lambda_i + \sum_{j=1}^{N} 1_j \cdot\nu_{\text{customer-type at j-th server}}\]

\paragraph{Optimal loss:}
For the 
$t^{th}$ event we can estimate optimal loss for from the above probabilities as follows
\begin{align*}
\text{Loss}_t &= -\log[Pr(\text{event}_t)] - \log[Pr(\text{customer-type}_t|\text{event}_t)] + \left(\frac{1}{r_t} - (\text{inter-event-time})_t\right)^2
\end{align*}

\paragraph{Average Optimal Loss}
Overall average loss over $T$ events is:
\[
\text{Average-Loss} = \frac{1}{T}\sum_{t=1}^T \text{Loss}_t
\]

Similarly, we can also estimate the average optimal losses for event type, customer type and inter-event times.






\section{Experimental details}
\label{sec:experimental-details}

\subsection{Transformer Architectural Details}
\label{sec:experimental-details-core-modeling-architecture}
Throughout all experiments, we use a decoder-only Transformer architecture. While we adapt the embedding layers to accommodate different task-specific requirements (e.g., event-type embeddings, temporal embeddings, customer-type embeddings), the core transformer architecture remains the same throughout our study:
\begin{itemize}
    \item Input embedding dimension: 64
    \item Hidden dimension: 512
    \item Number of attention heads: 4
    \item Number of transformer layers: 8
\end{itemize}


\subsubsection{Positional embeddings}
We investigate two distinct approaches for incorporating positional information into our model architecture:
\begin{enumerate}
\item \textbf{Sinusoidal Position Encoding:} In this method, our positional embedding is the sum of two complementary \textit{sinusoidal position embeddings:} (1) a linearly increasing component that maintains identical values across three consecutive positions (0,0,0,1,1,1,...), where the period of three corresponds to the event-type, customer-type, and inter-event time components that collectively represent a single event, and (2) a cyclical component that repeats every three positions (0,1,2,0,1,2,...), which enables the transformer to distinguish between event-type, customer-type, and inter-event time within each event. The period length of three can be adjusted according to the data structure—the design principle is that one component should identify all components of each event collectively, while the other should enable differentiation within the components of each event. Both components utilize sinusoidal functions operating at different frequencies to encode positional information into vectors with dimensions matching the event embeddings.

\item \textbf{Learnable Periodic Encoding:} Under this approach, our positional embedding is sum of  two \textit{trainable embedding layers}. As earlier, the first embedding layer captures general positional information across sequences of arbitrary length, and other layer captures the periodic patterns that recur at a configurable interval (set to three positions in our experimental setup).  The embeddings from both layers are combined additively to produce the final positional encoding, enabling the model to simultaneously learn long-range positional dependencies and local periodic structures.
\end{enumerate}
In our implementation, we adopted the sinusoidal position encoding approach due to its deterministic handling of arbitrary sequence lengths without requiring additional learnable parameters. This design choice demonstrates robust performance across our experimental evaluations.



\subsubsection{Embedding layers}
To maintain architectural consistency, every embedding layer—including those for event types and time—projects its inputs into a common vector dimensionality. We now describe each embedding in greater detail:

\paragraph{Event type embedding:}
For event type representation, we employ a learnable embedding layer that maps each event type to a vector whose dimensionality corresponds to the model dimension. To accommodate sequences of varying length, we introduce padding tokens assigned a unique index (-100) and a dedicated embedding vector. This architecture facilitates effective event type processing while appropriately handling padding in variable-length sequences.

\paragraph{Time Embedding:}
We investigate two distinct approaches for time embeddings.
\begin{itemize}
    \item 
\textbf{Continuous Time Embedding.} Our first approach treats time as a continuous variable, employing Time2Vec embeddings to capture temporal information effectively. This method preserves the continuous nature of temporal data while enabling the model to learn meaningful representations directly from raw time values.

\item
\textbf{Discretized Time Embedding.} Following \citep{bellemare2017,MullerHoArGrHu22}, our second approach discretizes inter-event time distributions using Riemann distributions, transforming time prediction into a classification task. We partition the time domain into $n$ bins, where the first $(n-1)$ bins maintain equal width $w$, covering the range $[0, (n-1)w]$, with time assumed uniformly distributed within each bin. The final bin captures all times $t \geq (n-1)w$ using a half-normal distribution.

For the discretized approach, we carefully tune parameters $n$ and $w$ to balance adequate coverage of observed time distributions with computational efficiency. To determine bin resolution and coverage, we rely on the empirical distribution of inter-event times in the training data.  Parameter $w$ is selected to maintain sufficient resolution within each bin while keeping $n$ computationally manageable. We ensure that $(n-1)w$ covers the full range of observed times, with the final bin capturing any remaining distribution tail.
\end{itemize}

The selection between these embedding strategies depends on task-specific requirements and prior knowledge about the parametric family of underlying time distributions. Continuous embedding offers greater computational efficiency and suits applications with known parametric families of inter-event time distributions, while discretized embedding provides greater generality for handling arbitrary inter-event time distributions in complex scenarios, albeit with increased computational overhead. 


\subsubsection{Modifications for multiple customer types}
To handle multiple customer types in a general setting, we explored various prediction approaches. Initially, we attempted to predict event types and customer types independently in parallel. However, this approach proved suboptimal, as it failed to capture the inherent dependencies between events and the customers involved.

To capture the dependencies between event type and customer type, two approaches are possible: (1) assigning a unique index to each event–customer type combination, or (2) using a conditional modeling approach where the event type is predicted first, followed by the customer type conditioned on the event.

The first approach leads to a total event space equal to the product of the number of event types and customer types, while the second yields a space equal to their sum. We adopt the conditional approach to reduce the prediction space size. Specifically, we decompose each prediction step into three sequential components:
\begin{enumerate}
\item \textbf{Event type prediction}: First, we predict the next event type.
\item \textbf{Customer type prediction}: We then predict the customer type  conditioned on the predicted event.
\item \textbf{Inter-event time prediction}: Finally, we predict the time until the event occurs, conditioned on both the event and customer type predictions.
\end{enumerate}

This conditional approach enables the model to better capture the dependencies within the stochastic dynamics. The type of event influences which customers are likely to be involved, and both the event and customer type jointly affect the inter-event time.

We implement this approach by treating these components as separate tokens within our transformer architecture, where each prediction is conditioned on all previous tokens, including predictions made in earlier steps of the same decomposition. We found that the conditional modeling approach yielded superior results compared to predicting customer type and event type independently in parallel.

Customer type embeddings are generated using the same methodology as event type embeddings, employing a learnable embedding layer that maps each customer type to a vector whose dimensionality corresponds to the model dimension. To accommodate variable-length sequences, we introduce padding tokens with a unique index (-100) and a dedicated embedding vector.

\subsubsection{Including initial states of the system in model architectures}

The initial state of the service system can be easily incorporated within our framework. We explain how to incorporate initial state in a transformer model using an M/M/N queue with 
K customer types. An initial state in this case consists of the following components:

\begin{enumerate}
    \item for each server $j\in \{1,\ldots,N\}$;
    \begin{itemize}
        \item  whether it is busy or idle; and
        \item if busy, the class of the customer it is serving.
    \end{itemize} 
    \item For the queue: the sequential order of waiting customers, including both the number of customers of each type in the system
 $(n_1,\ldots,n_K)$ and their sequential ordering.
\end{enumerate}

We employ a structured encoding scheme for servers using discrete status values: 0 for idle servers, and values 1 through 
K for servers actively serving customers of the respective types. Although theoretically the queue can be of infinite length, we model systems with a maximum queue length of
Q, which is realistic in practice. The queue is represented as a
Q-length vector with each entry recording 0 for empty positions and values 1 through
K for positions occupied by customers of the respective types. This state information is subsequently projected to the model's embedding dimension through a linear transformation layer, enabling simulations under diverse initial conditions while maintaining dimensional consistency throughout the neural network architecture.

When dealing with more general queues, we must also include the remaining service times of customers being served and the time since last arrival to fully capture the initial state. The initial state of more complex service systems can be similarly handled by embedding all relevant information for each server and queue position.

Including initial state in the modeling facilitates reusing the same transformer repeatedly, thereby increasing the effective prediction length. For example, if the transformer prediction length is N and we start with state $S_1$, we can predict next N events $E_{1:N}$ and inter-event times $T_{1:N}$, obtain state $S_{N+1}$ from $S_1$ and $E_{1:N}$, then reuse the transformer to obtain next $N$ events $E_{N+1:2N}$, and continue this process iteratively.




\subsubsection{Architecture incorporating Control Policy as an input}
To support policy evaluation and counterfactual analysis (Section \ref{sec:additional_experiments-counterfactuals}), we encode the current policy as an additional input token that is prepended to the Transformer at initialization. A policy can specify, for example, the total number of servers 
$N$ and the day-to-day fluctuation coefficient 
$c$ of the arrival rate.

Each policy variable is projected into the model dimension through a learnable embedding layer. Continuous variables are processed through a learnable neural network consisting of two linear layers with ReLU activation, while categorical variables are mapped to distinct trainable embeddings. This design allows the model to (i) quantify the effect of alternative staffing policies, (ii) compare system performance under different configurations, and (iii) analyze the transient dynamics of the queueing system under different policies.


\subsection{Transformer Training details}

We train the Transformer with the loss defined in Eq.~\eqref{eq:loss_quote_everywhere} using the Adam optimizer (learning rate $5 \times 10^{-4}$, weight decay $1 \times 10^{-5}$).  Training is performed with a batch size of 32, and we clip gradients to a global norm of 1 for stability. The learning rate follows a 30-epoch linear warm-up from $1 \times 10^{-7}$ to $5 \times 10^{-4}$, after which it decays cosinusoidally $5 \times 10^{-6}$ for the remainder of training. 

The model comprises eight Transformer layers, each with four attention heads; dropout is disabled in both the positional-encoding and Transformer blocks. 

\subsection{Data generation and discrete event simulator}

We employ the Ciw library \citep{palmer2017ciw} to conduct our discrete event simulations. This library comprehensively tracks and records all individual events occurring at different nodes throughout the simulation period, including customer arrivals, departures, and service completions.

To illustrate our approach, consider a multi-class M/M/5 queueing system comprising 5 distinct customer types. Each customer type exhibits different arrival rates ranging from 1.0 to 5.0 customers per unit time, while maintaining a uniform service rate of 1.0 customers per unit time across all types. The system operates with 5 parallel servers under a First-In-First-Out (FIFO) discipline without priority differentiation. Figure \ref{fig:code_discrete_event_simulator}  demonstrates the code for executing such a simulation.




\begin{figure}[h!]
    \centering
    \includegraphics[width=0.65\linewidth]{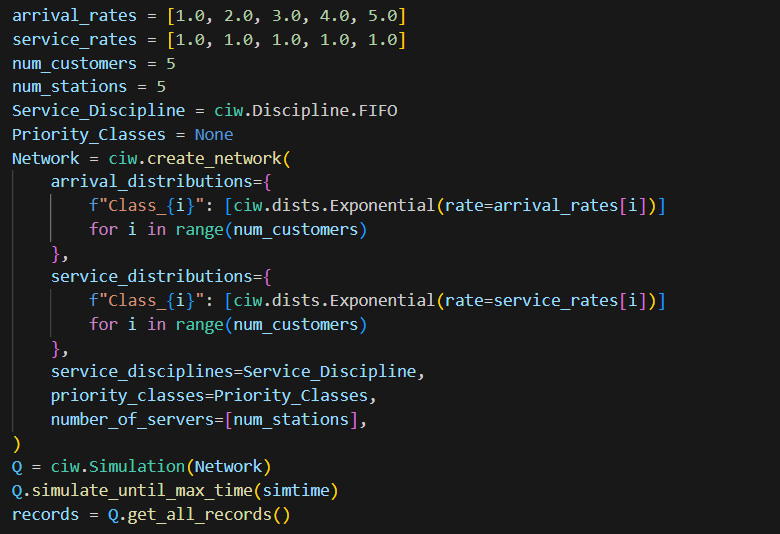}
    \caption{M/M/5 discrete event simulator implemented using Ciw library}
    \label{fig:code_discrete_event_simulator}
\end{figure}

The simulation library provides comprehensive tracking capabilities that monitor system state and events throughout the queuing process. This tracking framework encompasses individual customer journeys, recording precise arrival times, service initiation and completion timestamps, and customer class classifications. Additionally, the system continuously monitors queue lengths at each service point, tracks server utilization patterns and state transitions, and implements sophisticated priority class handling with appropriate service discipline protocols. The library generates detailed event logs that facilitate thorough post-simulation analysis and validation.

These comprehensive event logs serve as the foundation for our training dataset generation and oracle benchmark establishment across the respective queuing models. The recorded data encompasses the complete spectrum of system dynamics, including customer arrivals, service completions, real-time queue lengths at each network node and temporal checkpoint, server operational states, and additional relevant performance metrics.

The simulation records undergo systematic processing to create structured event sequences that effectively capture the temporal dynamics and state transitions inherent in the queuing system. This methodical data transformation ensures that the resulting datasets accurately represent the underlying system behavior while maintaining the chronological integrity of events. The processed data subsequently forms the empirical foundation for our training datasets and enables the establishment of ground truth benchmarks essential for rigorous model evaluation and validation procedures.

\subsection{Extracting performance metrics from events table}
\label{sec:extracting_events_table}
Consider a single-server queue with a FIFO discipline as an example. Let $A_j$ denote the event index of the $j$th arrival event. The time between the $j$th arrival and the $(j+1)$th arrival can then be calculated as
$\sum_{i=A_j+1}^{A_{j+1}} T_{i}$. For service times, 
let $D_j$ denote the even index of the $j^{th}$ departure event.  Further define $E_j = \max\{i\geq D_j: \text{queue is 0 after the $i$th event}\}$.  
Then, the $j^{th}$ service time can be estimated as $  \sum_{i=\max\{D_j,E_j\}+1}^{D_{j+1}} T_{i}$. For waiting times, assuming the initial queue length is $q_{0}$, the $j^{th}$ waiting time can be estimated as $\sum_{i=A_j}^{S_{q_{0}+j}-1} T_{i+1}$.







\subsection{Experimental Details for Empirical Validation (Sections \ref{sec:part_1_approach_validation},  \ref{sec:Approach Validation})}

\subsubsection{M/M/1 with single customer type}
\label{sec:Additional experiments:M/M/1}
We validate our method on a standard 
M/M/1 queue with a single customer class served under a first-in, first-out (FIFO) discipline. The Transformer is trained on 10,000 simulated trajectories, each containing 400 events. After training, we generate 2,000 trajectories from the trained transformer and analyze the resulting performance-metric distributions, as shown in Figure \ref{fig:M-M-1-distributions}.

\subsubsection{M/M/5 with 5 customer types}
\label{sec:Additional experiments:M/M/5}
We test our method on an M/M/5 queue that serves five customer classes under a first-in, first-out (FIFO) discipline.  The Transformer is trained on 40,000 simulated trajectories, each containing 800 events. To assess its performance, we then generate 2,000 trajectories from the trained transformer and examine the resulting performance metrics for every class. Figure \ref{fig:M-M-5-distributions-individual-customers} demonstrates that our transformer faithfully replicates the discrete-event simulator’s inter-arrival-time, service-time, and waiting-time distributions for every customer class.

\begin{figure}[h!]
  \centering
    \centering{\textbf{Customer 1}}\\
  \begin{minipage}[b]{0.325\textwidth}   
    \includegraphics[width=\textwidth, height=3cm]{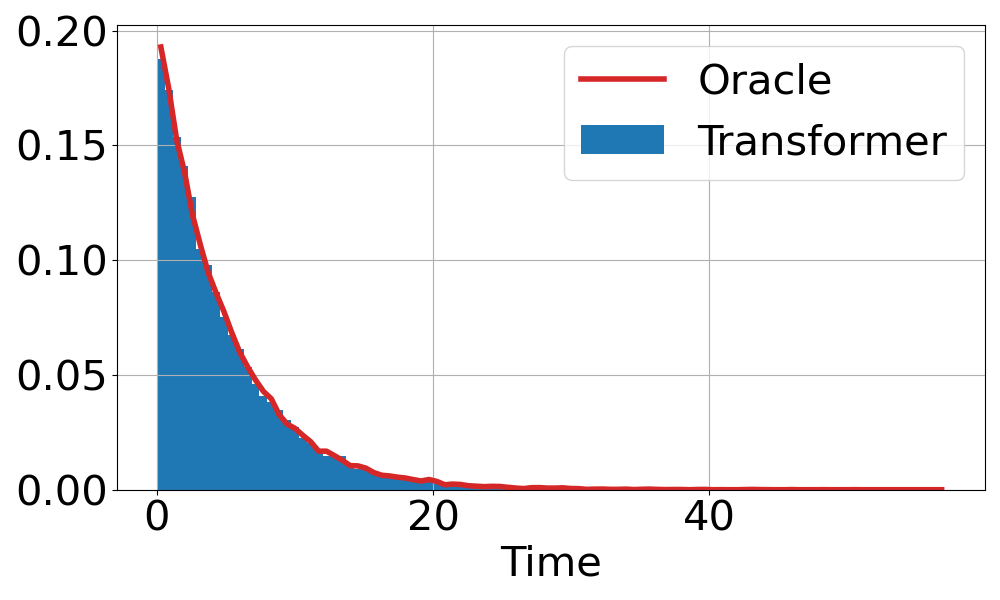}
  \end{minipage}
  \hfill
  \begin{minipage}[b]{0.325\textwidth}   
    \includegraphics[width=\textwidth, height=3cm]{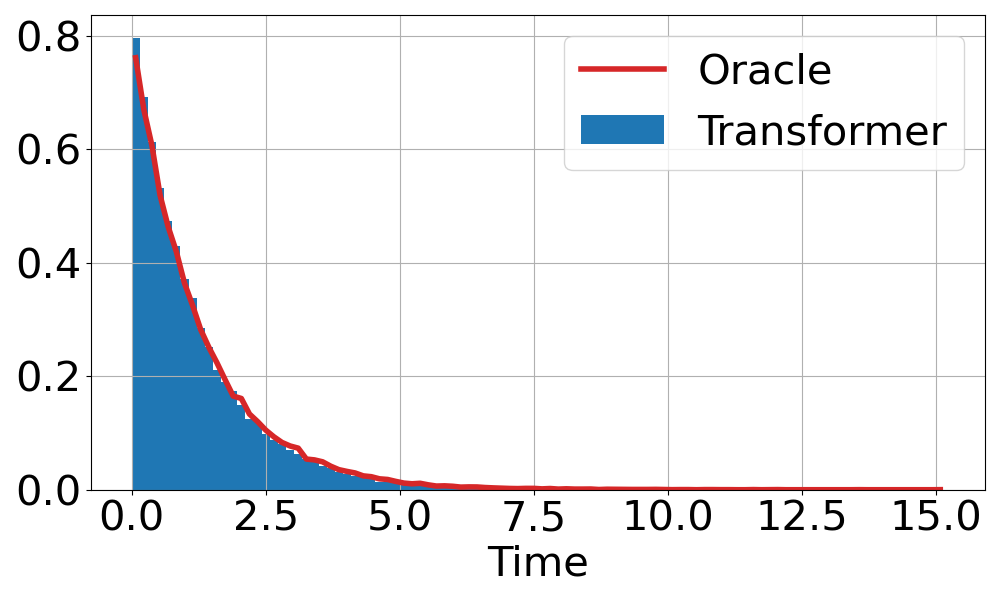}
  \end{minipage}
  \hfill
  \begin{minipage}[b]{0.325\textwidth}  
    \includegraphics[width=\textwidth, height=3cm]{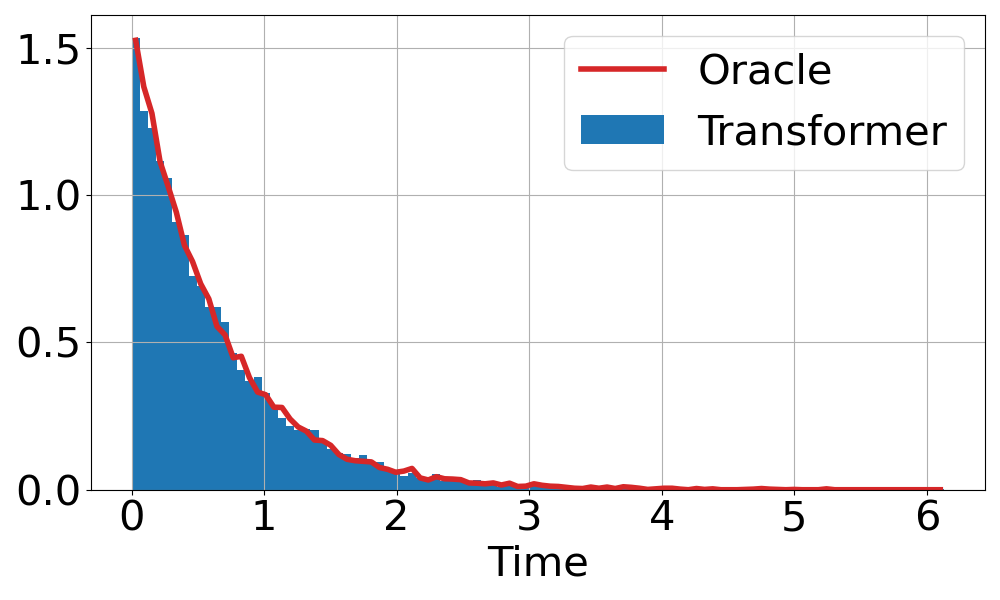}
  \end{minipage}
\hfill
\centering{\textbf{Customer 2}}\\
    \begin{minipage}[b]{0.325\textwidth}   
      \includegraphics[width=\textwidth, height=3cm]{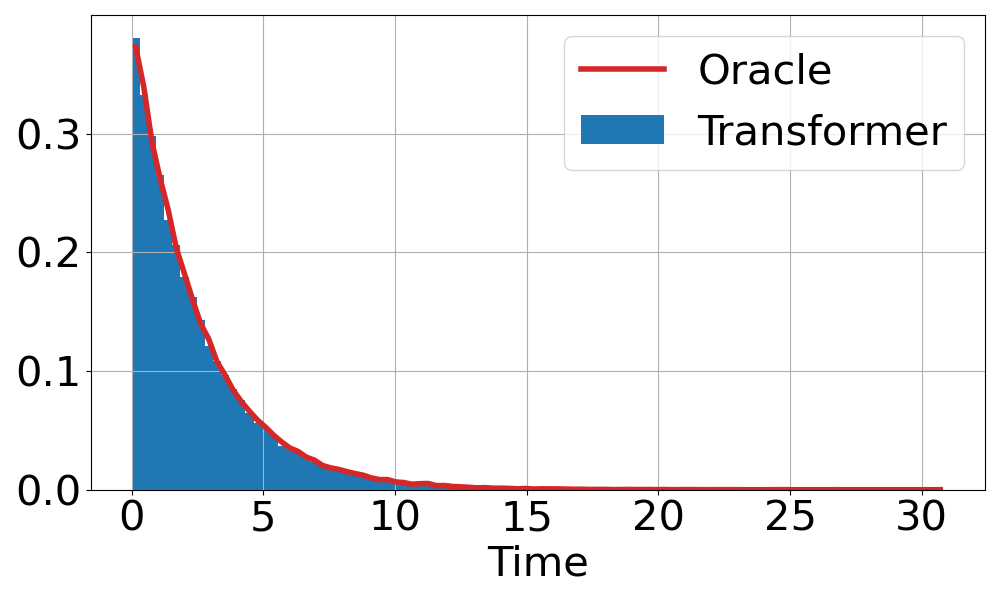}
    \end{minipage}
    \hfill
    \begin{minipage}[b]{0.325\textwidth}   
      \includegraphics[width=\textwidth, height=3cm]{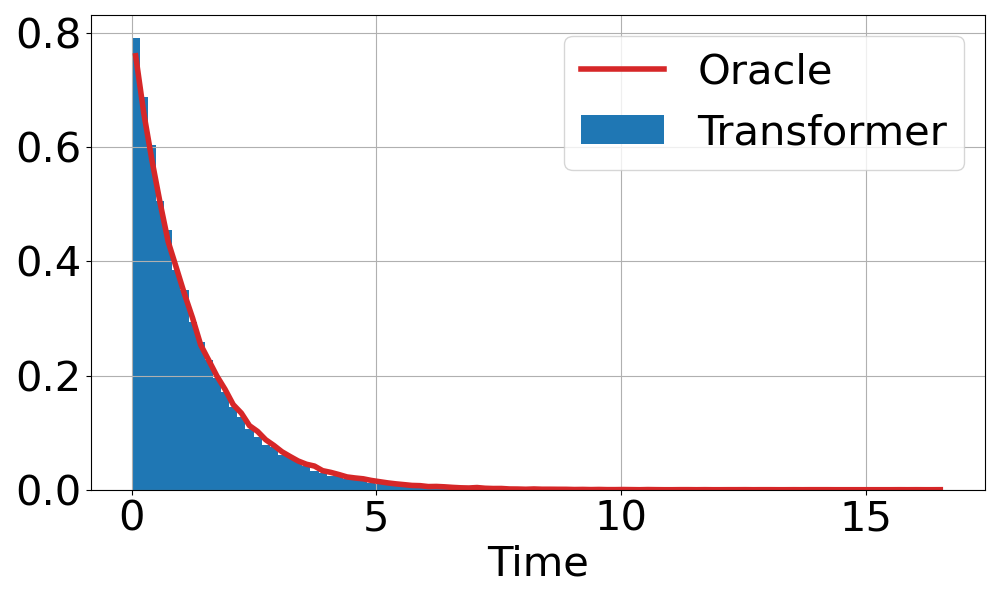}
    \end{minipage}
    \hfill
    \begin{minipage}[b]{0.325\textwidth}   
      \includegraphics[width=\textwidth, height=3cm]{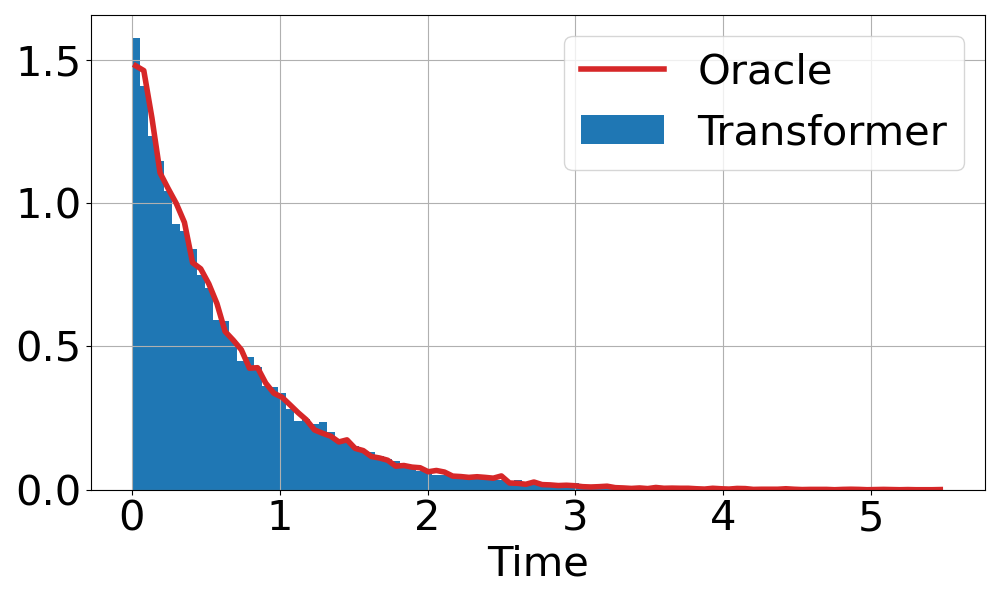}
    \end{minipage}
\hfill
\centering{\textbf{Customer 3}}\\
    \begin{minipage}[b]{0.325\textwidth}   
      \includegraphics[width=\textwidth, height=3cm]{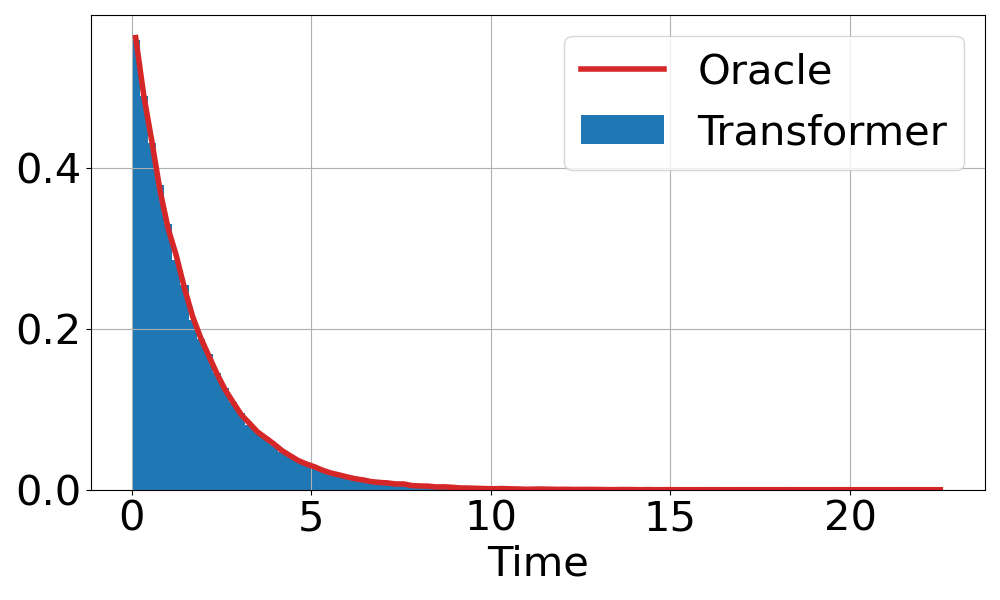}
    \end{minipage}
    \hfill
    \begin{minipage}[b]{0.325\textwidth}   
      \includegraphics[width=\textwidth, height=3cm]{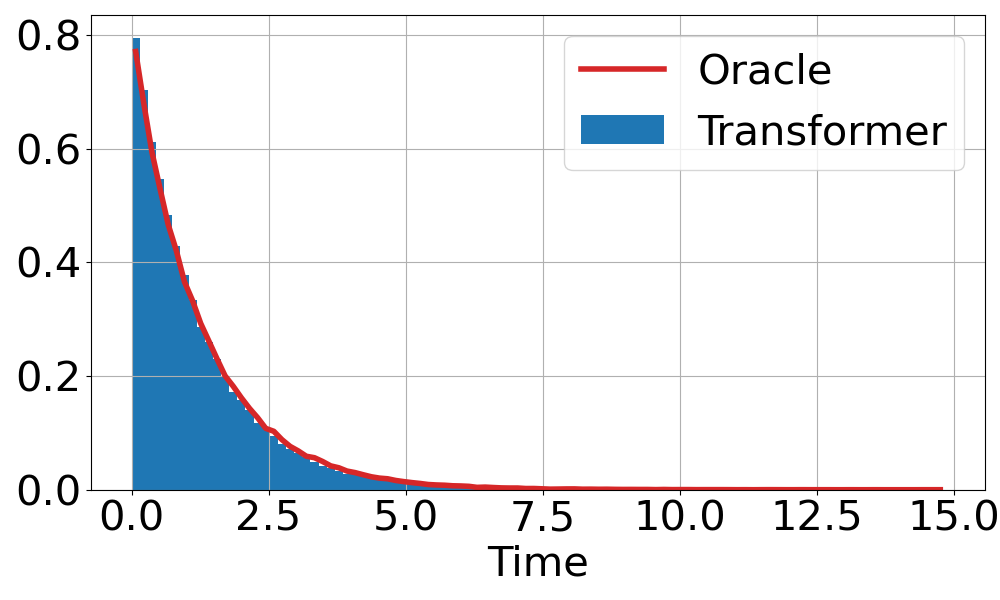}
    \end{minipage}
    \hfill
    \begin{minipage}[b]{0.325\textwidth}   
      \includegraphics[width=\textwidth, height=3cm]{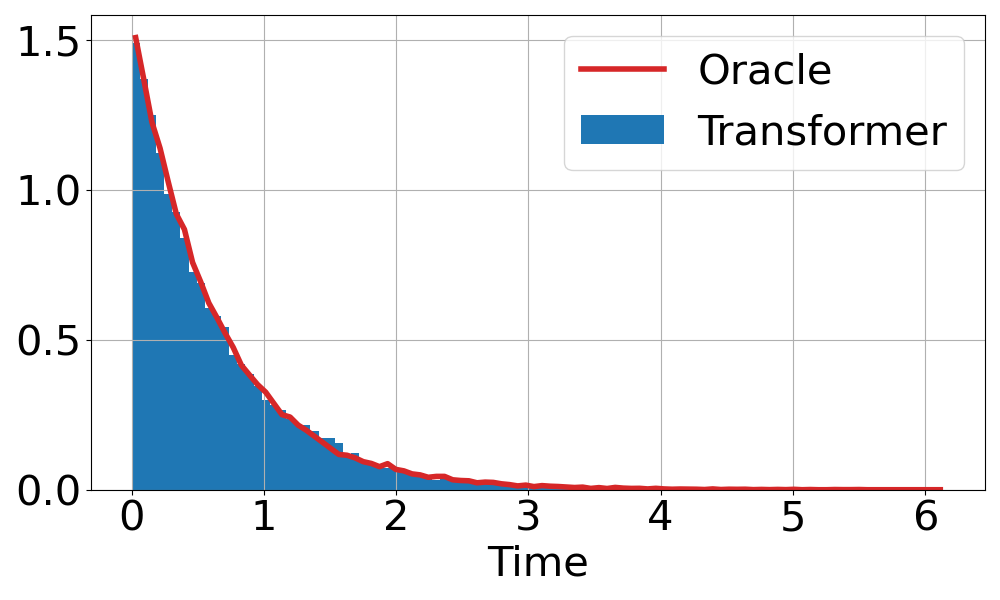}
    \end{minipage}
\hfill
      \centering{\textbf{Customer 4}}\\
    \begin{minipage}[b]{0.325\textwidth}   
      \includegraphics[width=\textwidth, height=3cm]{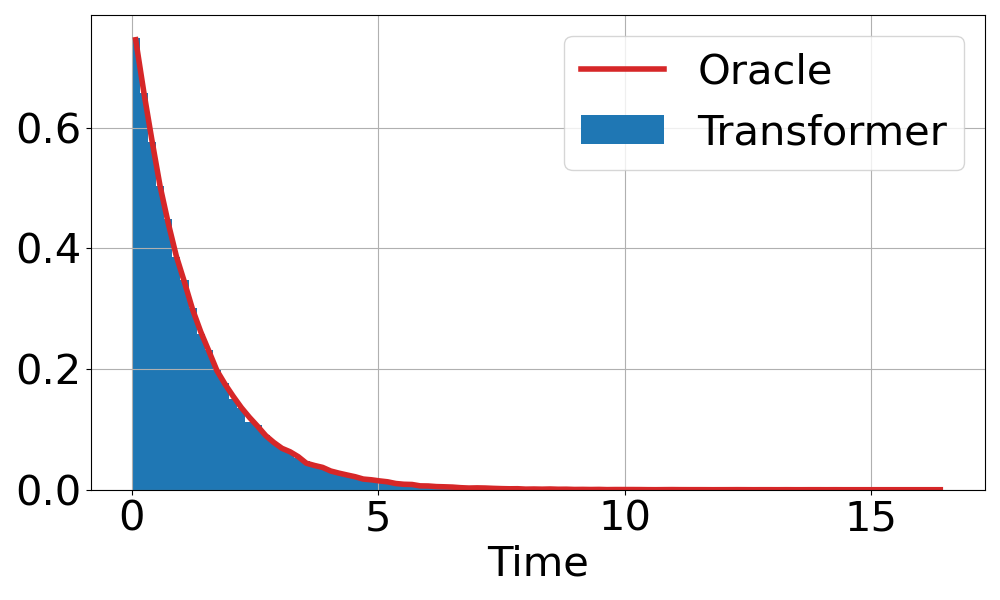}
    \end{minipage}
    \hfill
    \begin{minipage}[b]{0.325\textwidth}   
      \includegraphics[width=\textwidth, height=3cm]{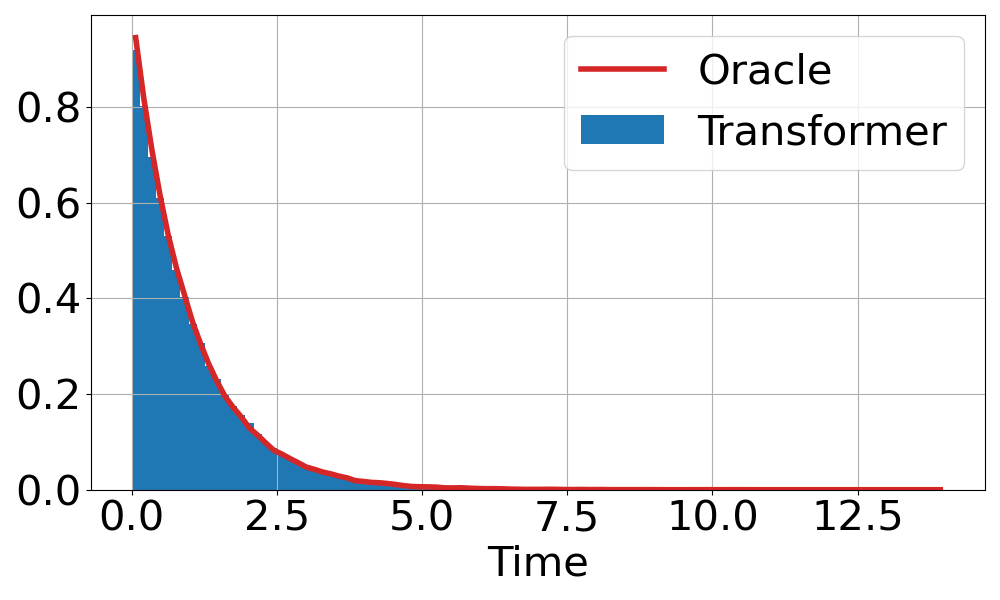}
    \end{minipage}
    \hfill
    \begin{minipage}[b]{0.325\textwidth}   
      \includegraphics[width=\textwidth, height=3cm]{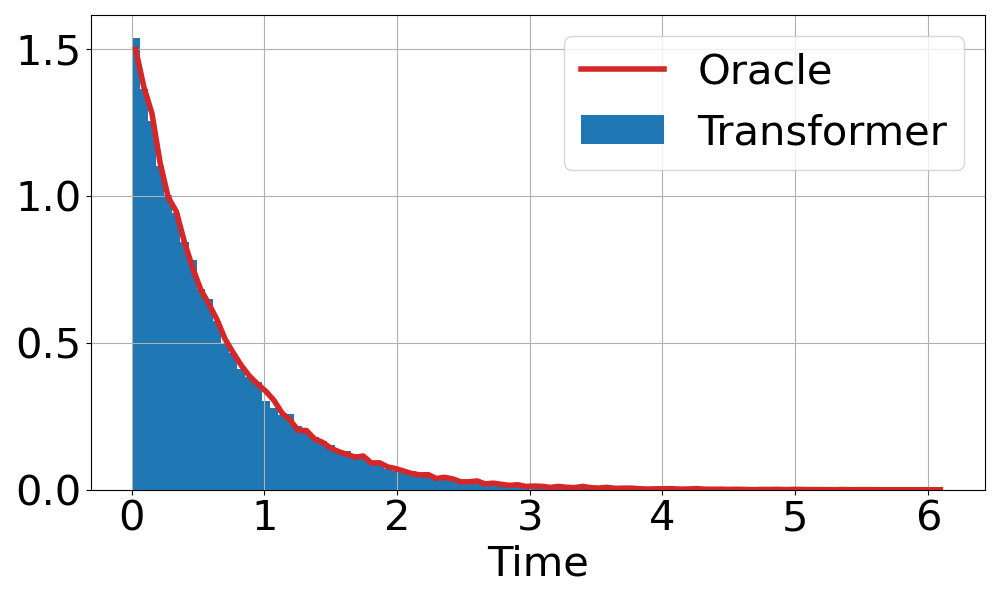}
    \end{minipage}
\hfill
      \centering{\textbf{Customer 5}}\\
    \begin{minipage}[b]{0.325\textwidth}   
      \includegraphics[width=\textwidth, height=3cm]{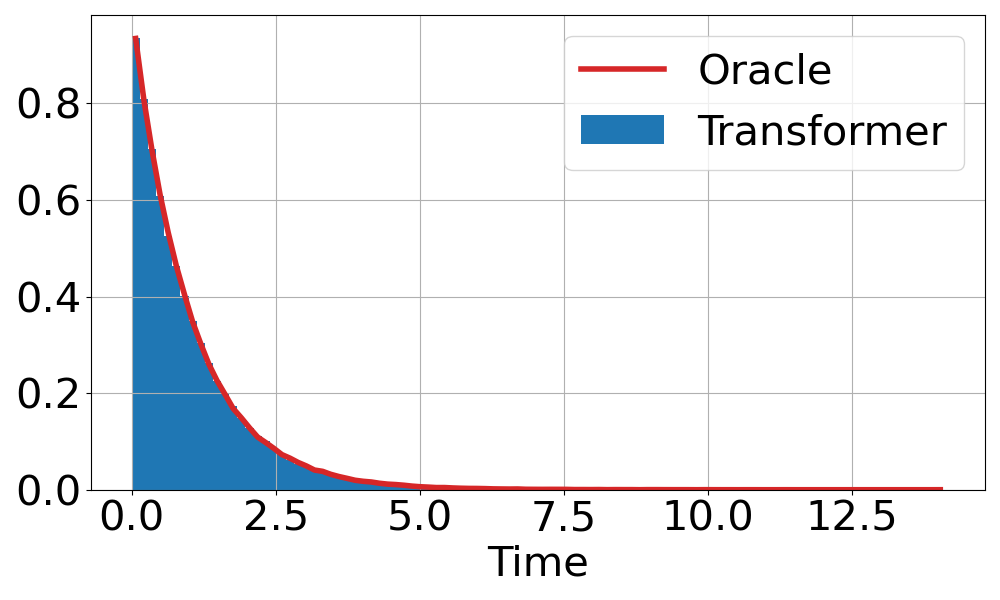}
      \centering{\textbf{Inter-arrival time}}
    \end{minipage}
    \hfill
    \begin{minipage}[b]{0.325\textwidth}   
      \includegraphics[width=\textwidth, height=3cm]{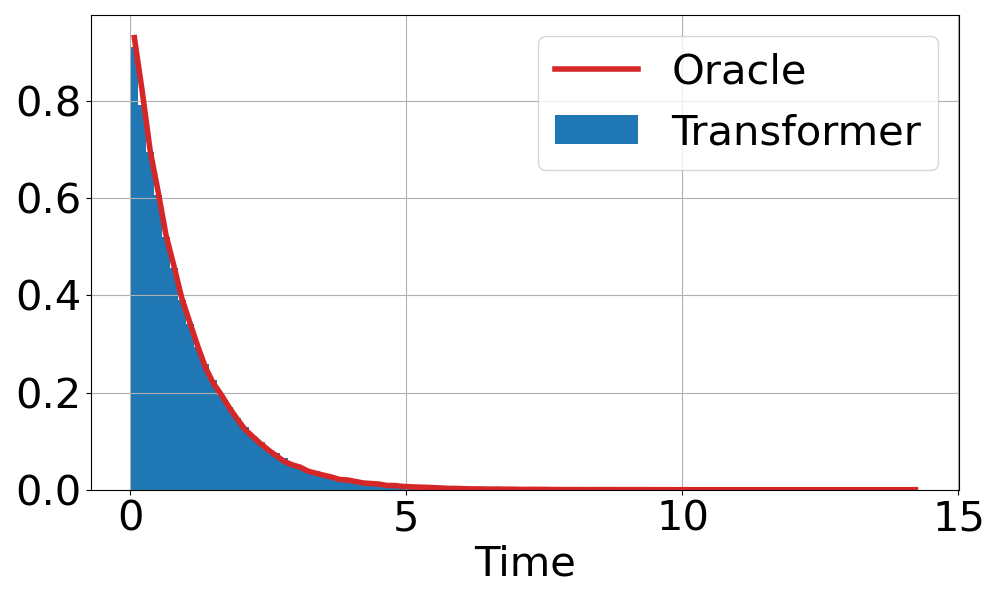}
        \centering{\textbf{Service time}}
    \end{minipage}
    \hfill
    \begin{minipage}[b]{0.325\textwidth}   
    \includegraphics[width=\textwidth, height=3cm]{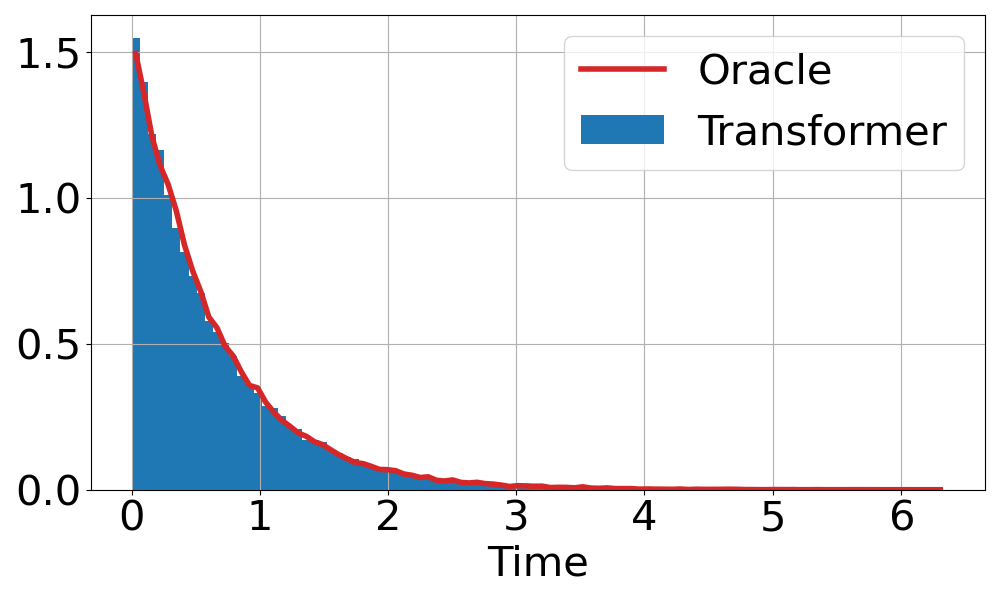}
    \centering{\textbf{Positive waiting time}}
    \end{minipage}
    \caption{\textbf{Multi-customer type M/M/5 system:} Comparison of performance measure distributions predicted by the sequence model (transformer) with the true distributions derived from the underlying M/M/5 queuing network. }
    \label{fig:M-M-5-distributions-individual-customers}
  \end{figure}

\subsubsection{General distributions - $G/G/1$ queue:} 
To demonstrate the model’s ability to handle general (non-exponential) timing patterns, we test it on a G/G/1 queue in which both inter-arrival and service times are uniformly distributed. Inter-arrival times are drawn from $U[3,6]$ and service times from $U[2,4]$.  The Transformer is trained on 
10,000 simulated trajectories from this system. We discretize time with 0.1-second bins—fine enough to capture the uniform densities while remaining computationally efficient. Performance is then evaluated on 
500 trajectories generated from the trained transformer, from which we estimate the corresponding performance-metric distributions.

\subsubsection{Non stationary arrival patterns} 
\label{Appendix:Non-Stationary-Arrival}
To test how well the model handles non-stationary arrivals, we consider an $M_t/M/N$ queue with $N=11$ servers, each with a service rate of 1.6 customers per hour.
The arrival rate $\lambda(t)$ changes every hour according to  $[8,8,8,8,8,14,15,16,17,18,19,18,17,16,15,11,11]$ customers per hour.

A Transformer with a maximum sequence length of 400 events is trained on 10,000 trajectories generated from this process. Timestamps are recorded in minutes and discretised into uniform bins of width  $0.01$ min, providing fine temporal resolution without excessive computational cost. Model performance is evaluated on 500 trajectories sampled from the trained network, from which we estimate the corresponding performance-metric distributions.

\subsubsection{Call-center}

We benchmark our model on a semi-synthetic data set derived from the call-centre operations of an Israeli bank, documented by \citet{seelab-report}. We recreate the  original environment in a simulator while preserving the empirical distributions observed between period January and December~1999.

\paragraph{Simulator architecture.}
The system comprises two sequential service stages:
\begin{enumerate}
  \item \textbf{Voice-Response Unit (VRU).}  
        All callers first enter an automated VRU with effectively infinite capacity, which we approximate by modelling \(1\,000\) parallel servers.
  \item \textbf{Human-agent stage.}  
        Upon VRU completion, callers join a single queue served by six human agents—the historical average of \(6.08\) simultaneous agents (the original system allowed up to fifteen).
\end{enumerate}
When every agent is occupied, incoming callers join a queue—about 65 \% of arrivals experience this wait. While queued, roughly 15 \% abandon the call before reaching an agent.

\paragraph{Arrival, service, and patience processes.}
\begin{itemize}
  \item \textbf{Arrivals.} Inter-arrival times follow an exponential distribution. Callers are partitioned into six customer types with empirical proportions \([50\%, 20\%, 10\%, 5\%, 5\%, 10\%]\).
  \item \textbf{Service times.} For both stages we sample service times from the empirical histograms reported in \citet{seelab-report}.
  \item \textbf{Patience (abandonment).} Patience times are modelled as exponential. Parameters are calibrated via a method-of-moments fit to the Kaplan–Meier means and standard deviations of \citet{seelab-report}, yielding mean patience times (seconds) \([521,\,644,\,528,\,703,\,647,\,491]\) for customer types~1–6.
\end{itemize}

\paragraph{Queue discipline.}
A non-pre-emptive priority rule is enforced: types~2 and~5 have high priority, whereas types~1, 3, 4, and~6 are low priority. Within each priority class, service is first-in-first-out (FIFO).


Figures~\ref{fig:representative-average-arrival-time-distributions-see-lab}–\ref{fig:representative-average-waiting-time-distributions-see-lab-appendix}
compare the Transformer-generated inter-arrival, service-time, and average waiting-time distributions with those produced by the discrete-event simulator for each customer type, demonstrating the fidelity of the learned generative process.

\begin{figure}[h!]
    \centering
    \begin{minipage}[b]{0.325\textwidth} 
    \centering
    \textbf{Customer 1}
    \includegraphics[width=\textwidth, height=3cm]{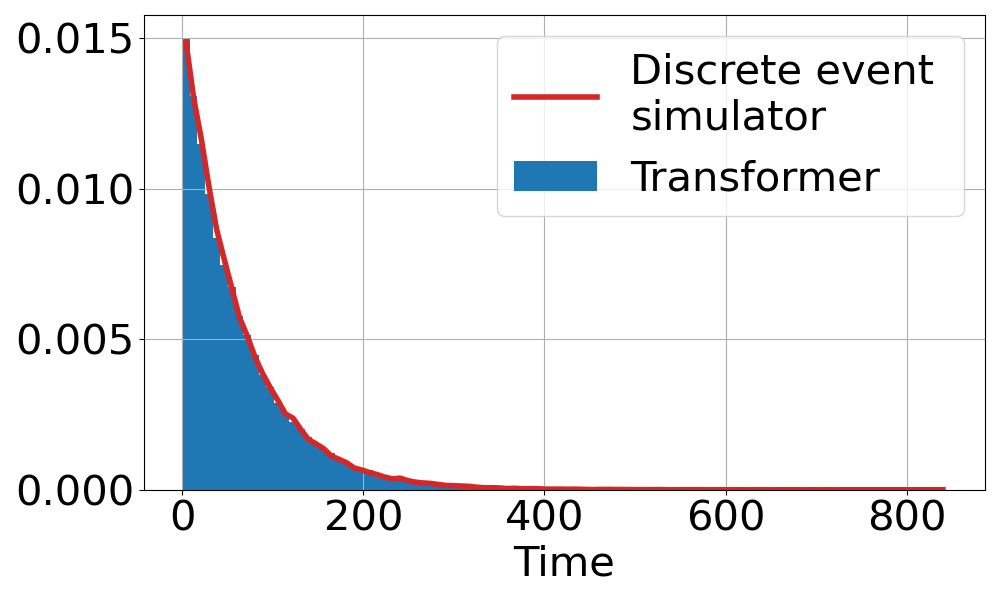}
    \end{minipage}
    \hfill
    \begin{minipage}[b]{0.325\textwidth}  
    \centering
    \textbf{Customer 2}
    \includegraphics[width=\textwidth, height=3cm]{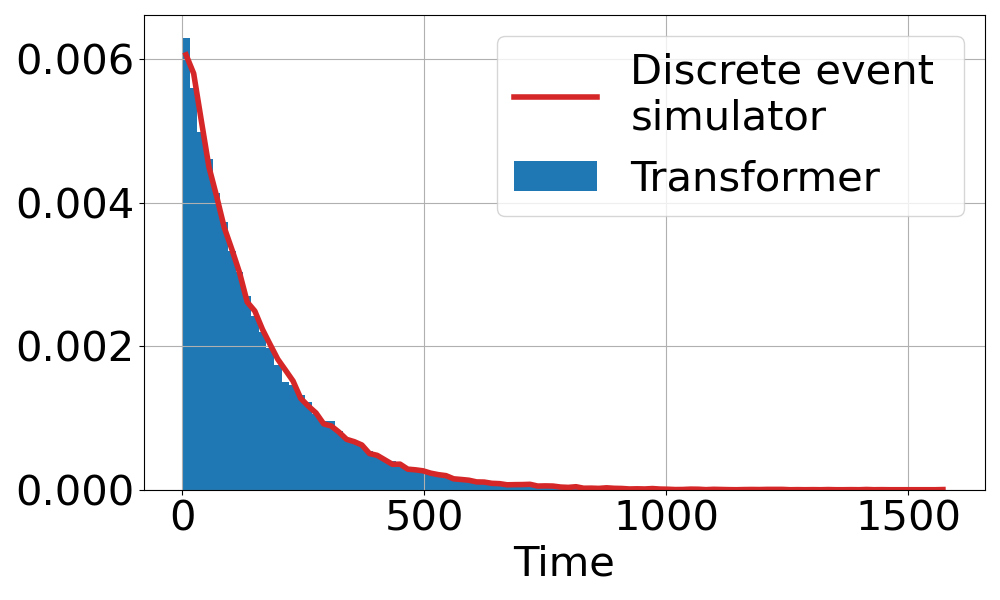}
    \end{minipage}
    \hfill
    \begin{minipage}[b]{0.325\textwidth}
    \centering
    \textbf{Customer 3}
    \includegraphics[width=\textwidth, height=3cm]{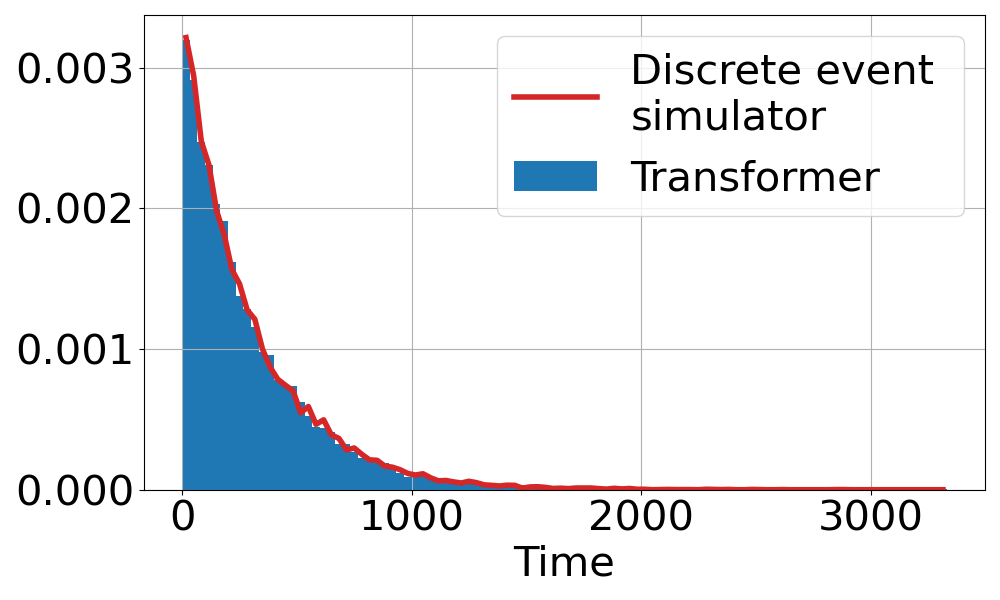}
    \end{minipage}
    
    \begin{minipage}[b]{0.325\textwidth} 
    \centering
    \textbf{Customer 4}
    \includegraphics[width=\textwidth, height=3cm]{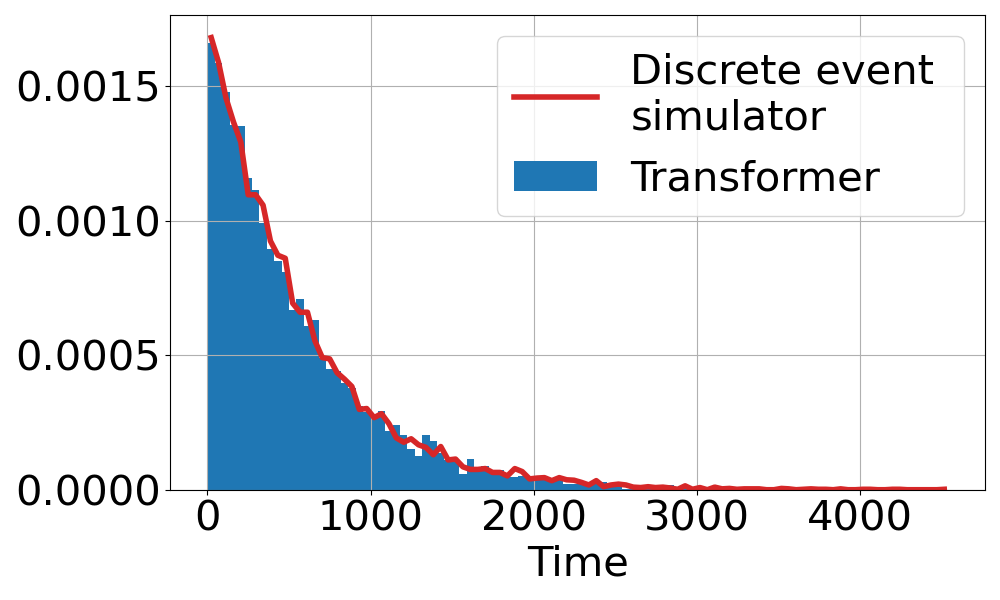}
    \end{minipage}
    \hfill
    \begin{minipage}[b]{0.325\textwidth} 
    \centering
    \textbf{Customer 5}
    \includegraphics[width=\textwidth, height=3cm]{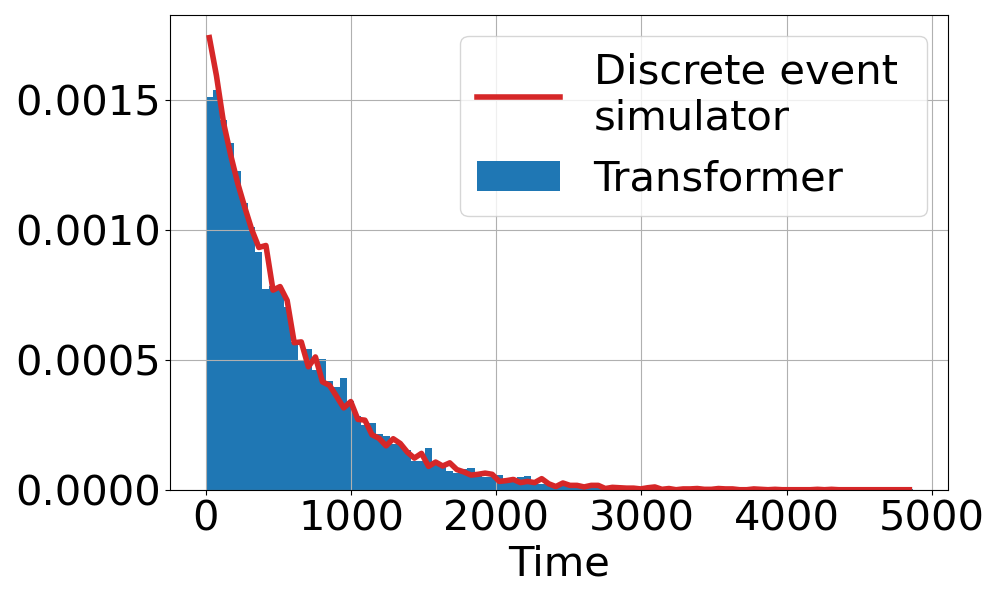}
    \end{minipage}
    \hfill
    \begin{minipage}[b]{0.325\textwidth}
    \centering
    \textbf{Customer 6}
    \includegraphics[width=\textwidth, height=3cm]{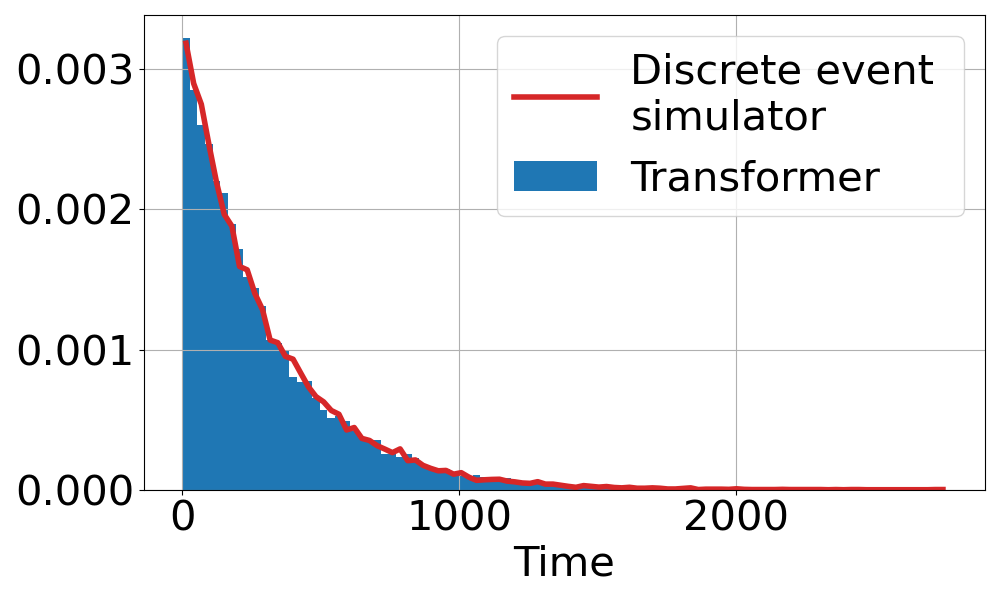}
    \end{minipage}
    \caption{\textbf{Call-center (Inter-arrival time distributions for different customers):} Comparison of average inter-arrival time distributions predicted by sequence model (transformer) and discrete event simulator (derived using the knowledge of underlying queuing network) for a call-center.}
    \label{fig:representative-average-arrival-time-distributions-see-lab}
\end{figure}

\begin{figure}[h!]
    \centering
    \begin{minipage}[b]{0.325\textwidth} 
    \centering
    \textbf{Customer 1}
    \includegraphics[width=\textwidth, height=3cm]{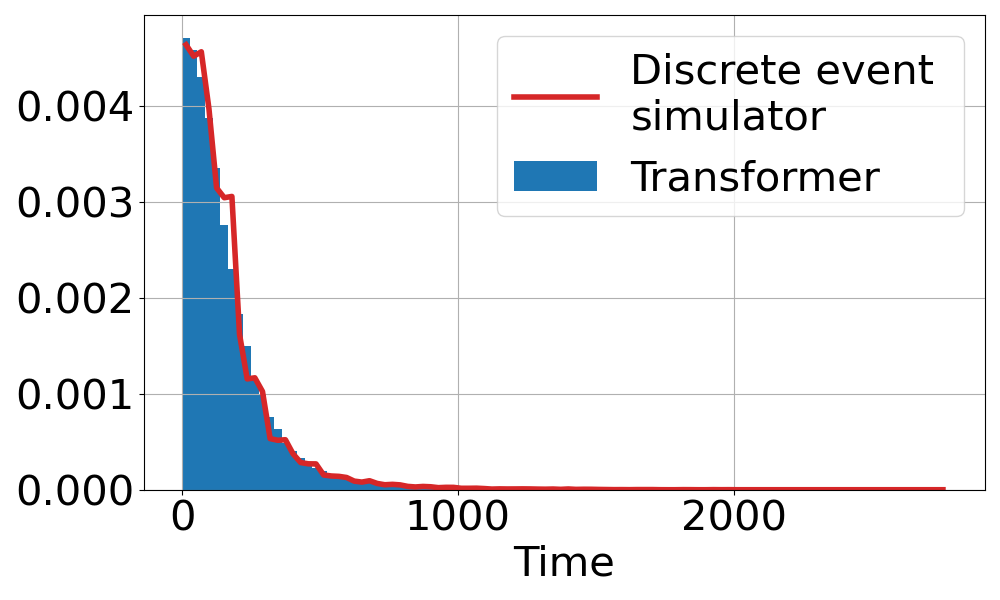}
    \end{minipage}
    \hfill
    \begin{minipage}[b]{0.325\textwidth}
    \centering
    \textbf{Customer 2}
    \includegraphics[width=\textwidth, height=3cm]{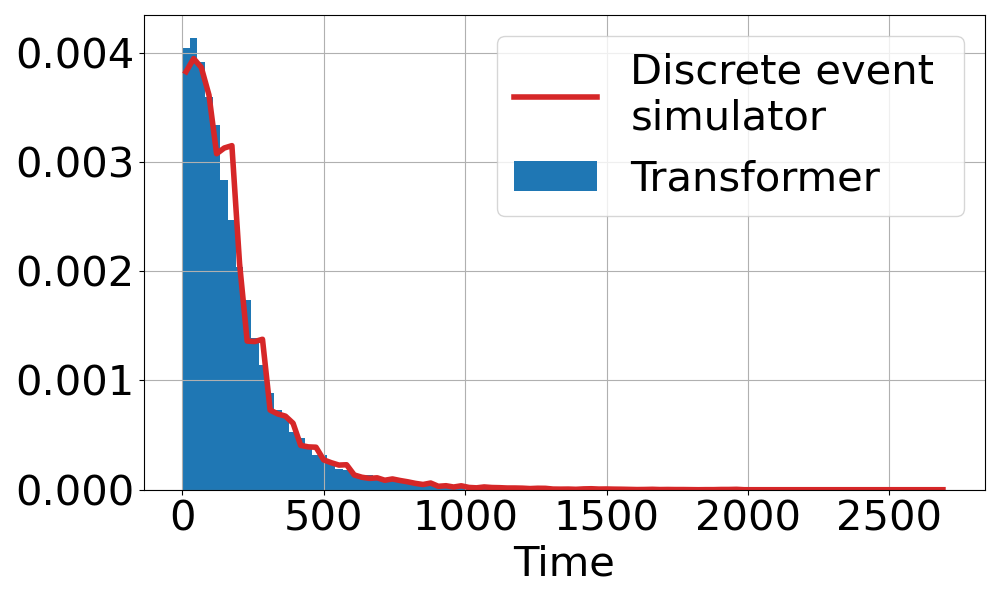}
    \end{minipage}
    \hfill
    \begin{minipage}[b]{0.325\textwidth}
    \centering
    \textbf{Customer 3}
    \includegraphics[width=\textwidth, height=3cm]{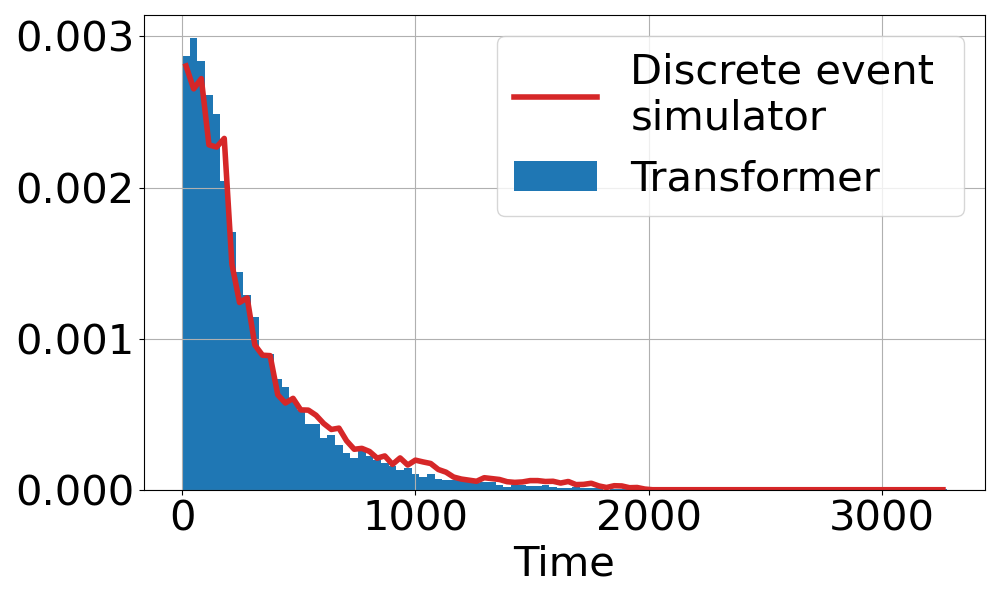}
    \end{minipage}
    
    \begin{minipage}[b]{0.325\textwidth}
    \centering
    \textbf{Customer 4}
    \includegraphics[width=\textwidth, height=3cm]{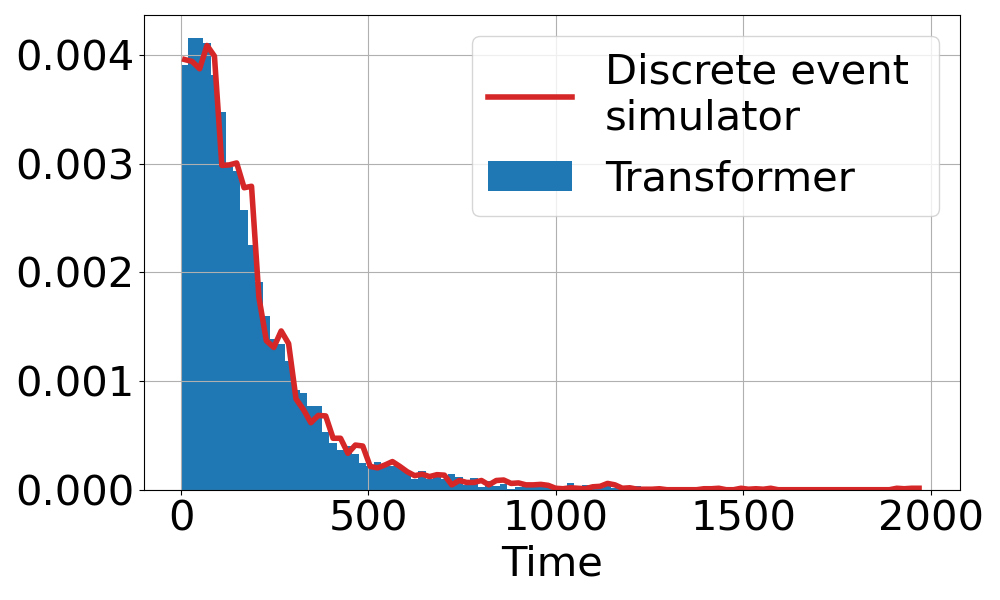}
    \end{minipage}
    \hfill
    \begin{minipage}[b]{0.325\textwidth} 
    \centering
    \textbf{Customer 5}
    \includegraphics[width=\textwidth, height=3cm]{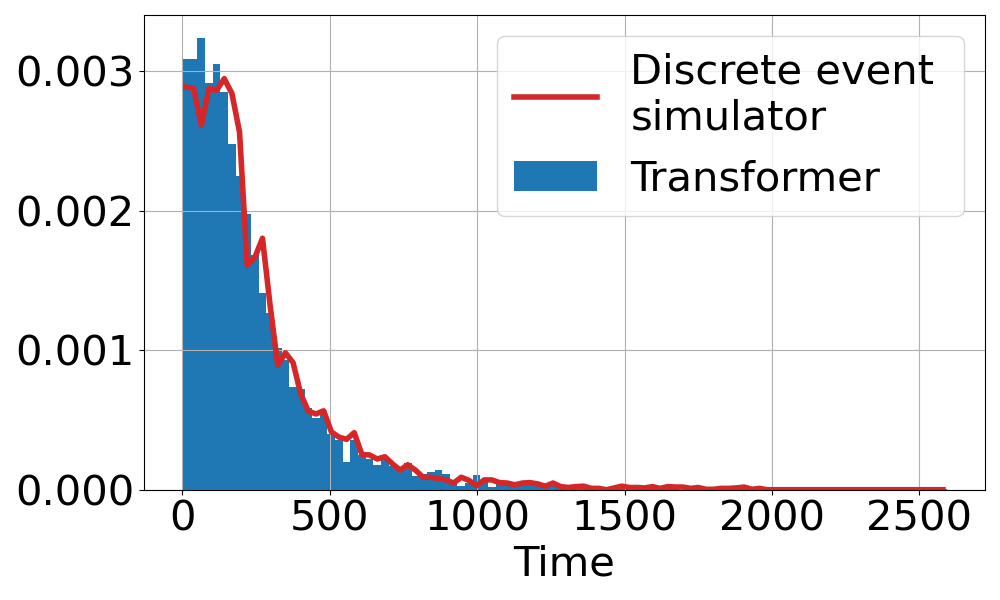}
    \end{minipage}
    \hfill
    \begin{minipage}[b]{0.325\textwidth}
    \centering
    \textbf{Customer 6}
    \includegraphics[width=\textwidth, height=3cm]{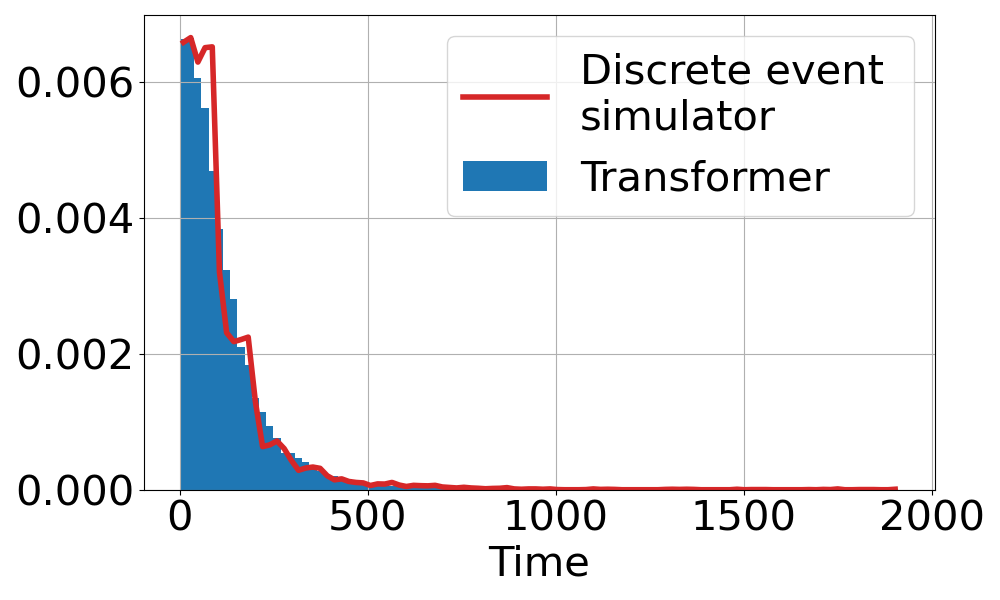}
    \end{minipage}
    \caption{\textbf{Call-center (Service time distributions for different customers):} Comparison of average service time distributions predicted by sequence model (transformer) and discrete event simulator (derived using the knowledge of underlying queuing network) for a call-center.}
    \label{fig:representative-average-service-time-distributions-see-lab}
\end{figure}

\begin{figure}[h!]
    \centering
    \begin{minipage}[b]{0.325\textwidth}
    \centering
    \textbf{Customer 1}
    \includegraphics[width=\textwidth, height=3cm]{tex/fig/fig_see_lab/waiting_customer_0.png}
    \end{minipage}
    \hfill
    \begin{minipage}[b]{0.325\textwidth} 
    \centering
    \textbf{Customer 2}
    \includegraphics[width=\textwidth, height=3cm]{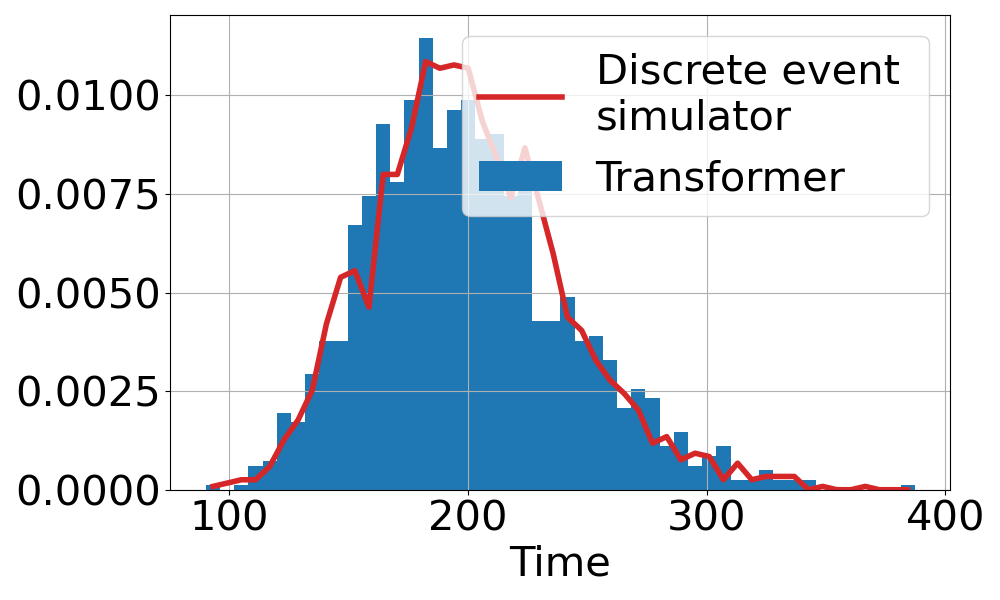}
    \end{minipage}
    \hfill
    \begin{minipage}[b]{0.325\textwidth} 
     \centering
    \textbf{Customer 3}
    \includegraphics[width=\textwidth, height=3cm]{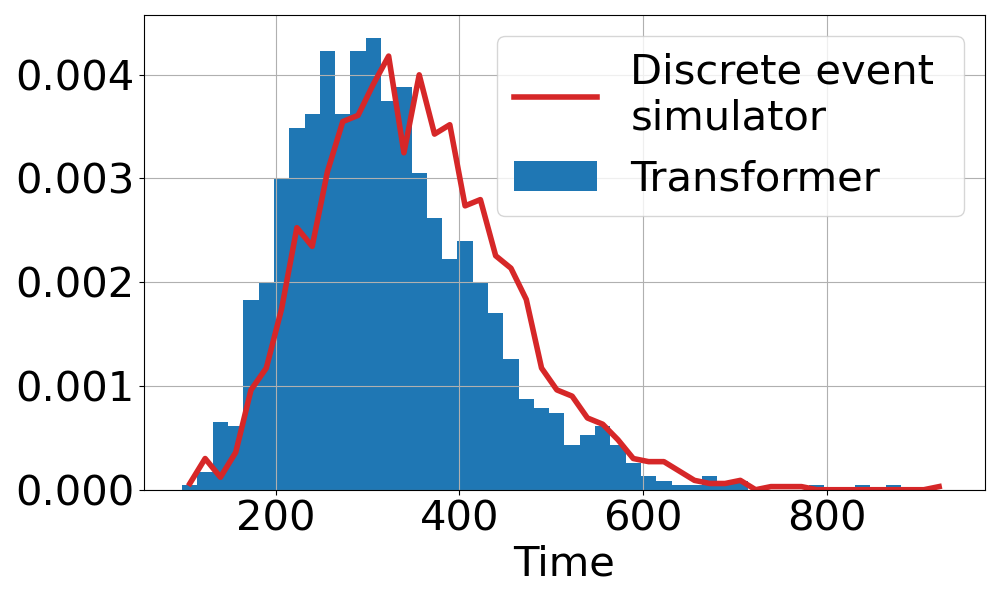}
    \end{minipage}
    
    \begin{minipage}[b]{0.325\textwidth} 
    \centering
    \textbf{Customer 4}
    \includegraphics[width=\textwidth, height=3cm]{tex/fig/fig_see_lab/waiting_customer_3.png}
    \end{minipage}
    \hfill
    \begin{minipage}[b]{0.325\textwidth}  
    \centering
    \textbf{Customer 5}
    \includegraphics[width=\textwidth, height=3cm]{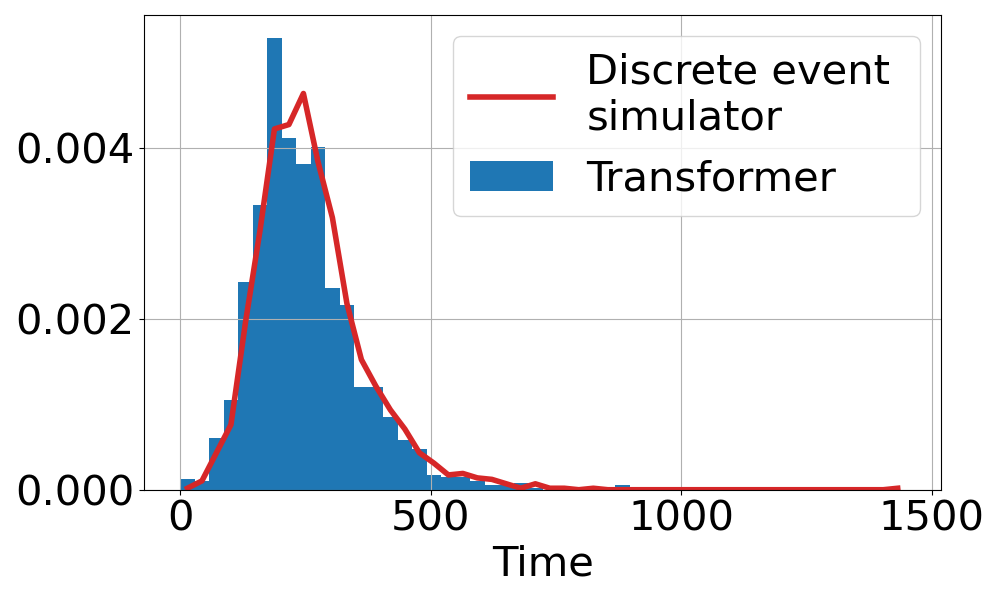}
    \end{minipage}
    \hfill
    \begin{minipage}[b]{0.325\textwidth}
    \centering
    \textbf{Customer 6}
    \includegraphics[width=\textwidth, height=3cm]{tex/fig/fig_see_lab/waiting_customer_5.png}
    \end{minipage}
    \caption{\textbf{Call-center (Average waiting time distributions for different customers):} Comparison of average waiting time distributions predicted by sequence model (transformer) and discrete event simulator (derived using the knowledge of underlying queuing network) for a call-center.}
    \label{fig:representative-average-waiting-time-distributions-see-lab-appendix}
\end{figure}

\subsection{Experimental details for uncertainty quantification (Section \ref{sec:uncertainty_q_experiments})}
\label{sec:Additional experiments:uncertainty}


For this experiment, we train a transformer model with a maximum sequence length of 400 events and 13,000 uniform time bins. The model is trained on 40,000 simulated trajectories of an M/M/1 queue, each generated with arrival rates $\lambda$ and service rates $\nu$ drawn independently from a uniform prior. After training, we generate 10,000 trajectories from the trained model and analyze the resulting performance-metric distributions. We repeat the procedure under several different uniform priors for  $\lambda$ and $\nu$.  Across all prior settings and history lengths, the Transformer yields consistent accuracy, as illustrated in Figures~\ref{fig:uq_distribution_appendix} and~\ref{fig:M-M-1-time-distributions in Uncertainty-2-4-3-5-kl}.

\begin{figure}[t]
    \centering
    \centering\textbf{$\lambda \sim \text{Uniform}(1.5,2.5)$, $\nu \sim \text{Uniform}(3,6)$, $n=100$,  $N-n=300$ }\\
    \begin{minipage}[b]{0.325\textwidth}   
      \includegraphics[width=\textwidth, height=3cm]{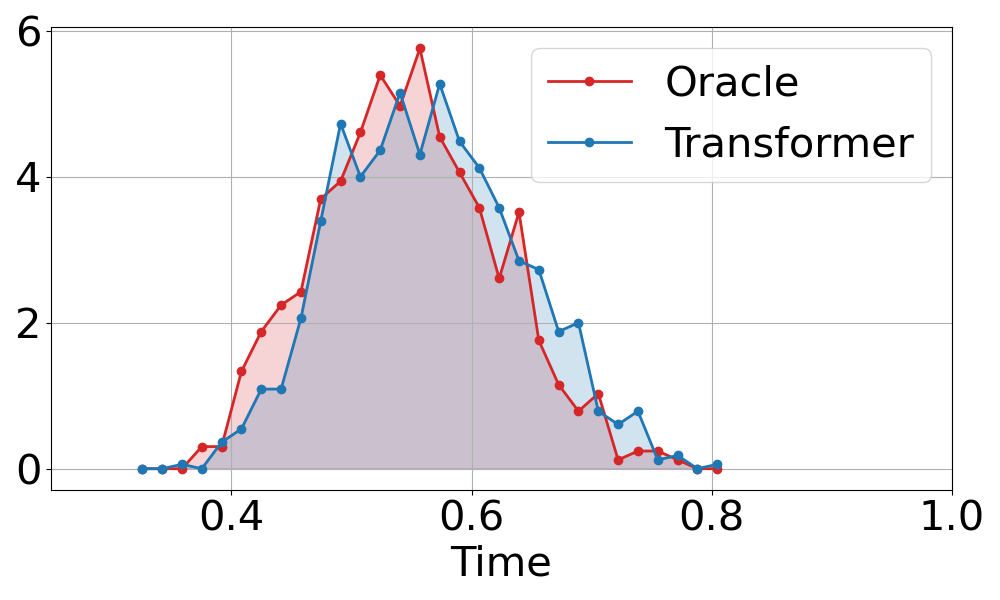}
    \end{minipage}
    \hfill
    \begin{minipage}[b]{0.325\textwidth}   
      \includegraphics[width=\textwidth, height=3cm]{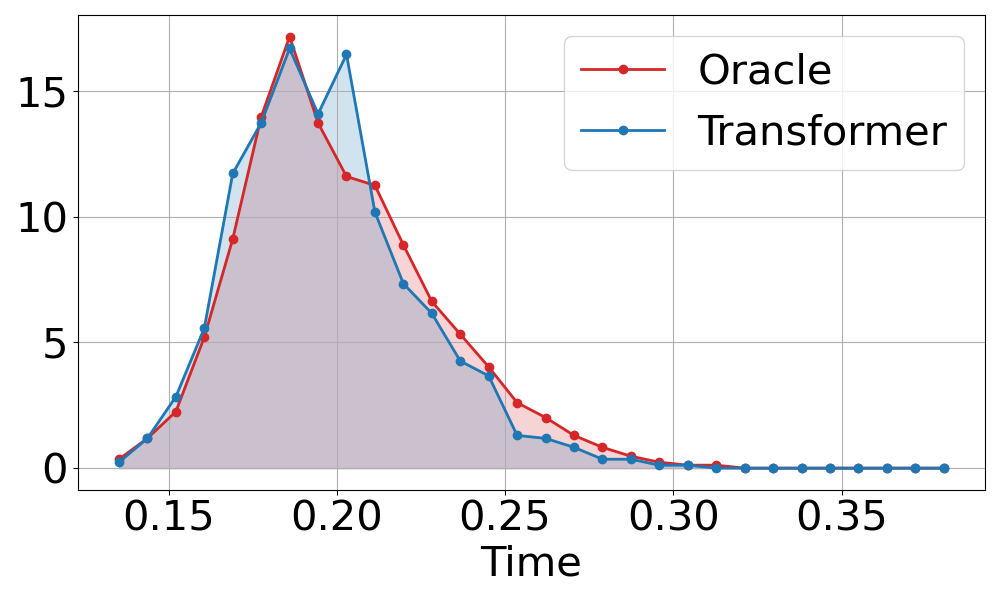}
    \end{minipage}
    \hfill
    \begin{minipage}[b]{0.325\textwidth}   
      \includegraphics[width=\textwidth, height=3cm]{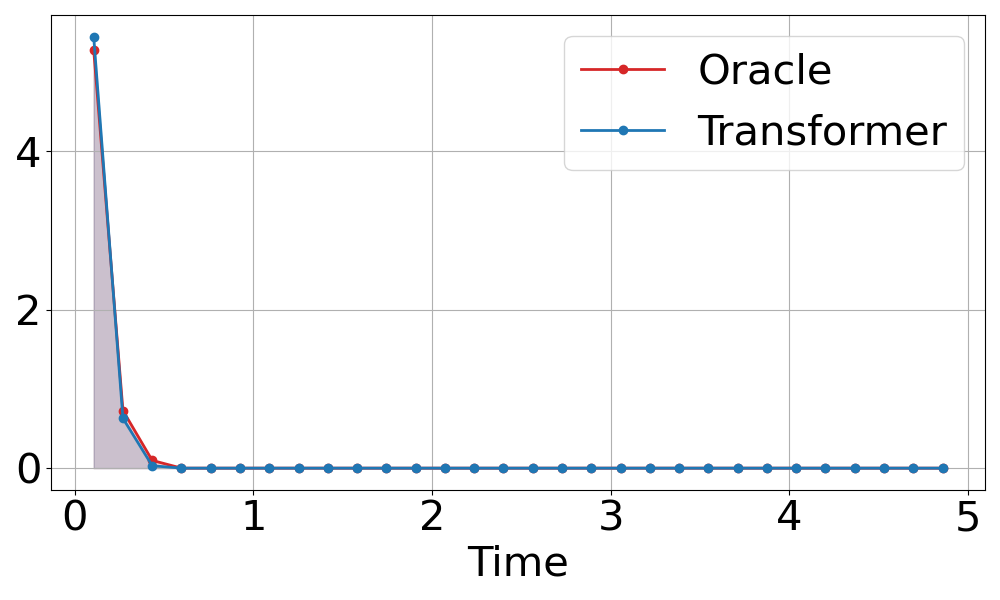}
    \end{minipage}

    \centering\textbf{$\lambda \sim \text{Uniform}(2,4)$, $\nu \sim \text{Uniform}(3,5)$, $n=100$,  $N-n=300$ }\\ 
    \begin{minipage}[b]{0.325\textwidth}   
      \includegraphics[width=\textwidth, height=3cm]{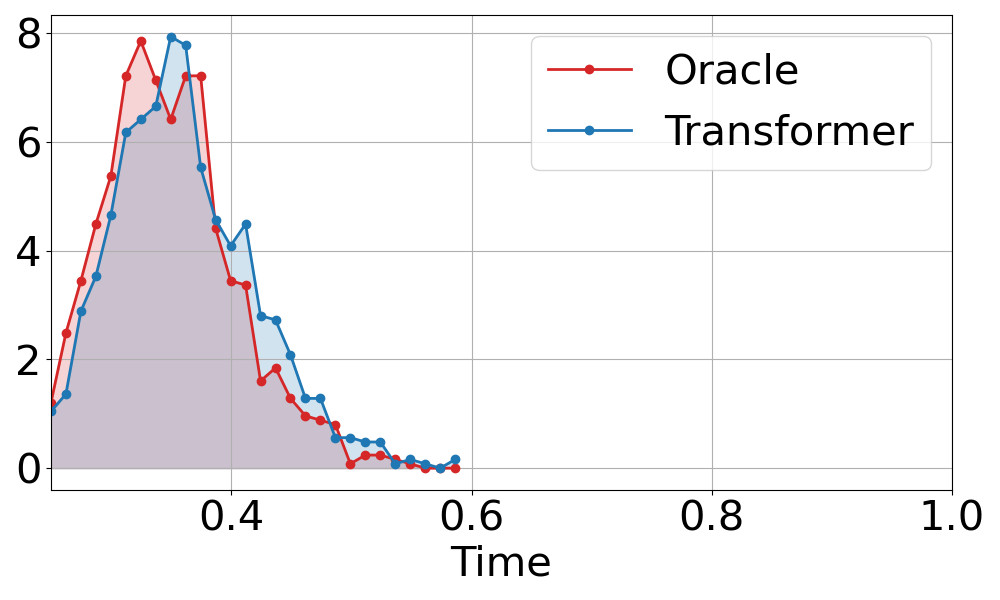}
    \end{minipage}
    \hfill
    \begin{minipage}[b]{0.325\textwidth}   
      \includegraphics[width=\textwidth, height=3cm]{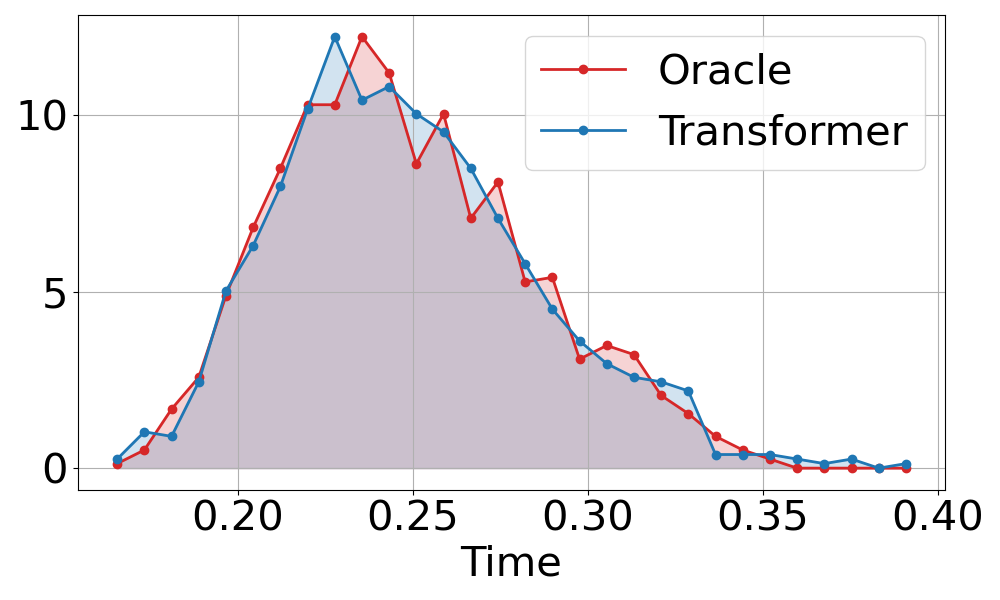}
    \end{minipage}
    \hfill
    \begin{minipage}[b]{0.325\textwidth}   
      \includegraphics[width=\textwidth, height=3cm]{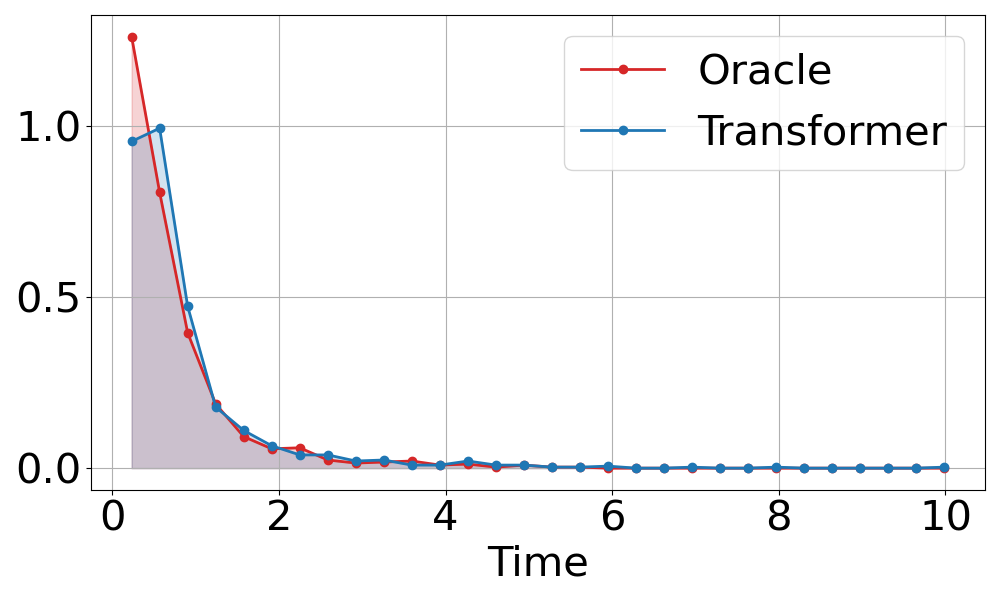}
    \end{minipage}

\centering\textbf{$\lambda \sim \text{Uniform}(2,4)$, $\nu \sim \text{Uniform}(3,5)$, $n=200$,  $N-n=200$ }\\
    \begin{minipage}[b]{0.325\textwidth}   
      \includegraphics[width=\textwidth, height=3cm]{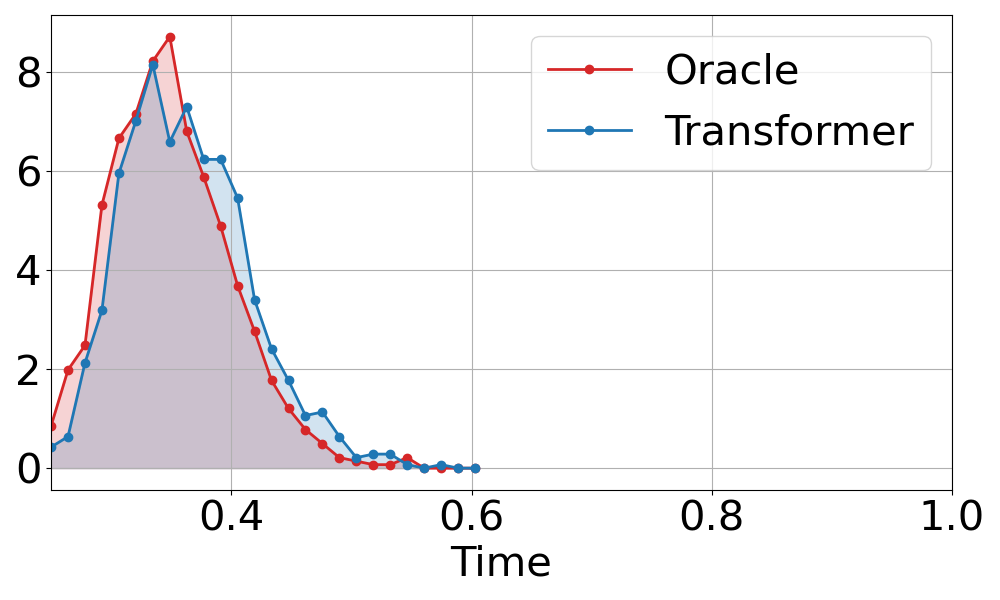}
      \centering\textbf{Inter-arrival time \\distribution}
    \end{minipage}
    \hfill
    \begin{minipage}[b]{0.325\textwidth}   
      \includegraphics[width=\textwidth, height=3cm]{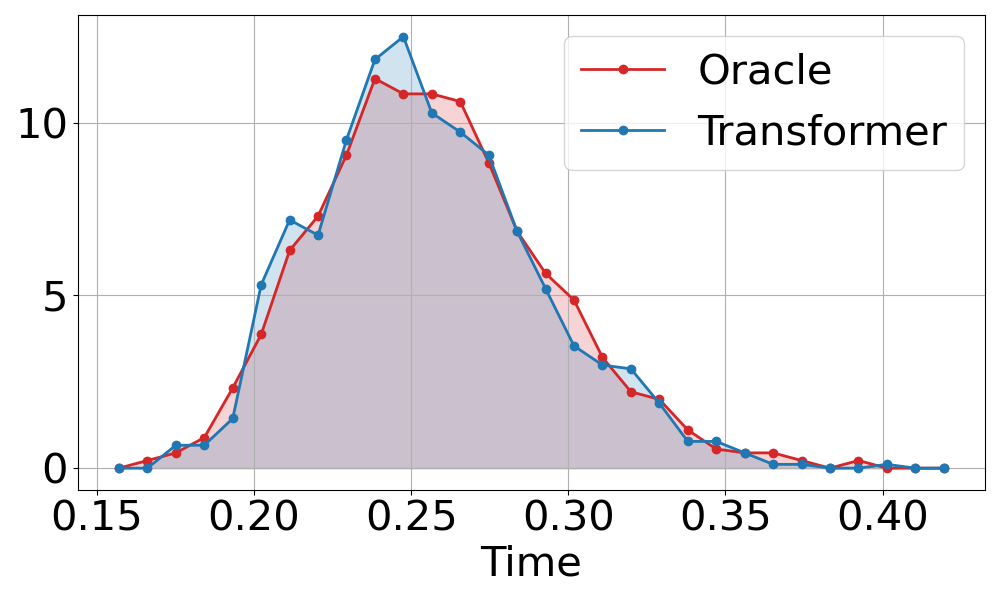}
      \centering\textbf{Service time \\distribution}
    \end{minipage}
    \hfill
    \begin{minipage}[b]{0.325\textwidth}   
      \includegraphics[width=\textwidth, height=3cm]{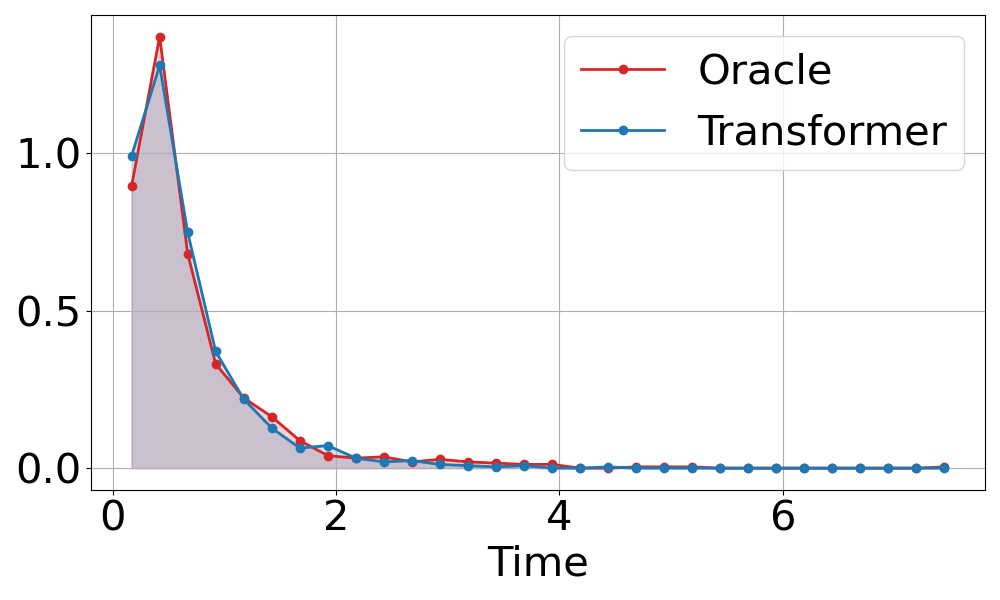}
      \centering\textbf{Waiting time \\distribution}
    \end{minipage}
\vspace{3mm}
    \caption{\textbf{Uncertainty Quantification for M/M/1 queue:} Comparison of performance measure distributions predicted by the sequence model (transformer) with those from the oracle.}
    \label{fig:uq_distribution_appendix}
  \end{figure}

  \begin{figure}[h!]
    \centering
    \centering\textbf{$\lambda \sim \text{Uniform}(1.5,2.5)$, $\nu \sim \text{Uniform}(3,6)$, $n=100$,  $N-n=300$ }\\
    \begin{minipage}[b]{0.325\textwidth}   
      \includegraphics[width=\textwidth, height=3.5cm]{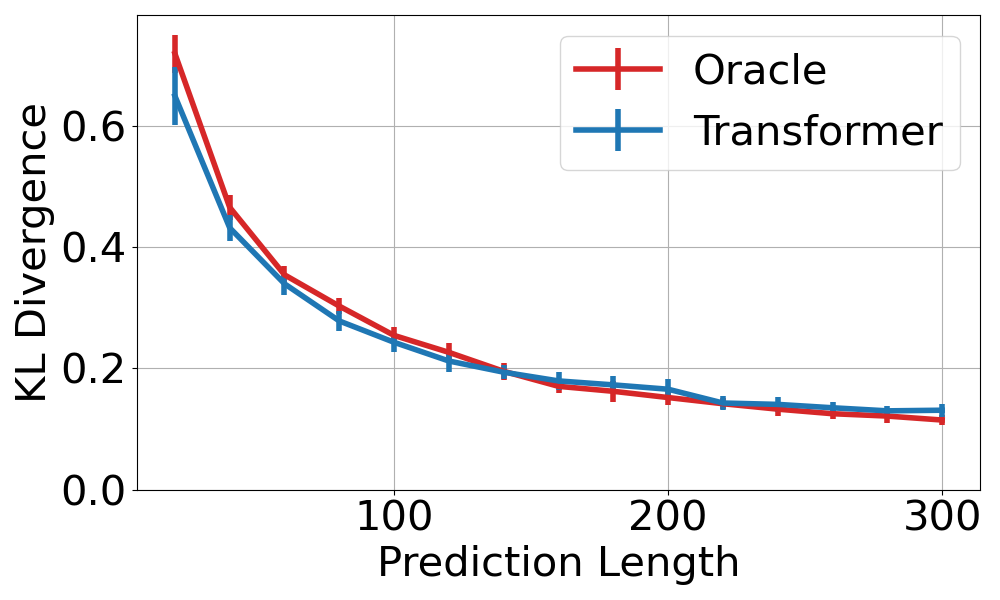}
    \end{minipage}
    \hfill
    \begin{minipage}[b]{0.325\textwidth}   
      \includegraphics[width=\textwidth, height=3.5cm]{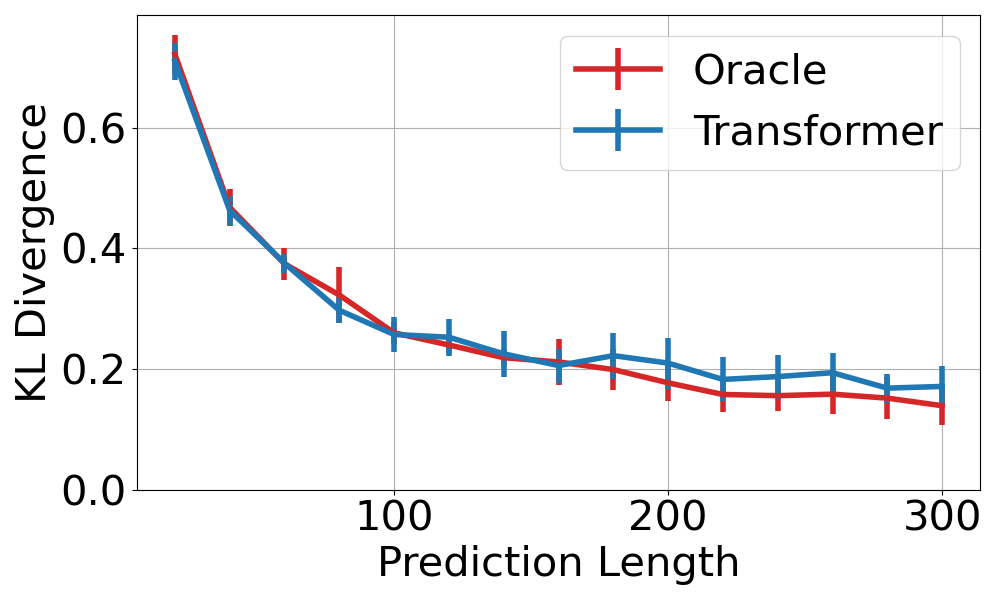}
    \end{minipage}
    \hfill
    \begin{minipage}[b]{0.325\textwidth}   
      \includegraphics[width=\textwidth, height=3.5cm]{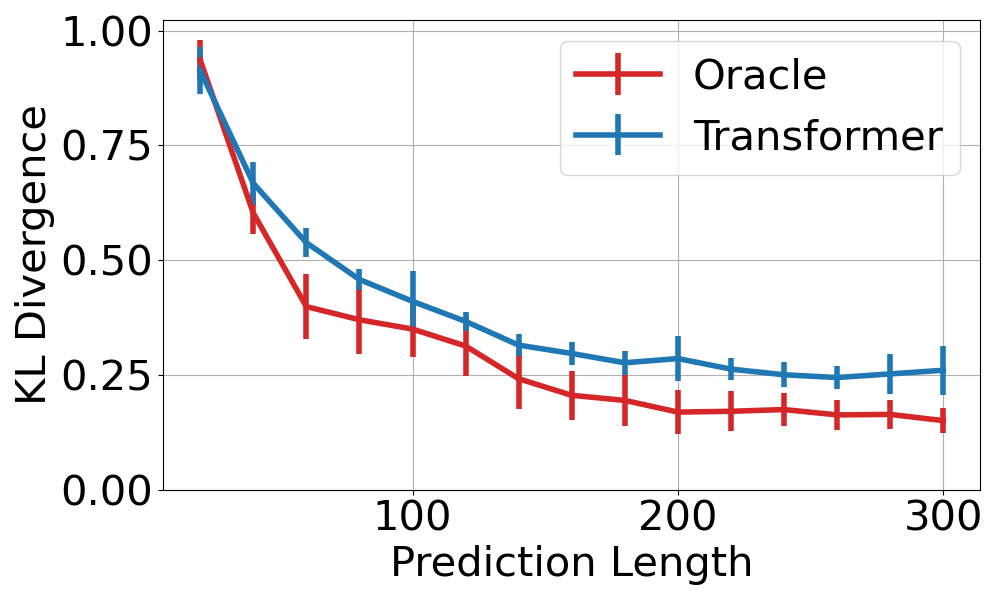}
    \end{minipage}

 \centering\textbf{$\lambda \sim \text{Uniform}(2,4)$, $\nu \sim \text{Uniform}(3,5)$, $n=100$,  $N-n=300$ }\\    
    \begin{minipage}[b]{0.325\textwidth}   
      \includegraphics[width=\textwidth, height=3.5cm]{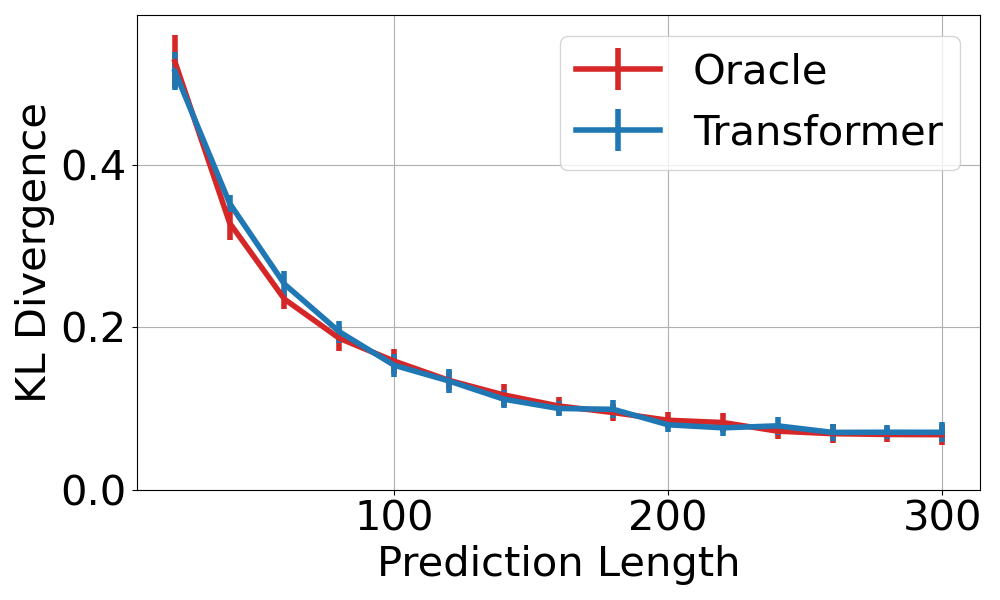}
    \end{minipage}
    \hfill
    \begin{minipage}[b]{0.325\textwidth}   
      \includegraphics[width=\textwidth, height=3.5cm]{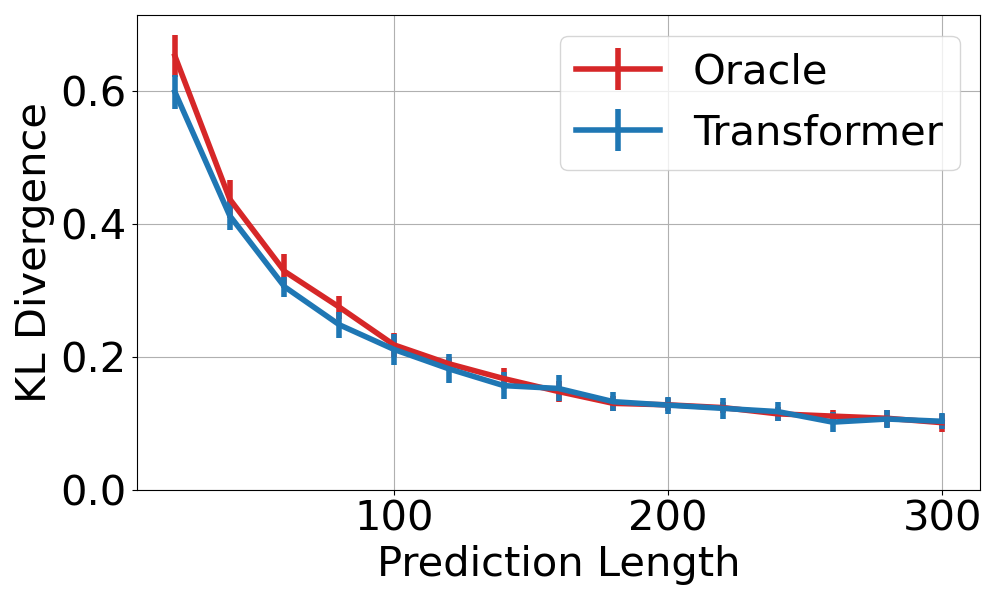}
    \end{minipage}
    \hfill
    \begin{minipage}[b]{0.325\textwidth}   
      \includegraphics[width=\textwidth, height=3.5cm]{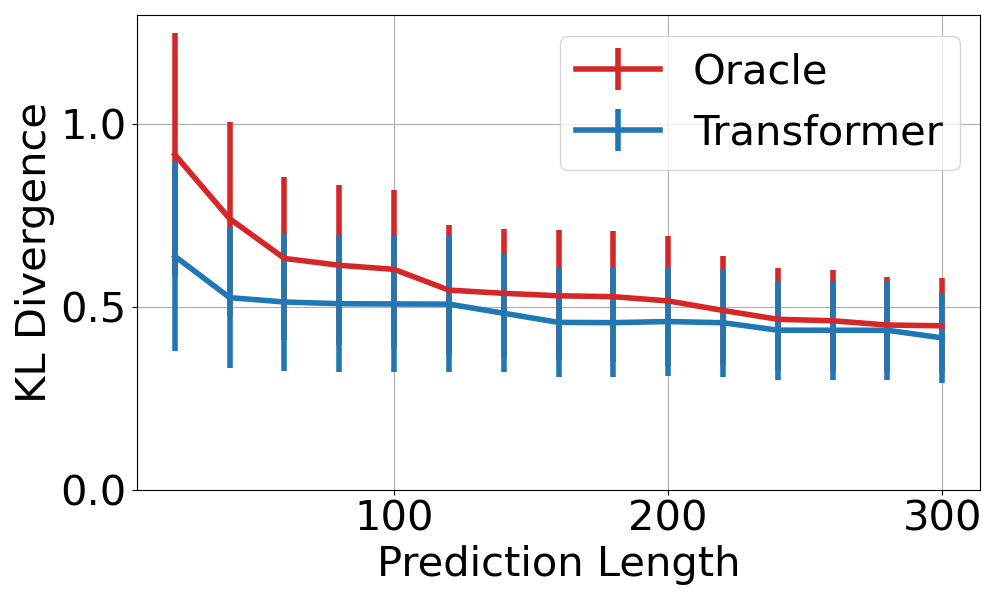}
    \end{minipage}

     \centering\textbf{$\lambda \sim \text{Uniform}(1.5,2.5)$, $\nu \sim \text{Uniform}(3,6)$, $n=100$,  $N-n=300$ }\\
    \begin{minipage}[b]{0.325\textwidth}   
      \includegraphics[width=\textwidth, height=3.5cm]{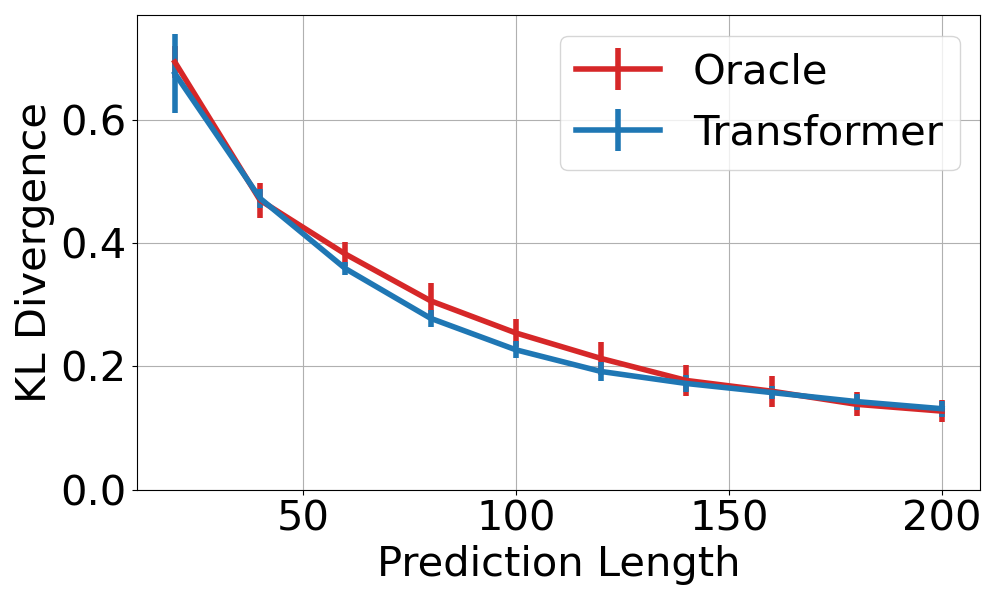}
      \centering\textbf{KL div. of inter-arrival time distribution}
    \end{minipage}
    \hfill
    \begin{minipage}[b]{0.325\textwidth}   
      \includegraphics[width=\textwidth, height=3.5cm]{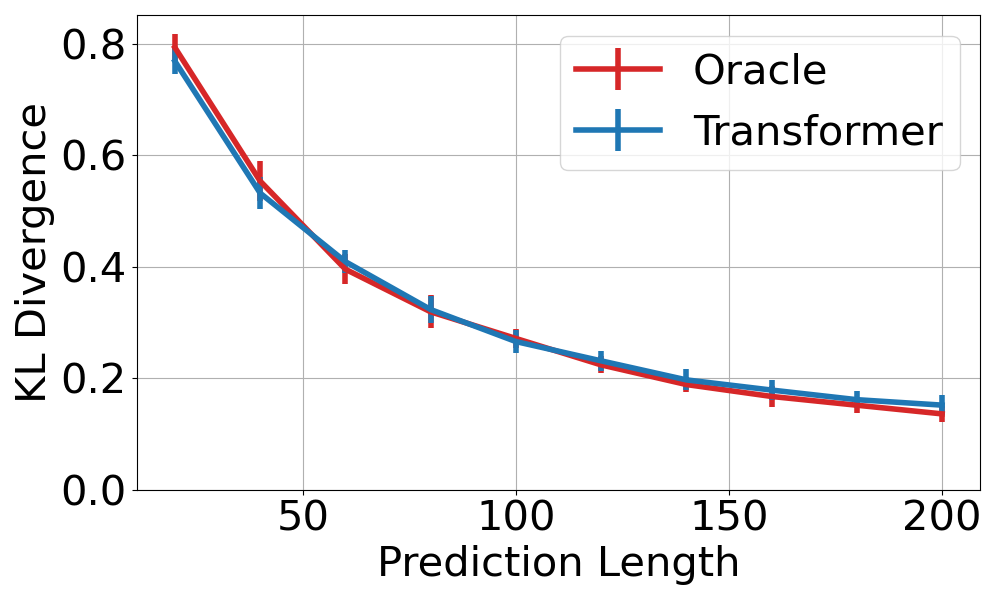}
      \centering\textbf{KL div. of service time distribution}
    \end{minipage}
    \hfill
    \begin{minipage}[b]{0.325\textwidth}   
      \includegraphics[width=\textwidth, height=3.5cm]{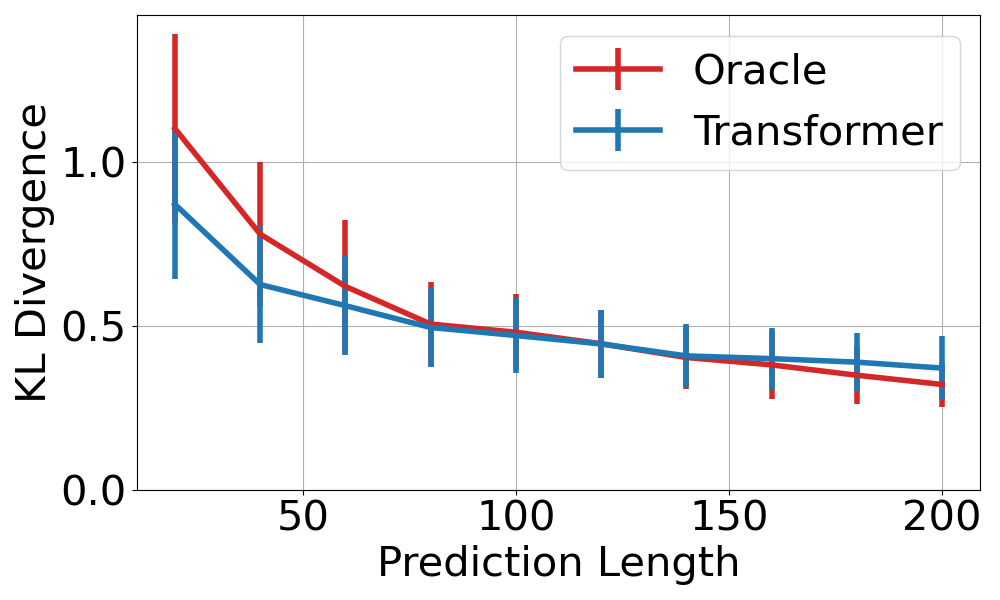}
      \centering\textbf{KL div. of waiting time distribution}
    \end{minipage}
\vspace{3mm}
    \caption{\textbf{Uncertainty Quantification for M/M/1 queue:} \textbf{Uncertainty Quantification:}  Comparison of (1) KL divergence between the Transformer (finite horizon $N$) and oracle benchmark (infinite horizon $N=\infty$) and (2) KL divergence between the oracle benchmark at finite horizon ($N$) and  infinite horizon. }
    \label{fig:M-M-1-time-distributions in Uncertainty-2-4-3-5-kl}
  \end{figure}

\subsection{Experimetal details for Simulating Counterfactuals (Section \ref{sec:simulating_counterfactuals})}
\label{sec:additional_experiments-counterfactuals}


This synthetic yet challenging scenario serves as a proxy for hospital operations, where patient arrivals fluctuate sharply over the course of a day.  We model a facility staffed by between two and twenty nurses, each working at a constant rate of 3.5 patients per hour.
Hourly arrivals follow the profile $[8,8,8,8,8,14,15,16,17,18,19,18]$ patients per hour. 

The Transformer is configured for sequences of up to 400 events, with time recorded in minutes and discretised into uniform 0.01-minute bins. Training is performed on 40,000 trajectories generated from the non-stationary arrival process described above.

\end{document}